\newtheorem{definition}{Definition}
\newtheorem{example}{Example}
\newtheorem{problem}{Problem}
\newtheorem{claim}{Claim}
\newtheorem{theorem}{Theorem}
\newtheorem{lemma}{Lemma}
\newtheorem{corollary}{Corollary}
\newtheorem{remark}{Remark}
\newcommand*{\Scale}[2][4]{\scalebox{#1}{$#2$}}%
\DeclareMathOperator*{\argmin}{arg\,min}
\newcommand{\forceindent}[1][1.5em]{\leavevmode{\parindent=#1\indent}}
\newcommand{\rowpad}[2]{\rule{0pt}{#1}\rule[#2]{0pt}{0pt}}
\newif\ifmoveproofs
\DeclareDocumentCommand{\makeproof}{ s m +m  }{
\expandafter\newcommand\csname#2\endcsname{#3}

\ifmoveproofs
\else
\IfBooleanTF{#1}
  {\vspace{-0.4cm}
   \expandafter\ \csname#2\endcsname
  }{\begin{proof}
     \expandafter\ \csname#2\endcsname
     \end{proof}
  }
\fi
}
\DeclareDocumentCommand{\appendixproof}{ s O{Theorem} m m  }{
\ifmoveproofs
\subsection{Proof of #2~\ref{#3}}\label{#4}
\IfBooleanTF{#1}
  {\vspace{-0.4cm}
   \expandafter\ \csname#4\endcsname
  }{\begin{proof}
     \expandafter\ \csname#4\endcsname
     \end{proof}
  }
\else
\fi
}
\newcommand{\appendixproofsection}[1]{
\ifmoveproofs
\section{#1}
\fi
}
\def\E{\mathbb{E}}
\def\H{\mathbf{H}}
\def\1{\mathbf{1}}
\def\P{\mathbb{P}}
\def\R{\mathbb{R}}
\title{On the Detection of Mixture Distributions with applications to the Most Biased Coin Problem}
\author{\normalsize{Kevin Jamieson} \texttt{kjamieson@eecs.berkeley.edu}\\
\normalsize {Daniel Haas} \texttt{dhaas@eecs.berkeley.edu}\\
\normalsize{Ben Recht} \texttt{brecht@eecs.berkeley.edu}\\
 \normalsize University of California, Berkeley, CA 94720 USA
}
\begin{document}

\maketitle

\begin{abstract}
This paper studies the trade-off between two different kinds of pure exploration: breadth versus depth. 
The most biased coin problem asks how many total coin flips are required to identify a ``heavy'' coin from an infinite bag containing both ``heavy'' coins with mean $\theta_1 \in (0,1)$, and ``light" coins with mean $\theta_0 \in (0,\theta_1)$, where heavy coins are drawn from the bag with probability $\alpha \in (0,1/2)$.
The key difficulty of this problem lies in distinguishing whether the two kinds of coins have very similar means, or whether heavy coins are just extremely rare.
This problem has applications in crowdsourcing, anomaly detection, and radio spectrum search.
\cite{chandrasekaran2014finding} recently introduced a solution to this problem but it required perfect knowledge of $\theta_0,\theta_1,\alpha$. 
In contrast, we derive algorithms that are adaptive to partial or absent knowledge of the problem parameters. 
Moreover, our techniques generalize beyond coins to more general instances of infinitely many armed bandit problems.
We also prove lower bounds that show our algorithm's upper bounds are tight up to $\log$ factors, and on the way characterize the sample complexity of differentiating between a single parametric distribution and a mixture of two such distributions.
As a result, these bounds have surprising implications both for solutions to the most biased coin problem and for anomaly detection when only partial information about the parameters is known.
\end{abstract}
\section{Introduction}
The trade-off between exploration and exploitation has been an ever-present trope in the online learning literature.
In contrast, this paper studies the trade-off between two different kinds of pure exploration: breadth versus depth. 
Consider a magic bag that contains an infinite number of two kinds of biased coins: ``heavy'' coins with mean $\theta_1 \in (0,1)$ and ``light'' coins with mean $\theta_0 \in (0,\theta_1)$. 
When a player picks a coin from the bag, with probability $\alpha$ the coin is ``heavy'' and with probability $(1-\alpha)$ the coin is ``light.'' 
The player can flip any coin she picks from the bag as many times as she wants, and the goal is to identify a heavy coin. 
The key difficulty of this problem lies in distinguishing whether the two kinds of coins have very similar means, or whether heavy coins are just extremely rare.
That is, how does one balance flipping an individual coin many times to better estimate its mean against considering many new coins to maximize the probability of observing a heavy one. 
It turns out that this toy problem is a useful abstraction to characterize the inherent difficulty of real-world problems including automated hiring of crowd workers for data processing tasks, anomaly and intrusion detection, and discovery of vacant frequencies in the radio spectrum.

The most biased coin problem first came to the attention of the authors of this work when it was presented at COLT 2014 \citep{chandrasekaran2014finding}.
In that work, it was shown that if $\alpha$, $\theta_1$, and $\theta_0$ were known then there exists an algorithm based on the sequential probability ratio test (SPRT) that is optimal in that it minimizes the {\em expected} number of total flips to find a ``heavy'' coin whose posterior probability of being heavy is at least $1-\delta$,  and the expected sample complexity of this algorithm was upper-bounded by
\begin{align} \label{sprt_sample_complexity}
 \frac{16}{(\theta_1-\theta_0)^2} \left( \frac{1-\alpha}{\alpha} + \log\left( \frac{ (1-\alpha)(1-\delta) }{\alpha \delta}\right) \right).
\end{align}
However, the practicality of the proposed algorithm is severely limited as it relies critically on knowing $\alpha$, $\theta_1$, and $\theta_0$ exactly. 
In addition, the algorithm requires more than one coin to be outside the bag at a time ruling out some applications.

\cite{malloy2012quickest} addressed some of the shortcomings of \cite{Chandrasekaran_arxiv} (a preprint of \cite{chandrasekaran2014finding}) by considering both an alternative SPRT procedure and a sequential thresholding procedure. Both of these proposed algorithms consider one coin at a time and never return to previous coins. However, the former requires knowledge of all relevant parameters $\alpha,\theta_0,\theta_1$, and the latter requires knowledge of $\alpha,\theta_0$. Moreover, these results are only presented for the asymptotic case where $\delta \rightarrow 0$.

In this work we propose algorithms that are adaptive to partial or even no knowledge of $\alpha,\theta_0,\theta_1$, are guaranteed to return a heavy coin with probability at least $1-\delta$, and support the setting where just one coin is allowed outside the bag at any given time. In addition, we present lower bounds that nearly match the upper bounds shown for our algorithms.

While coins are a useful analogy, all of our lower and upper bounds extend beyond Bernoulli coins to other distributions (e.g. distributions supported on the interval $[0, 1]$),
though we return to the coin analogy throughout for concreteness. 
Indeed, in pursuit of bounds for the coin problem, we derive upper and lower bounds for a related problem, the detection of mixture distributions with applications to anomaly detection.
As a concrete example of that kind of lower bound shown in this work, suppose we observe a sequence of random variables $X_1,\dots,X_n$ and consider the following hypothesis test:
\begin{problem}
\label{hyp_test_normal}
\begin{align*}
\H_0 & : \forall i \ \ X_{1},\dots,X_n \sim \mathcal{N}(\theta,\sigma^2) \quad\text{ for some $\theta \in \R$}, \\
\H_1 &:  \forall i \ \ X_{1},\dots,X_n \sim (1-\alpha) \mathcal{N}(\theta_0,\sigma^2) + \alpha \ \mathcal{N}(\theta_1,\sigma^2) 
\end{align*} 
\end{problem}
We can show that if $\theta_0,\theta_1,\alpha$ are {\em known} and $\theta = \theta_0$, then it suffices to observe just \\$\max\{1/\alpha, \frac{\sigma^2}{\alpha^2(\theta_1-\theta_0)^2}\log(1/\delta)\}$ samples to determine the correct hypothesis with probability at least $1-\delta$.
However, if $\theta_0,\theta_1,\alpha$ are {\em unknown} (and hence we cannot assume a value for $\theta$), we show that whenever $\frac{(\theta_1-\theta_0)^2}{\sigma^2} \leq 1$, at least $\max\left\{1/\alpha,  \left(\frac{\sigma^2}{\alpha(\theta_1-\theta_0)^2} \right)^2\log(1/\delta) \right\}$ samples in expectation are {\em necessary} to determine the correct hypothesis with probability at least $1-\delta$ (see Appendix~\ref{Gaussian_discussion}).
The unknown parameter case has a simple interpretation for anomaly detection with a fixed mixing component $\alpha$ that gets at the key insights of this work:
if the anomalous distribution is well separated from the null distribution, then detecting an anomalous component is only about as hard as observing just one anomalous sample (i.e. $1/\alpha$---no harder than if the parameters were known) since detection is nearly certain between well-separated distributions.
However, when the two distributions are {\em not} well separated then the sample complexity to detect an anomaly scales like the inverse of the KL divergence {\em squared}! 

In this work, we formally prove the above observations as special cases of more general statements about detecting mixtures. Our main contributions are the following:
\begin{itemize}

\item We characterize the difficulty of distinguishing between a single-parameter distribution and a mixture of two such distributions.
When the parameters are known, detecting the presence of a mixture requires a sample complexity that scales as the expected number of samples to differentiate between the two distributions if given samples from each (i.e. the inverse KL divergence).
However, when the distribution parameters are unknown, we prove lower bounds showing that detecting a mixture is quadratically harder if the distributions are not well-separated.
We then show that this bound applies to any algorithm that solves the most biased coin problem by flipping each coin a fixed number of times \citep[as in][]{malloy2012quickest}.


\item We propose and analyze the sample complexity of several algorithms for the most biased coin problem that are adaptive to partial or no knowledge of the distribution parameters, all of which come within log factors of the information-theoretic lower bound (see Table~\ref{tab:upper-bounds}).
These algorithms actually detect any heavy distribution supported on $[0,1]$, not just Bernoulli coins, and solve a particular instance of the infinite armed bandit problem.
We believe both that our algorithms are the first fully adaptive solution to the most biased coin problem, and that the same approach can be reworked to solve more general instances of the infinite-armed bandit problem in the important case when the arm mean distributions are not fully known.


\end{itemize}

\subsection{Motivation and Related Work}
Data labeling for machine learning applications is often performed by humans, and recent work in the crowdsourcing literature accelerates labeling by organizing workers into pools of labelers and paying them to wait for incoming data~\citep{bernstein2011crowds, haas2015clamshell}.
Because workers hired on marketplaces such as  Amazon's Mechanical Turk~\citep{mturk} vary widely in skill, identifying high-quality workers is an important challenge.
If we model each worker's performance (e.g. accuracy or speed) on a set of tasks as drawn from some distribution on $[0,1]$, then selecting a good worker is equivalent to identifying a worker with a high mean by taking as few total samples as possible from all workers. 
Note that we do not observe a worker's inherent skill or mean directly, we must give them tasks from which we estimate it (like repeatedly flipping a biased coin). 
That is, the identification of good workers is well-modeled by the most biased coin problem. 

One can interpret the most biased coin problem as an infinite armed bandit problem where each coin is an arm. In that setting, \cite{berry1997bandit},~\cite{wang2008algorithms} and~\cite{bonald2013two} prove and refine bounds on the expected cumulative regret of the player, whereas~\cite{carpentier2015simple} focus on the pure exploration setting.
All of this work relies on the assumption that the distribution of the means is parametric and known (though \cite{carpentier2015simple} describes a method to estimate the relevant parameters first). Our setting relies on a different parameterization of the means (i.e. $(1-\alpha) \delta_{\theta_0} + \alpha \delta_{\theta_1}$ where $\delta_x$ is a Dirac delta located at $x$), and we focus on settings in which the relevant parameters are unknown. 

Our lower bounds are based on the detection of the presence of a mixture of two parametric distributions versus just a single distribution of the same family.
There has been extensive work in the estimation of mixture distributions \citep{hardt2014sharp,freund1999estimating}.
This literature usually assumes that the mixture coefficient $\alpha$ is bounded away from $0$ and $1$ to ensure that a sufficient amount of samples are observed from each distribution in the mixture. In contrast, we highlight the challenging regime when  $\alpha$ is arbitrarily small, as is the case in statistical anomaly detection~\citep{eskin2000anomaly,thatte2011parametric,agarwal2006detecting}. The current work differs primarily in that we are in an online setting where we choose to keep sampling or stop, and for the coin problem we must decide how many times to flip each coin, not just a stopping time.

\subsection{Preliminaries}
Let $P$ and $Q$ be two probability distributions with a common measurable space. For simplicity, assume $P$ and $Q$ have the same support.

\begin{definition}
Define the {\em KL Divergence} between $P$ and $Q$ as $KL(P,Q) = \int \log\left(\frac{dP}{dQ} \right) dP$.
\end{definition}

\begin{definition}
Define the {\em $\chi^2$ Divergence} between $P$ and $Q$ as $\chi^2(P,Q) = \int \left(\frac{dP}{dQ} -1 \right)^2 dQ = \int \frac{(dP(x)-dQ(x))^2}{dQ(x)} dx$.
\end{definition}
Note that by Jensen's inequality
\begin{align} \label{KL_chiSq}
KL(P,Q) = \E_P\left[  \log\left(\frac{dP}{dQ} \right) \right] \leq \log\left(\E_P\left[  \frac{dP}{dQ} \right] \right)  = \log\left( \chi^2(P,Q) + 1 \right) \leq \chi^2(P,Q).
\end{align}

\begin{example}[Gaussian]
Let $P = \mathcal{N}(\theta_1,\sigma^2)$ and $Q = \mathcal{N}(\theta_0,\sigma^2)$. Then 
\begin{align*}
KL(P,Q)  = \tfrac{(\theta_1-\theta_0)^2}{2\sigma^2} \quad \text{ and } \quad \chi^2(P,Q) = e^{\frac{(\theta_1-\theta_0)^2}{\sigma^2}}-1.
\end{align*}
\end{example}

\begin{example}[Bernoulli]
Let $P = \text{Bernoulli}(\theta_1)$ and $Q = \text{Bernoulli}(\theta_0)$. Then 
\begin{align*}
\textstyle KL(P,Q)  &= \theta_1 \log(\tfrac{\theta_1}{\theta_0}) + (1-\theta_1) \log( \tfrac{1-\theta_1}{1-\theta_0}), \quad  \text{ and } \quad \chi^2(P,Q) = \tfrac{(\theta_1-\theta_0)^2}{\theta_0(1-\theta_0)}.\\
&\textstyle\leq \frac{ (\theta_1 - \theta_0)^2/2}{ \theta_0 (1-\theta_0) - [ (\theta_1-\theta_0)(2\theta_0-1) ]_+}
\end{align*}
\end{example}

\subsection{The Most Biased Coin Problem Statement}
Let $\theta \in \Theta$ index a family of single-parameter probability density functions $g_\theta$ and fix $\theta_0,\theta_1 \in \Theta$, $\alpha \in [0,1/2]$. For any $\theta \in \Theta$ assume that $g_\theta$ is known to the procedure. Consider a sequence of iid Bernoulli random variables $\xi_i \in \{0,1\}$ for $i=1,2,\dots$ where each $\P( \xi_i = 1 ) = 1-\P( \xi_i = 0) = \alpha$. Let $X_{i,j}$ for $j=1,2,\dots$ be a sequence of random variables drawn from $g_{\theta_1}$ if $\xi_i = 1$ and $g_{\theta_0}$ otherwise, and let $\{ \{ X_{i,j} \}_{j=1}^{M_i} \}_{i=1}^N$ represent the sampling history generated by a procedure for some $N \in \mathbb{N}$ and $(M_1,\dots,M_N) \in \mathbb{N}^N$. For any procedure, let $N(\alpha,\theta_0,\theta_1)$ be the random variable denoting the number of distributions each sampled $M_i(\alpha,\theta_0,\theta_1)$ times for all $i$ when the procedure is applied to the problem defined by fixed $(\alpha,\theta_0,\theta_1)$. 
\begin{definition}
We say a procedure is {\em $\delta$-probably correct} if for all $(\alpha,\theta_0,\theta_1)$ it identifies a ``heavy'' distribution with probability at least $1-\delta$.
\end{definition}
For all procedures that are $\delta$-probably correct and follow Algorithm~\ref{alg:most-biased-coin}, our goal is to provide lower and upper bounds on the quantity $\E[ T(\alpha,\theta_0,\theta_1)] = \E[ \sum_{i=1}^{N(\alpha,\theta_0,\theta_1)} M_i(\alpha,\theta_0,\theta_1) ]$ for any $(\alpha, \theta_0, \theta_1)$.
Note that if $g_\theta = \text{Bernoulli}(\theta)$, then $\E[ T(\alpha,\theta_0,\theta_1)]$ is equivalent to the expected number of total coin flips needed to find a most biased coin. To emphasize this, our results are stated generally, then tied to the special case of Bernoulli coins by way of corollaries. All proofs appear in the appendix.

\begin{algorithm}
\begin{framed}
\textbf{Initialize} an empty history ($N = 0, M=\{\}$).\\
\textbf{Repeat} until heavy distribution declared:\\
\forceindent \textbf{Choose} one of
\begin{enumerate}[leftmargin=4em, topsep=0pt, itemsep=-0.1cm]
\item obtain an additional sample from distribution $i = N$ so that $M_i \leftarrow M_i +1$
\item draw a sample from the $(N+1)$st distribution so that $N \leftarrow N+1$, $M_{N} = 1$
\item declare distribution $i = N$ as heavy
\end{enumerate}
\end{framed}
\caption{Sequential procedure for identifying a heavy distribution. Only the last distribution drawn may be sampled or declared heavy, enforcing the rule that only one coin may be outside the bag at a time.}
\label{alg:most-biased-coin}
\end{algorithm}

\section{Lower bounds}

In this section, we derive lower bounds on the sample complexity of valid procedures. 
Section~\ref{subsec:lower_adaptive} provides a lower bound for any \textit{adaptive} procedure that may choose how many times to sample from each distribution independently, and Section~\ref{subsec:lower_fixed} derives bounds for \textit{fixed sample size} procedures that select an $m \geq 1$ and sample from each distribution exactly $m$ times.
The results in Section~\ref{subsubsec:lower_known_static} apply to procedures with full knowledge of $\alpha, \theta_0, \theta_1$, and Section~\ref{subsubsec:lower_unknown_static} demonstrates that without knowledge of these parameters, the sample complexity becomes much higher.

\subsection{Fully adaptive strategies }
\label{subsec:lower_adaptive}

The following theorem, reproduced from \cite{malloy2012quickest}, describes the sample complexity of any $\delta$-probably correct algorithm for the most biased coin identification problem. Note that this lower bound holds for any procedure, regardless of how adaptive it is or if it returns to previously seen distributions to draw additional samples. 

\begin{theorem} \cite[Theorem 2]{malloy2012quickest}\label{lower_adaptive_known}
Fix $\delta \in (0,1)$. Let $T$ be the total number of samples taken of any procedure that is $\delta$-probably correct in identifying a heavy distribution. Then
\begin{align*}
\E[ T ] \geq c_1 \max\left\{ \frac{1-\delta}{\alpha}, \frac{ (1-\delta)}{\alpha KL(g_{\theta_0} | g_{\theta_1} )} \right\}
\end{align*}
whenever $\alpha \leq c_2 \delta$ where $c_1 ,c_2 \in (0,1)$ are absolute constants.
\end{theorem}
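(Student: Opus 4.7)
I would prove the two terms in the maximum separately, each by a standard change-of-measure / stopping-time argument. The $(1-\delta)/\alpha$ term is a pure breadth (counting) lower bound: a $\delta$-probably-correct procedure must witness at least one heavy distribution before stopping, and heavy distributions arrive with probability $\alpha$. The $(1-\delta)/(\alpha\, KL(g_{\theta_0},g_{\theta_1}))$ term is a depth (information-theoretic) lower bound obtained by comparing the true law $\mathbb{P}_\alpha$ against the null law $\mathbb{P}_0$ in which every coin is light (i.e.\ $\alpha=0$), then applying Pinsker/Bretagnolle--Huber together with a Wald-style chain rule for KL divergence.

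\textbf{Breadth bound.} Let $K$ be the (random) index of the first heavy distribution encountered. Since coin types are i.i.d.\ Bernoulli$(\alpha)$ and independent of any decision the algorithm makes based on observed samples, $K$ is geometric with $\E[K]=1/\alpha$. On the event $\{N<K\}$ the procedure stops before it has ever seen a heavy coin and therefore declares a light coin, which is incorrect; hence $\P(N \geq K) \geq 1-\delta$. A direct summation $\E[N] \geq \sum_{n\geq 1}\P(N\geq n) \geq \sum_{n\geq 1}\bigl(\P(K \geq n) - \P(N<K)\bigr)_+$ together with $T\geq N$ yields $\E[T]\geq c\,(1-\delta)/\alpha$ for an absolute constant $c$. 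This is the easy half.

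\textbf{Depth bound.} This is the crux. Let $\mathbb{P}_\alpha$ be the joint law of the full sampling history under the true mixture and $\mathbb{P}_0$ be the law under $\alpha=0$. Under $\mathbb{P}_\alpha$ the algorithm identifies a heavy distribution with probability $\geq 1-\delta$; under $\mathbb{P}_0$ that event has probability $0$. The data-processing inequality (applied to the binary test ``declared coin is heavy'') gives $KL(\mathbb{P}_\alpha,\mathbb{P}_0) \gtrsim 1-\delta$ in the regime $\alpha \leq c_2\delta$ (where TV is bounded away from $0$). The chain rule for KL along the adaptive sampling filtration, together with the fact that the samples from distinct coins are independent conditional on the coin types, yields
\begin{align*}
KL(\mathbb{P}_\alpha,\mathbb{P}_0)
\;=\;\E_{\mathbb{P}_\alpha}\!\left[\sum_{i=1}^{N} KL\!\left((1-\alpha)g_{\theta_0}^{\otimes M_i}+\alpha g_{\theta_1}^{\otimes M_i},\; g_{\theta_0}^{\otimes M_i}\right)\right].
\end{align*}
Joint convexity of KL bounds each summand by $\alpha\, KL\!\bigl(g_{\theta_1}^{\otimes M_i},g_{\theta_0}^{\otimes M_i}\bigr)=\alpha M_i\, KL(g_{\theta_1},g_{\theta_0})$, so $KL(\mathbb{P}_\alpha,\mathbb{P}_0) \leq \alpha\,KL(g_{\theta_1},g_{\theta_0})\,\E_{\mathbb{P}_\alpha}[T]$. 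Rearranging gives $\E[T] \gtrsim (1-\delta)/(\alpha\,KL(g_{\theta_1},g_{\theta_0}))$, and taking the maximum with the breadth bound completes the proof.

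\textbf{Main obstacles.} Two technical points deserve care. First, the chain-rule identity above is really a Wald-/Ionescu-Tulcea-type statement for a stopping rule $(N,M_1,\dots,M_N)$ whose values depend on past data; one must verify measurability under both laws and justify exchanging the expectation with the sum of per-coin KL contributions. Second, the stated bound uses $KL(g_{\theta_0},g_{\theta_1})$ while the convexity step naturally produces $KL(g_{\theta_1},g_{\theta_0})$; the discrepancy is resolved either by symmetry up to constants (which is harmless in the $\alpha\leq c_2\delta$ regime and gets absorbed into $c_1$) or by running the same argument with the roles of $\mathbb{P}_\alpha$ and $\mathbb{P}_0$ swapped inside Pinsker's inequality.
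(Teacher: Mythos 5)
First, a point of reference: the paper does not prove this statement at all --- it is quoted from Malloy et al.\ (their Theorem~2) --- so the only in-paper material to compare against is the analogous machinery used for Theorem~\ref{fixed_known}, namely Claim~\ref{correctness_claim} together with Siegmund's Theorem~2.38. Your breadth bound is correct and matches that machinery. Your depth bound, however, has gaps. The first is the data-processing step: the event ``the declared coin is heavy'' is a function of the latent labels $\xi_i$, not of the observations, so it does not belong to the $\sigma$-algebra on which $KL(\mathbb{P}_\alpha,\mathbb{P}_0)$ --- the divergence between the \emph{observation} laws, which is what your chain rule computes --- is defined. Taken literally, your step yields $d(1-\delta,0)=+\infty$ against a finite $KL$, a contradiction. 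The standard repair, and the one the paper itself uses for Theorem~\ref{fixed_known}, is to test on the observable event $\{N<\infty\}$ and invoke Claim~\ref{correctness_claim} to get $\mathbb{P}_0(N<\infty)\le\delta$, whence $KL(\mathbb{P}_\alpha,\mathbb{P}_0)\ge d(1-\delta,\delta)$; this costs nothing for $\delta$ bounded away from $1/2$.

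The more serious problem is the inequality $KL(\mathbb{P}_\alpha,\mathbb{P}_0)\le\alpha\,KL(g_{\theta_1},g_{\theta_0})\,\mathbb{E}_{\mathbb{P}_\alpha}[T]$. Joint convexity delivers this when each $M_i$ is fixed in advance, but the theorem covers fully adaptive $M_i$, and there the factor $\alpha$ is lost: the per-coin contribution is $\mathbb{E}[\log((1-\alpha)+\alpha L_{i,M_i})]$ with $L$ the within-coin likelihood ratio, and bounding it via Jensen and optional stopping charges (up to additive $O(\alpha)$ terms) only $\alpha\,KL(g_{\theta_1},g_{\theta_0})\,\mathbb{E}[M_i\mid\xi_i=1]$, i.e.\ only the samples spent on \emph{heavy} coins. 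A good adaptive procedure (e.g.\ the SPRT) spends far more samples per heavy coin than per light coin, so $\alpha\sum_i\mathbb{E}[M_i\mid\xi_i=1]$ can be smaller than $\mathbb{E}[T]$ by roughly a factor $\alpha\log(1/\delta)$; your argument then shows only that $\Omega(1/KL)$ samples must be spent on heavy coins, not the claimed $\Omega(1/(\alpha KL))$ total. Relatedly, your convexity step produces $KL(g_{\theta_1}\,|\,g_{\theta_0})$, which for general $g_\theta$ is \emph{not} within constants of the stated $KL(g_{\theta_0}\,|\,g_{\theta_1})$, and swapping the roles of $\mathbb{P}_\alpha$ and $\mathbb{P}_0$ moves the expectation to $\mathbb{E}_{\mathbb{P}_0}[T]$, which is not the quantity being bounded. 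The argument in Malloy et al.\ sidesteps both issues with a \emph{per-coin} change of measure: if the procedure dismisses a light coin using $\tau$ samples, it would dismiss a heavy coin in its place with probability at least roughly $e^{-\mathbb{E}_{g_{\theta_0}}[\tau]\,KL(g_{\theta_0}|g_{\theta_1})}$, and reliability across the $\Omega((1-\delta)/\alpha)$ coins examined forces $\mathbb{E}_{g_{\theta_0}}[\tau]\gtrsim 1/KL(g_{\theta_0}|g_{\theta_1})$ per light coin; that is where both the extra $1/\alpha$ and the stated direction of the divergence come from.
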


The above theorem is directly applicable to the special case where $g_\theta$ is a Bernoulli distribution, implying a lower bound of $\max\left\{ \frac{1-\delta}{\alpha}, \frac{2\min\{\theta_0(1 - \theta_0), \theta_1(1-\theta_1)\}}{\alpha(\theta_1 - \theta_0)^2}\right\}$ on the most biased coin problem. Our upper bounds for adaptive procedures presented later should be compared to this result.

\subsection{The fixed sample size strategy and the detection of mixtures}

The lower bounds of this section are based on two simple observations. The first observation is that identifying that a specific distribution $i \leq N$ is heavy (i.e. $\xi_i = 1$) is at least as hard as detecting that \textit{any} of the distributions up to time $N$ is heavy. Thus, a lower bound on $\E[T(\alpha,\theta_0,\theta_1)]$ for this strictly easier detection problem is also a lower bound for the identification problem. Thus, we've reduced the problem to a sequential hypothesis test of whether all the observed samples all came from a single distribution or from a mixture of two distributions:

\begin{problem}
\label{hyp_test}
\begin{align*}
\H_0 & : \forall i,j  \ \ X_{i,j} \sim g_\theta \quad\text{ for some $\theta \in \widetilde{\Theta} \subseteq \Theta$}, \\
\H_1 &:  \forall i \ \ \xi_i \sim \text{Bernoulli}(\alpha) , \quad \forall i,j  \ \ X_{i,j}    \sim \begin{cases}  g_{\theta_{0}} & \text{ if } \xi_i = 0 \\  g_{\theta_{1}} & \text{ if } \xi_i = 1  \end{cases}
\end{align*} 
\end{problem}
If $\theta_0$ and $\theta_1$ are close to each other, or if $\alpha$ is very small, or both, it can be very difficult to decide between $\H_0$ and $\H_1$ even if $\alpha, \theta_0, \theta_1$ are known a priori.
Note that if $\widetilde{\Theta} = \{\theta_0\}$ and the parameters are known, any lower bound on the problem also bounds the most biased coin problem with known $\alpha, \theta_0, \theta_1$.
In what follows, for any event $A$, let $\P_i(A)$ and $\E_i[A]$ denote probability and expectation of $A$ under hypothesis $\H_i$ for $i \in \{0,1\}$ (the specific value of $\theta$ in $\H_0$ will be clear from context).

The second observation is characterized in the following claim:
\begin{claim}\label{correctness_claim}
Any procedure that is $\delta$-probably correct also satisfies $\P( N(0,\theta_0,\theta_1) < \infty ) \leq \delta$ for all $\theta_0 < \theta_1$.
\end{claim}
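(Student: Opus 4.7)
The plan is a change-of-measure argument between $P_\alpha$ and $P_0$ combined with a limit $\alpha\to 0^+$. Fix $\theta_0<\theta_1$ and let $P_\alpha$ denote the joint law of the sequence $(\xi_i, X_{i,j})$ and the procedure's history when it is run on instance $(\alpha,\theta_0,\theta_1)$. For any $\alpha>0$, the $\delta$-probably-correct hypothesis says the probability of terminating and declaring a distribution that turns out to satisfy $\xi=0$ is at most $\delta$. Since Algorithm~\ref{alg:most-biased-coin} only permits the \emph{current} distribution ($i=N$) to be declared heavy, this reads
\[
P_\alpha\big(N<\infty,\; \xi_N = 0\big) \le \delta\quad\text{for every }\alpha>0.
\]

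The key observation is that the procedure's actions are measurable with respect to the observed samples $X_{i,j}$ alone and do not depend on the latent labels $\xi_i$. Consequently, conditioning on $\{\xi_1=\cdots=\xi_n=0\}$ under $P_\alpha$, every sample from the first $n$ distributions is i.i.d.\ from $g_{\theta_0}$, exactly as under $P_0$. Moreover, the event $\{N=n\}$ depends only on the samples from the first $n$ distributions, since no further samples have been drawn once the procedure stops. Combining these observations, for each finite $n$,
\[
P_\alpha\big(N=n,\; \xi_1=\cdots=\xi_n=0\big) = (1-\alpha)^n\, P_0(N=n).
\]
Because $\{N=n,\;\xi_1=\cdots=\xi_n=0\}\subseteq\{N<\infty,\;\xi_N=0\}$, summing this equality over $n$ and comparing to the previous display yields $\sum_{n=1}^\infty (1-\alpha)^n P_0(N=n) \le \delta$.

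Letting $\alpha\to 0^+$ and invoking monotone convergence then gives $P_0(N<\infty)=\sum_n P_0(N=n)\le\delta$, which is the claim. The main subtlety worth verifying carefully is the second display: one must check both that $\{N=n\}$ cannot depend on $\xi_{n+1},\xi_{n+2},\ldots$ (no associated samples have been drawn) and that it depends on $\xi_1,\ldots,\xi_n$ only through the observed samples, so the joint probability factorizes as $(1-\alpha)^n$ times $P_0(N=n)$. Everything after that factorization is a limiting argument.
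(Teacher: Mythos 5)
Your proof is correct and rests on the same key identity as the paper's: on the event $\{\xi_1=\cdots=\xi_n=0\}$ the procedure behaves exactly as it would under $\alpha=0$, giving $\P_\alpha(N=n,\ \xi_1=\cdots=\xi_n=0)=(1-\alpha)^n\,\P_0(N=n)$. The paper runs this same argument in contrapositive form, fixing a finite $\hat n$ with $\P_0(N\le\hat n)>\delta$ and choosing $\hat\alpha$ small enough that $(1-\hat\alpha)^{\hat n}\ge 1-\epsilon$, whereas you sum over all $n$ and let $\alpha\to 0^+$ with monotone convergence; the two executions are interchangeable.
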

\makeproof{CorrectnessClaimProof}{
Suppose there exists a $\delta$-probably correct procedure with $\P(N(0,\theta_0,\theta_1)<\infty) > \delta$. Then there exists a finite $\hat{n} \in \mathbb{N}$ such that $\P(N(0,\theta_0,\theta_1) \leq  \hat{n}) > \delta$. For some $\epsilon \in (0,1)$ to be defined later, define $\hat{\alpha} = \frac{\log(\tfrac{1}{1-\epsilon})}{2 \hat{n}}$ and note that for this $\hat{\alpha}$, $\P( \bigcap_{i=1}^{ \hat{n}} \{ \xi_i = 0\} ) = (1-\hat\alpha)^{\hat{n}} \geq e^{-2 \hat{n} \hat{\alpha} } \geq 1-\epsilon$. Thus, the probability that the procedure terminates with a light distribution under $\alpha = \hat\alpha$ is at least 
\begin{align*}
\P(N(\hat\alpha,\theta_0,\theta_1) \leq \hat{n}, \cap_{i=1}^{ \hat{n}} \{ \xi_i = 0\} ) &= \P(N(\hat\alpha,\theta_0,\theta_1) \leq \hat{n} | \cap_{i=1}^{ \hat{n}} \{ \xi_i = 0\} ) \P(\cap_{i=1}^{ \hat{n}} \{ \xi_i = 0\}) \\
&= \P(N(0,\theta_0,\theta_1) \leq \hat{n} ) \P(\cap_{i=1}^{ \hat{n}} \{ \xi_i = 0\}) > \delta (1-\epsilon).
\end{align*}
Because we can make $\epsilon$ arbitrarily small, the above display implies that the procedure makes a mistake with probability at least $\delta$, but this is a contradiction as the procedure is $\delta$-probably correct. 
}

Claim~\ref{correctness_claim} allows us to restrict our analysis of Problem~\ref{hyp_test} to procedures that in addition to deciding the hypothesis test, satisfy $\P_0( N < \infty ) \leq \delta$.
This property is instrumental in our ability to prove tight bounds on the sample complexity of the procedures.

\label{subsec:lower_fixed}

The fixed sample size strategy fixes an $m \in \mathbb{N}$ prior to starting the game and samples each distribution exactly $m$ times, i.e. $M_i = m$ for all $i \leq N$. To simplify notation let $f_\theta = g_\theta \otimes \dots \otimes g_\theta$ be the $m$-wise product distribution for any $\theta \in \Theta$. Now our problem is more succinctly described as:
\begin{problem}
\label{hyp_test2}
\begin{align*}
\H_0 & : \forall i \ \ X_{i} \sim f_\theta \quad\text{ for some $\theta \in \widetilde{\Theta} \subseteq \Theta$}, \\
\H_1 &:  \forall i \ \ \xi_i \sim \text{Bernoulli}(\alpha) , \quad \forall i  \ \ X_{i}    \sim \begin{cases}  f_{\theta_{0}} & \text{ if } \xi_i = 0 \\  f_{\theta_{1}} & \text{ if } \xi_i = 1  \end{cases}
\end{align*} 
\end{problem}
In the special case where $g_\theta$ is a Bernoulli distribution, $f_\theta$ can be represented by a Binomial distribution with parameters $(m,\theta)$.

\subsubsection{Sample complexity when parameters are known}
\label{subsubsec:lower_known_static}
Theorem~\ref{fixed_known} characterizes the sample complexity of Problem~\ref{hyp_test2} for any valid procedure.
Note that when $\widetilde{\Theta} = \{\theta_0\}$ and $\theta_0$, $\theta_1$, and $\alpha$ are known, then lower bounding the problem also bounds any fixed sample size procedure that solves the most biased coin problem.

 \begin{theorem} \label{fixed_known}
 Fix $\delta \in (0,1)$. Consider the hypothesis test of Problem~\ref{hyp_test2} for any fixed $\theta \in \widetilde{\Theta} \subseteq \Theta$. Let $N$ be the random number of distributions considered before stopping and declaring a hypothesis. If a procedure satisfies $\P_0( N < \infty ) \leq \delta$ and $\P_1( \cup_{i=1}^N \{ \xi_i = 1\} ) \geq 1-\delta$, then 
  \begin{align*}
  \E_1[ N ] &\geq \max\left\{ \frac{1-\delta}{\alpha} , \frac{\log( \tfrac{1}{\delta}) }{ KL(  \P_1 | \P_0 )} \right\} \geq \max\left\{ \frac{1-\delta}{\alpha} , \frac{\log( \tfrac{1}{\delta}) }{ \chi^2(  \P_1 | \P_0 )} \right\} .
  \end{align*}
 In addition, if $\widetilde{\Theta} = \{\theta_0\}$ then
 \begin{align*}
  \E_1[ N ] &\geq \max\left\{ \frac{1-\delta}{\alpha} , \frac{\log( \tfrac{1}{\delta}) }{ \alpha^2 \chi^2(  f_{\theta_1} | f_{\theta_0} )} \right\} .
  \end{align*}
\end{theorem}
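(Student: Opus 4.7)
The plan is to treat the three inequalities in the statement separately, since each rests on a distinct ingredient. Under $\H_1$ each $X_i$ is drawn from the mixture $\P_1 = (1-\alpha)f_{\theta_0} + \alpha f_{\theta_1}$, while under $\H_0$ it is drawn from $\P_0 = f_\theta$ for some $\theta \in \widetilde{\Theta}$. Crucially the procedure's stopping time $N$ is measurable with respect to the observed $X_i$'s only, while the $\xi_i$'s are latent, so $\xi_i$ is independent of the event $\{N \geq i\}$.

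For the $(1-\delta)/\alpha$ bound, first observe $\bigcup_{i=1}^N\{\xi_i=1\} \subseteq \{N<\infty\}$ and by hypothesis $\P_1(\bigcup_{i=1}^N\{\xi_i=1\}) \geq 1-\delta$. Since $\xi_i$ is independent of $\{N \geq i\}$, a Wald-type identity gives $\E_1\!\left[\sum_{i=1}^N \1\{\xi_i=1\}\right] = \alpha\,\E_1[N]$. Combining with a union bound, $1-\delta \leq \P_1(\bigcup_i\{\xi_i=1\}) \leq \E_1\!\left[\sum_{i=1}^N \1\{\xi_i=1\}\right] = \alpha\,\E_1[N]$, which yields the first bound (and is trivial if $\E_1[N]=\infty$).

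For the $\log(1/\delta)/KL$ bound, I would use change of measure combined with the data processing inequality. Let $A = \{N<\infty\}$; by assumption $\P_0(A)\leq\delta$, and $\P_1(A)\geq\P_1(\bigcup_i\{\xi_i=1\})\geq 1-\delta$. Wald's identity applied to the i.i.d.\ log-likelihood ratios $\log(d\P_1/d\P_0)(X_i)$, whose $\P_1$-mean equals $KL(\P_1|\P_0)$, shows that the expected log-likelihood ratio at the stopping time equals $\E_1[N]\cdot KL(\P_1|\P_0)$. Data processing then bounds this below by the binary KL $d_{KL}(\P_1(A),\P_0(A)) \geq d_{KL}(1-\delta,\delta)$, and a standard manipulation of the binary KL yields $d_{KL}(1-\delta,\delta) \geq \log(1/\delta) - \log 2$ (or $\log(1/\delta)$ up to absorbable constants). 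Rearranging and then applying $KL \leq \log(1+\chi^2) \leq \chi^2$ from \eqref{KL_chiSq} produces the stated $\chi^2$ form.

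For the sharpened inequality under $\widetilde{\Theta} = \{\theta_0\}$, it suffices to compute $\chi^2(\P_1|\P_0)$ when $\P_0 = f_{\theta_0}$. Since $\P_1 - \P_0 = \alpha(f_{\theta_1} - f_{\theta_0})$, we have $\chi^2(\P_1|\P_0) = \int \alpha^2 (f_{\theta_1}-f_{\theta_0})^2 / f_{\theta_0}\,d\mu = \alpha^2\, \chi^2(f_{\theta_1}|f_{\theta_0})$; substituting into the previous inequality completes the last line. The main obstacle I anticipate is the careful handling of Wald's identity and data processing when $N$ may be unbounded: the cleanest route is to truncate at $n$, apply both identities to the bounded stopping time $N \wedge n$, and pass $n \to \infty$ using dominated / monotone convergence together with the assumed $\P_1(N<\infty)\geq 1-\delta$. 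The binary KL manipulation $d_{KL}(1-\delta,\delta) \gtrsim \log(1/\delta)$ is secondary but also deserves a sentence.
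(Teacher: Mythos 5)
Your overall architecture is sound and the first and third inequalities match what the paper does in substance: the paper also gets the $\tfrac{1-\delta}{\alpha}$ term from the independence of $\xi_i$ and the past (though it argues via $\P_1(N\geq n)\geq(1-\delta)(1-\alpha)^{n-1}$ and sums the geometric series rather than via your Wald identity on $\sum_{i\le N}\1\{\xi_i=1\}$ plus a union bound --- both are valid and rest on the same independence fact), and the $\widetilde\Theta=\{\theta_0\}$ step is the identical one-line computation $\chi^2((1-\alpha)f_{\theta_0}+\alpha f_{\theta_1}\,|\,f_{\theta_0})=\alpha^2\chi^2(f_{\theta_1}|f_{\theta_0})$.

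The real divergence is in the middle inequality. The paper simply invokes Theorem~2.38 of Siegmund, which delivers $\E_1[N]\,KL(\P_1|\P_0)\geq\log\bigl(1/\P_0(N<\infty)\bigr)\geq\log(1/\delta)$ directly (this is essentially Wald's likelihood-ratio identity plus Jensen applied to the event $\{N<\infty\}$, using only the hypothesis $\P_0(N<\infty)\le\delta$). Your route via the data-processing inequality to the binary KL gives $\E_1[N]\,KL(\P_1|\P_0)\geq d_{KL}(\P_1(A),\P_0(A))\geq d_{KL}(1-\delta,\delta)$, which is conceptually fine (and your truncation plan for unbounded $N$ is the right repair), but it does not recover the exact constant in the statement: the bound $d_{KL}(1-\delta,\delta)\geq\log(1/\delta)-\log 2$ you assert is actually false for, e.g., $\delta=0.4$ (where $d_{KL}(0.6,0.4)\approx 0.08$ while $\log(1/\delta)-\log 2\approx 0.22$); the correct standard estimate is $d_{KL}(p,q)\geq p\log(1/q)-\log 2$, giving $(1-\delta)\log(1/\delta)-\log 2$. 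So your argument proves the theorem only up to an additive constant and a factor $(1-\delta)$ in the numerator, whereas the paper's cited inequality gives $\log(1/\delta)$ exactly. One further small caution: your step $\P_1(N<\infty)\geq\P_1(\cup_{i=1}^N\{\xi_i=1\})$ requires reading the correctness event as entailing termination (if $N=\infty$ the literal union $\cup_{i=1}^\infty\{\xi_i=1\}$ holds almost surely under $\H_1$); this is the intended interpretation, but it is worth stating explicitly since your change-of-measure argument leans on it while the paper's does not.
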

\makeproof{FixedKnownProof}{First, let $N$ be the number of distributions considered at the stopping time $T$. Note that $T\geq N$.
By assumption the procedure satisfies $\P_1( N \geq n | \cap_{i=1}^{n-1} \{ \xi_i = 0 \} ) \geq 1-\delta$ for all $n \in \mathbb{N}$. And
\begin{align*}
\P_1( N \geq n ) \geq \P_1( N \geq n,  \cap_{i=1}^{n-1} \{ \xi_i = 0\} ) &=\P_1( N \geq n | \cap_{i=1}^{n-1} \{ \xi_i = 0 \} ) \P_1(  \cap_{i=1}^{n-1} \{ \xi_i = 0\} ) \\
&\geq (1-\delta) (1-\alpha)^{n-1}
\end{align*}
Thus, $\E_1[N] = \sum_{n=1}^\infty \P_1( N \geq n) \geq (1-\delta)  \sum_{n=1}^\infty (1-\alpha)^{n-1} = \frac{1-\delta}{\alpha}$ which results in the first argument of the $\max$. 

Applying Theorem~2.38 of \cite{siegmund2013sequential} we have
\begin{align*}
\E_1[ N ] \chi^2\left( \P_1 | \P_0 \right) \stackrel{\Scale[.6]{\text{Eqn.~\eqref{KL_chiSq}}}}{\geq} \E_1[ N ] KL\left( \P_1 | \P_0 \right) \stackrel{\Scale[.6]{\text{Thm. 2.38}}}{\geq} \log(\tfrac{1}{P_0( N < \infty )}) \stackrel{\Scale[.6]{\text{assumption}}}{\geq} \log( \tfrac{1}{\delta} ),
\end{align*}
which results in the second argument of the max.

If $\widetilde{\Theta} = \{ \theta_0 \}$ then $\chi^2( \P_1 | \P_0 ) = \chi^2( (1-\alpha) f_{\theta_0} + \alpha f_{\theta_1} | f_{\theta_0} )$ and
\begin{align*}
 \chi^2( (1-\alpha) f_{\theta_0} + \alpha f_{\theta_1} | f_{\theta_0} ) = \int \frac{ \left(  (1-\alpha) f_{\theta_0}(x) + \alpha f_{\theta_1}(x) -  f_{\theta_0}(x)  \right)^2 }{  f_{\theta_0}(x) } dx = \alpha^2 \chi^2(  f_{\theta_1} |  f_{\theta_0} ) 
\end{align*}
Thus, $\E_1[ N ] \geq \frac{ \log( \tfrac{1}{\delta} ) }{ \alpha^2 \chi^2(  f_{\theta_1} |  f_{\theta_0} ) }$ which results in the second part of the theorem.
}

The next corollary relates Theorem~\ref{fixed_known} to the special case where distributions are Bernoulli coins and the objective is to find a heavy coin. The second result of the corollary is similar to that of \citet[Theorem 4]{malloy2012quickest} that considers the limit as $\alpha \rightarrow 0$ and assumes $m$ is sufficiently large (specifically, large enough for the Chernoff-Stein lemma to apply). In contrast, our result holds for all finite $\delta, \alpha, m$.

\begin{corollary} \label{known_all_lower}
Fix $\delta \in (0,1)$, $m \in \mathbb{N}$ and consider the class of algorithms that flips each coin exactly $m$ times and outputs a coin $i \leq N$ as its estimate for a heavy coin. If an algorithm in this class is $\delta$-probably correct then
\begin{align*}
\E[N_m] \geq \max\left\{\frac{1-\delta}{\alpha} , \frac{  \log( \tfrac{1}{\delta})  }{ \alpha^2( e^{ m \frac{(\theta_1-\theta_0)^2}{\theta_0(1-\theta_0)}} - 1 )  } \right\} \geq  \frac{  \theta_0(1-\theta_0) \log( \tfrac{1}{\delta})  }{ m \alpha^2  (\theta_1-\theta_0)^2   }  \1_{m \leq  \frac{\theta_0(1-\theta_0)}{2 (\theta_1-\theta_0)^2}}  \ ,
\end{align*}
however, if we pick the best-case $m$:
\begin{align*}
\min_{m \in \mathbb{N}}\E[ m N_m ] \geq  \frac{(1-\delta)\log\left(\frac{\log(1/\delta)}{\alpha} \right)   }{ \alpha }  \frac{\theta_0(1-\theta_0)}{(\theta_1-\theta_0)^2}  .
\end{align*}
\end{corollary}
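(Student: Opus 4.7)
The plan is to specialize Theorem~\ref{fixed_known} to the Bernoulli case, so the main computation is to get a clean upper bound on $\chi^2(f_{\theta_1}\,|\,f_{\theta_0})$ when $f_\theta$ is the $m$-fold product of $\mathrm{Bernoulli}(\theta)$. For this I will use the tensorization identity that $1+\chi^2$ is multiplicative across independent components, i.e.\
\[
1+\chi^2(f_{\theta_1}\,|\,f_{\theta_0}) \;=\; \bigl(1+\chi^2(g_{\theta_1}\,|\,g_{\theta_0})\bigr)^m \;=\; \Bigl(1+\tfrac{(\theta_1-\theta_0)^2}{\theta_0(1-\theta_0)}\Bigr)^m,
\]
and then apply $(1+x)^m\le e^{mx}$. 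Plugging the resulting bound $\chi^2(f_{\theta_1}\,|\,f_{\theta_0}) \le e^{m(\theta_1-\theta_0)^2/[\theta_0(1-\theta_0)]}-1$ into the second inequality of Theorem~\ref{fixed_known} immediately yields the first $\max$ in the corollary; I also need to check that a $\delta$-probably correct procedure satisfies the hypotheses of that theorem, which is exactly what Claim~\ref{correctness_claim} provides for $\P_0(N<\infty)\le\delta$, together with the fact that declaring a heavy coin forces $\cup_i\{\xi_i=1\}$ and so gives $\P_1(\cup_{i=1}^N\{\xi_i=1\})\ge 1-\delta$.

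For the second inequality in the first display, write $x := m(\theta_1-\theta_0)^2/[\theta_0(1-\theta_0)]$. The indicator restricts attention to $x\le 1/2$, on which the elementary bound $e^x-1\le 2x$ (or, absorbing the 2 into a cleaner presentation, $e^x-1\le cx$ for an appropriate constant on a slightly smaller interval) gives the claimed linearization of the denominator. This is mechanical once the first bound is in place.

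For the second display (the optimization over $m$), I will lower bound $\E[mN_m]$ by $m$ times the $\max$ from the first display, then minimize the resulting function of $m$ treated as a continuous variable. Writing $A=(1-\delta)/\alpha$, $B=\log(1/\delta)/\alpha^2$, and $c=(\theta_1-\theta_0)^2/[\theta_0(1-\theta_0)]$, the two inner quantities cross at $m^\star = c^{-1}\log(1+B/A)$. For $m\ge m^\star$ the bound is $mA$, linearly increasing; for $m\le m^\star$ the bound is $mB/(e^{mc}-1)$, which I will argue is monotone decreasing in $m$ using the standard fact that $y/(e^y-1)$ is decreasing in $y>0$. Hence the continuous minimum is attained at $m^\star$ with value $m^\star A = (A/c)\log(1+B/A)$. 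Finally I will clean up by substituting back $A,B,c$ and using $\log(1+y)\ge \log y$ together with $1-\delta\le 1$ to drop $1-\delta$ inside the logarithm, yielding the stated form $\frac{(1-\delta)\log(\log(1/\delta)/\alpha)}{\alpha}\cdot\frac{\theta_0(1-\theta_0)}{(\theta_1-\theta_0)^2}$.

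The main obstacle is really bookkeeping rather than any deep step: the tensorization of $\chi^2$ and the inequality $(1+x)^m\le e^{mx}$ are standard, and Theorem~\ref{fixed_known} already does the heavy lifting. The only place one has to be a bit careful is the argument that the crossing point $m^\star$ is the global minimum, which I will handle by the monotonicity of $y/(e^y-1)$; and the final simplification of $\log(1+\log(1/\delta)/[\alpha(1-\delta)])$ into $\log(\log(1/\delta)/\alpha)$, which needs $\log(1/\delta)/\alpha\ge 1$ (a harmless regime assumption implicit when $\alpha\le c_2\delta$ as in Theorem~\ref{lower_adaptive_known}, and otherwise absorbable into constants).
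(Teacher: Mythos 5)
Your proposal is correct and follows essentially the same route as the paper: tensorization of the $\chi^2$-divergence with $(1+x)^m \le e^{mx}$, linearization of $e^{y}-1$ on the restricted range of $m$, and minimization of $m\cdot\max\{\cdot,\cdot\}$ at the crossing point of an increasing and a decreasing branch (via monotonicity of $y/(e^y-1)$), followed by $\log(1+B/A)\ge\log(B/A)\ge\log(\log(1/\delta)/\alpha)$. If anything you are slightly more careful than the paper, which invokes $e^{x/2}-1\le x$ but then writes the linearized bound without the resulting factor of $2$, and which leaves the verification of the hypotheses of Theorem~\ref{fixed_known} (via Claim~\ref{correctness_claim}) implicit.
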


\makeproof{KnownAllLowerProof}{For $k=0,1$ let $g_{\theta_k}$ be a Bernoulli distribution with parameter $\theta_k$ and let $f_{\theta_k} = g_{\theta_k} \otimes \dots \otimes g_{\theta_k}$ be a product distribution composed of $m$ $g_{\theta_k}$ distributions. Then
\begin{align*}
\chi^2( g_{\theta_1} | g_{\theta_0} ) = \frac{(\theta_1-\theta_0)^2}{\theta_0(1-\theta_0)} \leq e^{ \frac{(\theta_1-\theta_0)^2}{\theta_0(1-\theta_0)}} - 1
\end{align*}
and 
\begin{align*}
\chi^2( f_{\theta_1} | f_{\theta_0} ) = \left( 1 + \chi^2( g_{\theta_1} | g_{\theta_0} ) \right)^m -1 \leq  e^{ m \frac{(\theta_1-\theta_0)^2}{\theta_0(1-\theta_0)}} - 1.
\end{align*}
Moreover, $e^{ m \frac{(\theta_1-\theta_0)^2}{\theta_0(1-\theta_0)}} - 1 \leq m \frac{(\theta_1-\theta_0)^2}{\theta_0(1-\theta_0)}$ whenever $m \leq  \frac{\theta_0(1-\theta_0)}{2 (\theta_1-\theta_0)^2}$ since $e^{x/2}-1 \leq x$ for all $x \in [0,1]$. Applying Theorem~\ref{fixed_known} obtains the first result. The second result follows from loosening the integer constraint on $m$ and minimizing the the lower bound on $\E[N_m]$ multiplied by $m$. To perform the minimization, we note that the function $\max\{\frac{1-\delta}{\alpha} , 2  \log( \tfrac{1}{\delta}) / [\alpha^2( e^{ m \frac{(\theta_1-\theta_0)^2}{\theta_0(1-\theta_0)}} - 1 )] \}$ reaches its minimum at the intersection of the two arguments and solve for $m$ at that point.}

\begin{remark}For all sufficiently small $\frac{(\theta_1-\theta_0)^2}{\theta_0(1-\theta_0)}$, the expected number of flips of the fixed strategy to identify a heavy coin scales like $\Omega( \frac{\theta_0(1-\theta_0)}{\alpha^2 (\theta_1-\theta_0)^2}\log(1/\delta) )$, a factor $1/\alpha$ more than \eqref{sprt_sample_complexity} and the best adaptive algorithms we propose in Section~\ref{sec:upper_bounds} that can identify a heavy coin with just $O( \frac{\log(1/\delta)}{\alpha (\theta_1-\theta_0)^2} )$ total flips in expectation. Indeed, even the lower bound for the best case $m$ is a factor of $\log(1/\alpha)$ from the best upper bounds.
\end{remark}

\subsubsection{Sample complexity when parameters are unknown}
\label{subsubsec:lower_unknown_static}
If $\alpha$, $\theta_0$, and $\theta_1$ are unknown, we cannot test $f_{\theta_0}$ against the mixture $(1-\alpha)f_{\theta_0} + \alpha f_{\theta_1}$. 
Instead, we have the general composite test of \textit{any} individual distribution against \textit{any} mixture, which is at least as hard as the hypothesis test of Problem~\ref{hyp_test2} with $\widetilde{\Theta} = \{\theta\}$ for some particular worst-case setting of $\theta$.
Without any specific form of $f_\theta$, it is difficult to pick a worst case $\theta$ that will produce a tight bound.
Consequently, in this section we appeal to single parameter exponential families (defined formally below) to provide us with a class of distributions in which we can reason about different possible values for $\theta$.
Since exponential families include Bernoulli, Gaussian, exponential, and many other distributions, the following theorem is general enough to be useful in a wide variety of settings.

\begin{theorem} \label{exp_family}
Suppose $f_\theta$ for $\theta \in \Theta \subset \R$ is a single parameter exponential family so that $f_\theta(x) = h(x) \exp( \eta(\theta) x - b(\eta(\theta)) )$ for some scalar functions $h,b,\eta$ where $\eta$ is strictly increasing. If $\E_\theta[X]= \int x f_\theta(x) dx$ then let $M_k(\theta) = \int (x- \E_\theta[X] )^k f_\theta(x) dx$ denote the $k$th centered moment under distribution $f_\theta$. Define 
\begin{align*}
\theta_* &= \eta^{-1}\big((1-\alpha) \eta(\theta_0) + \alpha \eta(\theta_1) \big) \\
\theta_- &= \eta^{-1}\big( \eta(\theta_0) - \alpha( \eta(\theta_1) - \eta(\theta_0) )\big) \\
\theta_+ &= \eta^{-1}\big( \eta(\theta_1) + (1-\alpha)( \eta(\theta_1) - \eta(\theta_0) )\big) 
\end{align*}
and assume there exist finite $\kappa,\gamma$ such that
\begin{align*}
\sup_{y \in [\theta_0,\theta_1]}   b( 2\eta(y)-\eta(\theta_*)) - [2b(\eta(y))-b(\eta(\theta_*))]  \leq \kappa ,\\
\sup_{x \in [ \dot{b}(\eta(\theta_-)),\dot{b}(\eta(\theta_+))]}  \phi_x( \dot{b}^{-1}(x))  \leq \gamma ,
\end{align*}
where  $\phi_x(\eta(\theta)) = f_\theta(x)$. Then 
\begin{align*}
\chi^2( (1-\alpha) f_{\theta_0}(x) + \alpha f_{\theta_1}(x) | f_{\theta_*}(x) )  &\leq c  \left( \frac{1}{2}\alpha(1-\alpha) (\eta(\theta_1)-\eta(\theta_0))^2 \right)^2
\end{align*}
where
\begin{align*}
\hspace{.5in}&\hspace{-.5in}c= e^\kappa \bigg(\sup_{\theta \in [\theta_0,\theta_1]} M_2(\theta)^2 \ \left( 2 + \gamma \left(\dot{b}(\eta(\theta_+))-\dot{b}(\eta(\theta_-))\right)  \right) \\
 &+ 8 M_4(\theta_- ) + 8 M_4( \theta_+ )  + 16\left( \dot{b}( \eta(\theta_+) ) - \dot{b}( \eta(\theta_-)) \right)^4  + \tfrac{2}{5} \gamma \left( \dot{b}(\eta(\theta_+))-\dot{b}(\eta(\theta_-))\right)^5 \bigg).
\end{align*}
Thus, if $\widetilde{\Theta} = \{\theta_*\}$ and $N$ is the stopping time of any procedure that satisfies $\P_0( N < \infty ) \leq \delta$ and $\P_1( \cup_{i=1}^N \{ \xi_i = 1\} ) \geq 1-\delta$, then
 \begin{align*}
  \E_1[ N ] &\geq \max\left\{ \frac{1-\delta}{\alpha} , \frac{\log( \tfrac{1}{\delta}) }{ c  \left( \frac{1}{2}\alpha(1-\alpha) (\eta(\theta_1)-\eta(\theta_0))^2 \right)^2} \right\} .
  \end{align*}

\end{theorem}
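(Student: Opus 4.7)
The lower bound on $\E_1[N]$ will follow immediately from Theorem~\ref{fixed_known} applied with $\widetilde{\Theta}=\{\theta_*\}$ once the claimed $\chi^2$ bound on $(1-\alpha)f_{\theta_0}+\alpha f_{\theta_1}$ against $f_{\theta_*}$ is in hand, so the entire proof reduces to that $\chi^2$ inequality. The plan is to exploit that $\eta(\theta_*)=(1-\alpha)\eta(\theta_0)+\alpha\eta(\theta_1)$ is the linear average of the natural parameters. Viewing the density as a smooth function $\phi_x(\eta)=h(x)\exp(\eta x-b(\eta))$ of the natural parameter, I will Taylor expand $\phi_x(\eta(\theta_0))$ and $\phi_x(\eta(\theta_1))$ about $\eta(\theta_*)$: by the choice of $\theta_*$, the linear term in the mixture cancels exactly, so with $\Delta=\eta(\theta_1)-\eta(\theta_0)$ the integral form of Taylor's remainder yields
\begin{align*}
(1-\alpha)f_{\theta_0}(x)+\alpha f_{\theta_1}(x)-f_{\theta_*}(x)=(1-\alpha)\!\!\int_{\eta(\theta_*)}^{\eta(\theta_0)}\!\!(\eta(\theta_0)-s)\phi_x''(s)\,ds+\alpha\!\!\int_{\eta(\theta_*)}^{\eta(\theta_1)}\!\!(\eta(\theta_1)-s)\phi_x''(s)\,ds.
\end{align*}
A linear change of variable in each integral pulls out factors of $(\alpha\Delta)^2$ and $((1-\alpha)\Delta)^2$, producing the overall scale $\alpha(1-\alpha)\Delta^2$ that matches the rate in the claim.

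I then square this remainder, apply Jensen's inequality to the weighted $[0,1]$ inner integrals, and exchange order of integration via Fubini to reduce $\chi^2((1-\alpha)f_{\theta_0}+\alpha f_{\theta_1}\,|\,f_{\theta_*})=\int(P-Q)^2/Q\,dx$ to bounding $\int[\phi_x''(s)]^2/f_{\theta_*}(x)\,dx$ uniformly over $s\in[\eta(\theta_0),\eta(\theta_1)]$. Using the exponential-family identity $\phi_x''(s)=[(x-\dot{b}(s))^2-\ddot{b}(s)]\phi_x(s)$ and the algebraic reshuffle $\phi_x(s)^2/f_{\theta_*}(x)=e^{b(2s-\eta(\theta_*))-2b(s)+b(\eta(\theta_*))}\,F_{2s-\eta(\theta_*)}(x)$, where $F_\eta(x)=h(x)e^{\eta x-b(\eta)}$, this integral becomes
\begin{align*}
e^{b(2s-\eta(\theta_*))-2b(s)+b(\eta(\theta_*))}\cdot\E_{F_{2s-\eta(\theta_*)}}\!\big[((X-\dot{b}(s))^2-\ddot{b}(s))^2\big],
\end{align*}
and the first hypothesis of the theorem bounds the exponential prefactor by $e^\kappa$ uniformly over $s\in[\eta(\theta_0),\eta(\theta_1)]$.

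The bulk of the technical work goes into bounding the fourth-moment expectation by the quantities appearing in $c$. Setting $T=2s-\eta(\theta_*)$ and $\mu_T=\dot{b}(T)$, the algebraic expansion $(X-\dot{b}(s))^2=(X-\mu_T)^2+2(X-\mu_T)(\mu_T-\dot{b}(s))+(\mu_T-\dot{b}(s))^2$ followed by squaring produces terms in $M_2(T)$ and $M_4(T)$, odd centered moments (absorbed into $M_2$--$M_4$ products by Cauchy--Schwarz), and powers of the gap $\mu_T-\dot{b}(s)$. Because both $\dot{b}(T)$ and $\dot{b}(s)$ lie in $[\dot{b}(\eta(\theta_-)),\dot{b}(\eta(\theta_+))]$ when $s\in[\eta(\theta_0),\eta(\theta_1)]$, the gap is at most $\dot{b}(\eta(\theta_+))-\dot{b}(\eta(\theta_-))$, which supplies the $(\dot{b}(\eta(\theta_+))-\dot{b}(\eta(\theta_-)))^4$ contribution to $c$; the $\gamma$ hypothesis---a uniform bound on the density of $f_\theta$ evaluated at its own mean---is precisely what is needed to control integrated remainder terms that pick up $F_T$ near its mean after a change of variables between $s$ and $\mu_T$, producing the $\gamma\,(\dot{b}(\eta(\theta_+))-\dot{b}(\eta(\theta_-)))^5$ and similar $\gamma$-weighted summands. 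The hard part will not be any single clever step but the careful bookkeeping required to route each Taylor-expansion term into exactly one summand of $c$ (the $\sup_\theta M_2(\theta)^2$, $M_4(\theta_\pm)$, and range-power terms). Once $\chi^2\le c\,(\tfrac{1}{2}\alpha(1-\alpha)\Delta^2)^2$ is established, substituting into Theorem~\ref{fixed_known} yields the stated lower bound on $\E_1[N]$, with the $(1-\delta)/\alpha$ branch coming directly from the first argument of that theorem's maximum.
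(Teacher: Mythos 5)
Your high-level strategy coincides with the paper's: Taylor-expand $\phi_x(\eta)=h(x)\exp(\eta x-b(\eta))$ in the natural parameter about $\eta(\theta_*)$, use that the linear term cancels by the defining choice of $\theta_*$ to extract the factor $\tfrac{1}{2}\alpha(1-\alpha)(\eta(\theta_1)-\eta(\theta_0))^2$ from the second-order remainder, control the ratio $\phi_x(z)^2/\phi_x(\eta(\theta_*))$ by the tilting identity whose log-partition defect is exactly the quantity bounded by $\kappa$, and finish with fourth-moment bookkeeping followed by Theorem~\ref{fixed_known}. Where you genuinely diverge is the form of the remainder: you keep the expansion point $s$ as an integration variable (integral remainder plus Fubini), so for each fixed $s$ the tilted object $F_{2s-\eta(\theta_*)}$ is an honest probability density and the quantity to bound is a genuine fourth central moment of $f_\theta$ at an \emph{intermediate} parameter $T\in[\eta(\theta_-),\eta(\theta_+)]$. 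The paper instead uses the Lagrange form, pulling $\sup_{z\in[\eta(\theta_0),\eta(\theta_1)]}$ inside the integral over $x$; the integrand is then no longer a density, and the envelope bound $\sup_z\phi_x(z)\le\phi_x(\eta(\theta_-))+\phi_x(\eta(\theta_+))+\phi_x(\dot{b}^{-1}(x))\1_{x\in[\dot{b}(\eta(\theta_-)),\dot{b}(\eta(\theta_+))]}$ is what produces the specific constant $c$: the endpoint moments $M_4(\theta_\pm)$ and the $\gamma$-weighted terms come from the two endpoint densities and the interior critical point, respectively. This has two consequences for your write-up. First, your route would naturally terminate with $\sup_{\theta\in[\theta_-,\theta_+]}$ of the moments rather than the endpoint-plus-$\gamma$ combination appearing in $c$; that is a serviceable bound of the same order (and feeds into Theorem~\ref{fixed_known} and the downstream corollaries just as well), but it is not the constant stated in the theorem, so you must either prove your own constant explicitly or insert the envelope step. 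Second, your account of where $\gamma$ enters does not cohere with your own method: in the Fubini route there is no integrated density envelope to control, so no uniform bound on $\phi_x(\dot{b}^{-1}(x))$ is needed, and the claim that $\gamma$ is ``precisely what is needed'' for your remainder terms is a rationalization of the target constant rather than a step your argument actually requires. Neither point threatens the order of the final bound, but as written the bookkeeping in your last paragraph would not land on the stated $c$.
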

\makeproof{ExpFamilyProof}{Define $\phi_x(\eta) = h(x) \exp( \eta x - b(\eta) )$. By the properties of scalar exponential families, note that $b'(\eta)$ and $b''(\eta)\geq 0$ represent the mean and variance of the distribution. We deduce that $b'$ is monotonically increasing. Define $\eta_0 = \eta(\theta_0)$, $\eta_1 = \eta(\theta_1)$, and  $\mu = (1-\alpha) \eta_0 + \alpha \eta_1$. Noting that
\begin{align*}
 \hspace{1in}&\hspace{-1in} \chi^2( (1-\alpha) \phi_x(\eta_0) + \alpha \phi_x(\eta_1) | \phi_x(\mu) ) = \int \phi_x( \mu ) \left(\frac{  (1-\alpha) \phi_x(\eta_0) + \alpha \phi_x(\eta_1) - \phi_x(\mu) }{ \phi_x(\mu) } \right)^2 dx
 \end{align*}
we will use a technique that was used in \cite{pollard2000asymptopia} to approximate the divergence between a single Gaussian distribution and a mixture of them. Essentially, we will take the Taylor series of each $\phi_x( \cdot )$ centered at $\mu$ and bound. We have
\begin{align*}
\phi_x(\eta) &= h(x) \exp( \eta x - b(\eta) ) \\
\phi_x'(\eta) &= (x - b'(\eta) ) \phi_x(\eta) \\
\phi_x''(\eta) &= (- b''(\eta) + (x - b'(\eta) )^2 ) \phi_x(\eta)
\end{align*}
so that
\begin{align*}
\phi_x(y)  = \phi_x(\mu) \left[ 1 +  (x - b'(\mu) )(y-\mu) +  \tfrac{1}{2}(- b''(\mu) + (x - b'(\mu) )^2 ) (y-\mu)^2  \dots \right].
\end{align*}
Noting that  $(\eta_0-\mu)=-\alpha(\eta_1-\eta_0)$, $(\eta_1-\mu)=(1-\alpha)(\eta_1-\eta_0)$, and $(1-\alpha) \alpha^2 + \alpha (1-\alpha)^2  = \alpha(1-\alpha)$, we have 
\begin{align*}
 \hspace{.1in}&\hspace{-.1in} \left|\frac{  (1-\alpha) \phi_x(\eta_0) + \alpha \phi_x(\eta_1) - \phi_x(\mu) }{ \phi_x(\mu) } \right| \\
&= \left| \frac{\phi_x'(\mu)}{\phi_x(\mu)} [ (1-\alpha)(\eta_0-\mu) + \alpha (\eta_1-\mu) ] + \frac{1}{2}\frac{\phi_x''(\mu)}{\phi_x(\mu)} [ (1-\alpha)(\eta_0-\mu)^2 + \alpha (\eta_1-\mu)^2 ] + \dots \right| \\
&= \left|  \frac{1}{2}\frac{\phi_x''(\mu)}{\phi_x(\mu)}\alpha(1-\alpha) (\eta_1-\eta_0)^2 + \dots \right|  \\
&\leq \sup_{z \in [\eta_0,\eta_1]}     \frac{ \left| \phi_x''(z) \right| }{\phi_x(\mu)}   \frac{1}{2}\alpha(1-\alpha) (\eta_1-\eta_0)^2 .
\end{align*}
Thus,
\begin{align*}
 \hspace{1in}&\hspace{-1in} \chi^2( (1-\alpha) \phi_x(\eta_0) + \alpha \phi_x(\eta_1) | \phi_x(\mu) ) = \int \phi_x( \mu ) \left(\frac{  (1-\alpha) \phi_x(\eta_0) + \alpha \phi_x(\eta_1) - \phi_x(\mu) }{ \phi_x(\mu) } \right)^2 dx \\
 &\leq  \left( \frac{1}{2}\alpha(1-\alpha) (\eta_1-\eta_0)^2 \right)^2 \int   \sup_{z \in [\eta_0,\eta_1]}   \frac{\left| \phi_x''(z) \right|^2 }{\phi_x(\mu)^2} \ \phi_x(\mu)  dx .
\end{align*}
By distributing the square and noting that $b''(\eta) \geq 0$, we have
\begin{align*}
  \hspace{.15in}&\hspace{-.15in}  \int   \sup_{z \in [\eta_0,\eta_1]}   \frac{\left| \phi_x''(z) \right|^2 }{\phi_x(\mu)^2} \ \phi_x(\mu)  dx  = \int   \sup_{z \in [\eta_0,\eta_1]}   \left( \frac{ \phi_x(z) }{\phi_x(\mu)} \right)^2 (- b''(z) + (x - b'(z) )^2 )^2 \ \phi_x(\mu)  dx \\
  &\leq  \int   \sup_{z \in [\eta_0,\eta_1]}   \left( \frac{ \phi_x(z) }{\phi_x(\mu)} \right)^2 b''(z)^2  \ \phi_x(\mu)  dx + \int   \sup_{z \in [\eta_0,\eta_1]}   \left( \frac{ \phi_x(z) }{\phi_x(\mu)} \right)^2 (x - b'(z) )^4  \ \phi_x(\mu)  dx \\
    &\leq \sup_{y \in [\eta_0, \eta_1]}  b''(y)^2  \int   \sup_{z \in [\eta_0,\eta_1]}   \left( \frac{ \phi_x(z) }{\phi_x(\mu)} \right)^2 \ \phi_x(\mu)  dx + \int   \sup_{z \in [\eta_0,\eta_1]}   \left( \frac{ \phi_x(z) }{\phi_x(\mu)} \right)^2 (x - b'(z) )^4  \ \phi_x(\mu)  dx .
\end{align*}
The remainder of the proof bounds the integrals. Define $\eta_-=2\eta_0-\mu=\eta(\theta_-)$ and $\eta_-=2\eta_1-\mu=\eta(\theta_+)$. Observe that 
\begin{align*}
 &\sup_{z \in [\eta_0,\eta_1]}   \left( \frac{ \phi_x(z) }{\phi_x(\mu)} \right)^2 \ \phi_x(\mu)   \\ 
 &=  \sup_{z \in [\eta_0,\eta_1]} h(x) \exp\big( (2 z- \mu ) x - (2b(z)-b(\mu)) \big) \\
&=  \sup_{z \in [\eta_0,\eta_1]} h(x) \exp\big( (2z-\mu ) x - b( 2z-\mu) \big) \exp\big( b( 2z-\mu) - (2b(z)-b(\mu)) \big) \\
&\leq e^\kappa \sup_{z \in [\eta_0,\eta_1]} h(x) \exp\big( (2z-\mu ) x - b( 2z-\mu) \big)  \\
&= e^\kappa \sup_{z \in [ 2 \eta_0 -\mu, 2\eta_1-\mu]} h(x) \exp\big( z x - b( z ) \big)  \\
&= e^\kappa \sup_{z \in [ \eta_-, \eta_+]} h(x) \exp\big( z x - b( z ) \big)  \\
&\leq e^\kappa \left( \phi_x(\eta_-) + \phi_x( \eta_+ ) + \phi_x( \dot{b}^{-1}(x)) \1_{x \in [ \dot{b}(\eta_-),\dot{b}(\eta_+)]} \right) \\
&\leq e^\kappa \left( \phi_x(\eta_-) + \phi_x( \eta_+ ) + \gamma \1_{x \in [ \dot{b}(\eta_-),\dot{b}(\eta_+)]} \right) 
\end{align*} 
where the second inequality follows by observing that the maximum of the function $\phi_x(z)$ will occur either at an endpoint of the interval $z\in[\eta(\theta_-), \eta(\theta_+)]$ or at the point where $\frac{\partial}{\partial z} g(z) = 0$ (if that point occurs inside the interval), and loosely bounding the maximum by simply adding the function values at all three points.

Consequently,
\begin{align*}
  &\sup_{y \in [\eta_0, \eta_1]}  b''(y)^2   \int   \sup_{z \in [\eta_0,\eta_1]}   \left( \frac{ \phi_x(z) }{\phi_x(\mu)} \right)^2 \ \phi_x(\mu)  dx   &\leq \sup_{\theta \in [\theta_0,\theta_1]} M_2(\theta)^2  e^\kappa \left( 2 + \gamma (\dot{b}(\eta_+)-\dot{b}(\eta_-))  \right).
 \end{align*}
By Jensen's inequality, $(a+b)^4 = 16 ( \tfrac{1}{2} a + \tfrac{1}{2} b )^4 \leq 8 ( a^4 + b^4)$, so 
\begin{align*}
 \int \sup_{z \in [\eta_0,\eta_1]} \phi_x(\eta_-) (x - \dot{b}(z) )^4 dx &=  \int \sup_{z \in [\eta_0,\eta_1]} \phi_x(\eta_-) (x- \dot{b}(\eta_-)  + \dot{b}(\eta_-)  - \dot{b}(z))^4 dx \\
&\leq \int 8 \phi_x(\eta_-)  [  (x - \dot{b}(\eta_-)  )^4 +   \sup_{z \in [\eta_0,\eta_1]} ( \dot{b}(\eta_-)  - \dot{b}(z) )^4 ] dx \\
&\leq \int 8  \phi_x(\eta_-)  [  (x - \dot{b}(\eta_-)  )^4  +    (\dot{b}(\eta_-) - \dot{b}(\eta_1) )^4 ] dx \\
&=  8 [  M_4( \theta_- ) -   (\dot{b}(\eta_-) - \dot{b}(\eta_1) )^4  ] .
\end{align*} 
Repeating an analogous series of steps for $\eta_+$, we have 
\begin{align*}
 & \int   \sup_{z \in [\eta_0,\eta_1]}   \left( \frac{ \phi_x(z) }{\phi_x(\mu)} \right)^2 (x - b'(z) )^4  \ \phi_x(\mu)  dx \\
 &\leq e^\kappa \int  \left( \phi_x(\eta_-) + \phi_x( \eta_+ ) + \gamma \1_{x \in [ \dot{b}(\eta_-),\dot{b}(\eta_+)]} \right)   \sup_{z \in [\eta_0,\eta_1]}   (x - \dot{b}(z) )^4    dx \\
 &\leq  e^\kappa \left( 8 M_4( \theta_- ) + 8(\dot{b}(\eta_1) - \dot{b}(\eta_-)  )^4 + 8 M_4(  \theta_+ )  + 8( \dot{b}(\eta_+) - \dot{b}(\eta_0)  )^4  + \tfrac{2}{5} \gamma ( \dot{b}(\eta_+)-\dot{b}(\eta_-))^5 \right)\\
 &\leq  e^\kappa \left( 8 M_4( \theta_- ) + 8 M_4( \theta_+ )  + 16( \dot{b}(\eta_+) - \dot{b}(\eta_-) )^4  + \tfrac{2}{5} \gamma ( \dot{b}(\eta_+)-\dot{b}(\eta_-))^5 \right).
 \end{align*}
 The final result holds by Theorem~\ref{fixed_known}.}

Theorem~\ref{exp_family} is difficult to interpret, so the following remark and corollary consider the special cases of Gaussian mixture model detection and the most biased coin problem, respectively.

\begin{remark}
Recall that when $\alpha,\theta_0,\theta_1$ are unknown, any procedure does not know how to choose $\widetilde{\Theta}$ in Problem~\ref{hyp_test2} and consequently it cannot rule out $\theta=\theta_*$ for $\H_0$ where $\theta_*$ is defined in Theorem~\ref{exp_family}.  If $f_\theta = \mathcal{N}(\theta,\sigma^2)$ for known $\sigma$, then whenever $\frac{(\theta_1-\theta_0)^2}{\sigma^2} \leq 1$ the constant $c$ in Theorem~\ref{exp_family} is an absolute constant and consequently, $\E_1[N] = \Omega\left( \left( \frac{\sigma^2}{ \alpha (\theta_1-\theta_0)^2 } \right)^2 \log(1/\delta) \right)$. Conversely, when $\alpha,\theta_0,\theta_1$ are known, then we simply need to determine whether samples came from $\mathcal{N}(\theta_0,\sigma^2)$ or $(1-\alpha) \mathcal{N}(\theta_0,\sigma^2) + \alpha \mathcal{N}(\theta_1,\sigma^2)$, and we show that it is sufficient to take just $O\left( \frac{\sigma^2}{ \alpha^2 (\theta_1-\theta_0)^2 } \log(1/\delta) \right)$ samples (see Appendix~\ref{Gaussian_discussion}).
\end{remark}

\begin{corollary} \label{fixed_unknown}
Fix $\delta \in [0, 1], m \in \mathbb{N}$ and consider the class of algorithms that flips each coin exactly $m$ times. Assume $\theta_0,\theta_1$ are bounded sufficiently far from $\{0, 1\}$ such that $2(\theta_1-\theta_0) \leq \min\{ \theta_0(1-\theta_0) , \theta_1(1-\theta_1) \}$. If an algorithm in this class is $\delta$-probably correct then
\begin{align*}
\E[N] \geq  \frac{ c' \min\{ \frac{1}{m} , \theta_*(1-\theta_*) \}  }{m \left(  \alpha(1-\alpha) \frac{(\theta_1-\theta_0)^2}{\theta_*(1-\theta_*)} \right)^2 }\log(\tfrac{1}{\delta})   \quad \text{whenever}\quad  m \leq \frac{\theta_*(1-\theta_*)}{(\theta_1-\theta_0)^2}.
\end{align*} 
where $c'$ is an absolute constant and $\theta_* = \eta^{-1}\left( (1-\alpha) \eta(\theta_0) + \alpha \eta(\theta_1) \right) \in [\theta_0,\theta_1]$.
\end{corollary}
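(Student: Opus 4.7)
My plan is to specialize Theorem~\ref{exp_family} to the case where $f_\theta$ is the $m$-fold product of $\mathrm{Bernoulli}(\theta)$ and then read off the lower bound on $\E[N]$. The $m$-fold Bernoulli product is a single-parameter exponential family with natural parameter $\eta(\theta)=\log(\theta/(1-\theta))$ and log-partition function $b_m(\eta)=m\log(1+e^\eta)$, so $\dot{b}_m(\eta)=m\theta$ and $\ddot{b}_m(\eta)=m\theta(1-\theta)$, and its sufficient statistic is $\mathrm{Binomial}(m,\theta)$ with central moments $M_2(\theta)=m\theta(1-\theta)$ and $M_4(\theta)=m\theta(1-\theta)[1+3(m-2)\theta(1-\theta)]$. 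Any $\delta$-probably correct fixed-sample-size procedure satisfies the two hypotheses of Theorem~\ref{exp_family}: the condition $\P_0(N<\infty)\le\delta$ is exactly Claim~\ref{correctness_claim}, while $\P_1(\cup_{i\le N}\{\xi_i=1\})\ge 1-\delta$ follows because any heavy distribution the procedure returns must itself have $\xi_i=1$. The task is therefore reduced to bounding the constants $\kappa$ and $\gamma$ in Theorem~\ref{exp_family} and simplifying the resulting $c$.

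I would translate the natural-parameter quantities into $\theta$-space first. By the mean value theorem $\eta(\theta_1)-\eta(\theta_0)=(\theta_1-\theta_0)/[\xi(1-\xi)]$ for some $\xi\in[\theta_0,\theta_1]$, and the hypothesis $2(\theta_1-\theta_0)\le\min\{\theta_0(1-\theta_0),\theta_1(1-\theta_1)\}$ ensures that $\xi(1-\xi)$, $\theta_0(1-\theta_0)$, $\theta_1(1-\theta_1)$, and $\theta_*(1-\theta_*)$ all agree up to absolute constants; the same hypothesis keeps $\theta_-$ and $\theta_+$ inside a bounded neighborhood of $[\theta_0,\theta_1]$ on which $\theta(1-\theta)\asymp\theta_*(1-\theta_*)$. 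Consequently $\eta(\theta_1)-\eta(\theta_0)\asymp(\theta_1-\theta_0)/[\theta_*(1-\theta_*)]$ and $\dot{b}_m(\eta(\theta_+))-\dot{b}_m(\eta(\theta_-))=m(\theta_+-\theta_-)\asymp m(\theta_1-\theta_0)$. A second-order Taylor expansion of $b_m$ then shows that $b_m(2\eta(y)-\eta(\theta_*))-2b_m(\eta(y))+b_m(\eta(\theta_*))\lesssim m\theta_*(1-\theta_*)\cdot[\eta(\theta_1)-\eta(\theta_0)]^2\asymp m(\theta_1-\theta_0)^2/[\theta_*(1-\theta_*)]$, which is $O(1)$ under the assumed bound $m\le\theta_*(1-\theta_*)/(\theta_1-\theta_0)^2$, so $\kappa=O(1)$. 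For $\gamma$, the function $\phi_x(\dot{b}_m^{-1}(x))=\binom{m}{x}(x/m)^x(1-x/m)^{m-x}$ is the Binomial mass evaluated at its mode, and Stirling gives $\gamma\lesssim 1/\sqrt{m\theta_*(1-\theta_*)}$ over the relevant interval $x\in[m\theta_-,m\theta_+]$.

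Finally I would assemble the pieces. The product $\gamma\cdot[\dot{b}_m(\eta(\theta_+))-\dot{b}_m(\eta(\theta_-))]\lesssim m(\theta_1-\theta_0)/\sqrt{m\theta_*(1-\theta_*)}\le 1$ by the constraint on $m$, and similar case analysis shows that each summand defining $c$ is dominated by $m\theta_*(1-\theta_*)\max\{1,m\theta_*(1-\theta_*)\}$; in either regime this rearranges to $c\lesssim m[\theta_*(1-\theta_*)]^2/\min\{1/m,\theta_*(1-\theta_*)\}$. Substituting into the lower bound $\E_1[N]\ge\log(1/\delta)/\bigl\{c\cdot[\tfrac12\alpha(1-\alpha)(\eta(\theta_1)-\eta(\theta_0))^2]^2\bigr\}$ from Theorem~\ref{exp_family} and using $\eta(\theta_1)-\eta(\theta_0)\asymp(\theta_1-\theta_0)/[\theta_*(1-\theta_*)]$ yields the stated lower bound with an explicit absolute constant $c'$. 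The main obstacle is bookkeeping: the expression for $c$ has several summands, each of which must be shown to have the correct dependence on $m\theta_*(1-\theta_*)$ in both the regimes $m\theta_*(1-\theta_*)\ge 1$ and $m\theta_*(1-\theta_*)<1$, and the well-separation assumption must be invoked repeatedly to ensure $\theta_-$, $\theta_*$, $\theta_+$ all lie in a common neighborhood where $\theta(1-\theta)$ is comparable.
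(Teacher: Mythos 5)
Your proposal is correct and follows essentially the same route as the paper's own proof: specialize Theorem~\ref{exp_family} to the Binomial exponential family, use the separation hypothesis $2(\theta_1-\theta_0)\le\min\{\theta_0(1-\theta_0),\theta_1(1-\theta_1)\}$ to show $\theta(1-\theta)$ is comparable across $[\theta_-,\theta_+]$, bound $\kappa$ via the exact/Taylor computation of $b(2\eta(y)-\eta(\theta_*))-2b(\eta(y))+b(\eta(\theta_*))$ and $\gamma$ via Stirling, and collapse $c$ to $m(\theta_*(1-\theta_*))^2/\min\{1/m,\theta_*(1-\theta_*)\}$. The only work you defer --- the repeated case analysis showing $\theta_\pm(1-\theta_\pm)\asymp\theta_*(1-\theta_*)$ and $\theta_+-\theta_-\lesssim\theta_1-\theta_0$ --- is exactly the bookkeeping the paper carries out in its Step 1, so nothing essential is missing.
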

\makeproof{FixedUnknownProof}{A binomial distribution for fixed $m$ is an exponential family $f_\theta(x) = h(x) \exp( \eta(\theta) x - b(\eta(\theta)) )$ with $h(x) = \binom{m}{x}$, $\eta(\theta) = \log( \tfrac{\theta}{1-\theta} )$, and $b(\tau) = m \log(1+e^\tau)$. Note that $\eta$ is monotonically increasing, $b$ is $m$-Lipschitz, and $\dot{b}(\tau) = m(1 + e^{-\tau})^{-1}$ so that $\dot{b}(\eta(\theta)) = m\theta$. 

\noindent\textbf{Step 1: Relating $\theta_+,\theta_-$ to $\theta_1,\theta_0$}\\
We will make repeated use of the fact that if $f$ is convex then $f(y) \geq f(x) +  f'(x)^T (y-x)$. Since $\frac{x}{1-x}$ and $\frac{1-x}{x}$ are both convex, we have
\begin{align*}
\frac{y}{1-y} \geq \frac{x}{1-x} + \frac{y-x}{(1-x)^2} \quad \text{ and } \quad \frac{1-y}{y} \geq \frac{1-x}{x} - \frac{y-x}{x^2}
\end{align*}
for all $x,y \in [0,1]$.

To begin, note $\eta^{-1}(\nu) = (1+e^{-\nu})^{-1}$ so that for any $\theta$ we have $\theta(1-\theta) = \eta^{-1}( \eta(\theta) ) ( 1- \eta^{-1}(\eta(\theta))) = \frac{e^{-\eta(\theta)}}{(1+e^{-\eta(\theta)})^2}$. Observe that
\begin{align*}
\frac{1}{4} e^{-|\eta(\theta)|} \leq  \frac{e^{-\eta(\theta)}}{(1+e^{-\eta(\theta)})^2} \leq e^{-|\eta(\theta)|}
\end{align*}
and recalling that $\theta_* = \eta^{-1}( (1-\alpha)\theta_0 + \alpha \theta_1 ) \in [ \theta_0,\theta_1]$ we have
\begin{align*}
\theta_+(1-\theta_+) &\geq \frac{1}{4} e^{-|\eta(\theta_+)|} = \frac{1}{4} e^{-|2\eta(\theta_1)-\eta(\theta_*)|} \\
&\hspace{-.5in}= \tfrac{1}{4}\1_{\theta_+ \leq 1/2} \left(\frac{\theta_1}{1-\theta_1} \right)^2 \left(\frac{1-\theta_*}{\theta_*} \right) + \tfrac{1}{4} \1_{\theta_+ > 1/2} \left(\frac{1-\theta_1}{\theta_1} \right)^2 \left(\frac{\theta_*}{1-\theta_*} \right) \\
&\hspace{-.5in}\geq \tfrac{1}{4}\1_{\theta_+ \leq 1/2} \left(\frac{\theta_1}{1-\theta_1} \right)^2 \left(\frac{1-\theta_1}{\theta_1} \right) + \tfrac{1}{4} \1_{\theta_+ > 1/2} \left(\frac{1-\theta_1}{\theta_1} \right)^2 \left(\frac{\theta_0}{1-\theta_0} \right) \\
&\hspace{-.5in}\geq \tfrac{1}{4}\1_{\theta_+ \leq 1/2} \left(\frac{\theta_1}{1-\theta_1} \right) + \tfrac{1}{4} \1_{\theta_+ > 1/2} \left(\frac{1-\theta_1}{\theta_1} \right)^2 \left( \frac{\theta_1}{1-\theta_1} - \frac{\theta_1-\theta_0}{(1-\theta_1)^2} \right) \\ 
&\hspace{-.5in}\geq \tfrac{1}{4}\1_{\theta_+ \leq 1/2} \left(\frac{\theta_1}{1-\theta_1} \right) + \tfrac{1}{8} \1_{\theta_+ > 1/2} \left(\frac{1-\theta_1}{\theta_1} \right) \geq \frac{1}{8} \theta_1(1-\theta_1)
\end{align*}
where the last line follows from the assumption that $\theta_1(1-\theta_1) \geq 2(\theta_1-\theta_0)$. Analogously,
\begin{align*}
\theta_-(1-\theta_-) &\geq \frac{1}{4} e^{-|\eta(\theta_-)|} = \frac{1}{4} e^{-|2\eta(\theta_0)-\eta(\theta_*)|} \\
&\hspace{-.5in}= \tfrac{1}{4}\1_{\theta_- \leq 1/2} \left(\frac{\theta_0}{1-\theta_0} \right)^2 \left(\frac{1-\theta_*}{\theta_*} \right) + \tfrac{1}{4} \1_{\theta_- > 1/2} \left(\frac{1-\theta_0}{\theta_0} \right)^2 \left(\frac{\theta_*}{1-\theta_*} \right) \\
&\hspace{-.5in}\geq \tfrac{1}{4}\1_{\theta_- \leq 1/2} \left(\frac{\theta_0}{1-\theta_0} \right)^2 \left(\frac{1-\theta_1}{\theta_1} \right) + \tfrac{1}{4} \1_{\theta_- > 1/2} \left(\frac{1-\theta_0}{\theta_0} \right)^2 \left(\frac{\theta_0}{1-\theta_0} \right) \\
&\hspace{-.5in}\geq \tfrac{1}{4}\1_{\theta_- \leq 1/2} \left(\frac{\theta_0}{1-\theta_0} \right)^2 \left(\frac{1-\theta_0}{\theta_0} - \frac{\theta_1-\theta_0}{\theta_0^2} \right) + \tfrac{1}{4} \1_{\theta_- > 1/2} \left(\frac{1-\theta_0}{\theta_0} \right) \\
&\hspace{-.5in}\geq \tfrac{1}{8}\1_{\theta_- \leq 1/2} \left(\frac{\theta_0}{1-\theta_0} \right) + \tfrac{1}{4} \1_{\theta_- > 1/2} \left(\frac{1-\theta_0}{\theta_0} \right) \geq \frac{1}{8} \theta_0(1-\theta_0)
\end{align*}
where the last line follows from the assumption that $\theta_0(1-\theta_0) \geq 2(\theta_1-\theta_0)$. We conclude that
\begin{align} \label{theta_pm_inf}
\inf_{\theta \in [\theta_-,\theta_+]} \theta(1-\theta) \geq \frac{1}{8} \inf_{\theta \in [\theta_0,\theta_1]} \theta(1-\theta).
\end{align}
Conversely,
\begin{align*}
\sup_{\theta \in [\theta_-,\theta_+]} \theta(1-\theta) \leq \1_{1/2 \in [\theta_-,\theta_+]} \frac{1}{4} +  \theta_+(1-\theta_+) \1_{\theta_+ \leq 1/2} +  \theta_-(1-\theta_-) \1_{\theta_- > 1/2}.
\end{align*}
We consider these three cases in turn. If $\theta_+ \leq 1/2$:
\begin{align*}
\theta_+&(1-\theta_+) \leq e^{-|\eta(\theta_+)|} = e^{-|2\eta(\theta_1)-\eta(\theta_*)|} \\
&= \left(\frac{\theta_1}{1-\theta_1} \right)^2 \left(\frac{1-\theta_*}{\theta_*} \right) \leq \left(\frac{\theta_1}{1-\theta_1} \right)^2 \left(\frac{1-\theta_0}{\theta_0} \right) \leq \left(\frac{\theta_1}{1-\theta_1} \right)^2 \left(\frac{1-\theta_1}{\theta_1} + \frac{\theta_1-\theta_0}{\theta_0^2} \right)  \\
&= \left(\frac{\theta_1}{1-\theta_1} \right)\left(1 + \frac{\theta_1(\theta_1-\theta_0)}{(1-\theta_1)\theta_0^2} \right) \leq \left(\frac{\theta_1}{1-\theta_1} \right)\left(1 + \frac{\theta_1(1-\theta_0)}{2 (1-\theta_1)\theta_0} \right) \\
&= \left(\frac{\theta_1}{1-\theta_1} \right)\left(1 + \frac{\theta_0(1-\theta_0)+(\theta_1-\theta_0)(1-\theta_0)}{2 (1-\theta_1)\theta_0} \right) \\
&\leq \left(\frac{\theta_1}{1-\theta_1} \right)\left(1 + \frac{\theta_0(1-\theta_0)+\theta_0(1-\theta_0)^2/2}{2 (1-\theta_1)\theta_0} \right) \leq \frac{5}{2} \left(\frac{\theta_1}{1-\theta_1} \right) \leq 10 \theta_1(1-\theta_1)
\end{align*}
using the convexity of $\frac{1-x}{x}$, the assumption that $2(\theta_1-\theta_0) \leq \theta_0(1-\theta_0)$, that $\theta_1 \leq \theta_+ \leq 1/2$, and that $1 - \theta_0 \leq 1$. If $\theta_- > 1/2$:
\begin{align*}
\theta_-&(1-\theta_-) \leq e^{-|\eta(\theta_-)|} = e^{-|2\eta(\theta_0)-\eta(\theta_*)|} \\
&= \left(\frac{1-\theta_0}{\theta_0} \right)^2 \left(\frac{\theta_*}{1-\theta_*} \right) \leq \left(\frac{1-\theta_0}{\theta_0} \right)^2 \left(\frac{\theta_1}{1-\theta_1} \right) \leq \left(\frac{1-\theta_0}{\theta_0} \right)^2 \left(\frac{\theta_0}{1-\theta_0} + \frac{\theta_1-\theta_0}{(1-\theta_1)^2} \right) \\
&\leq \left(\frac{1-\theta_0}{\theta_0} \right) \left(1 +  \frac{(1-\theta_0)(\theta_1-\theta_0)}{\theta_0(1-\theta_1)^2} \right) \leq \left(\frac{1-\theta_0}{\theta_0} \right) \left(1 +  \frac{(1-\theta_0) \theta_1/2}{\theta_0(1-\theta_1)} \right) \\
&= \left(\frac{1-\theta_0}{\theta_0} \right) \left(1 +  \frac{(1-\theta_1) \theta_1 + (\theta_1-\theta_0)\theta_1}{2\theta_0(1-\theta_1)} \right) \\
&\leq \left(\frac{1-\theta_0}{\theta_0} \right) \left(1 +  \frac{(1-\theta_1) \theta_1 + (1-\theta_1)\theta_1^2 /2 }{2\theta_0(1-\theta_1)} \right) \leq \frac{5}{2} \left(\frac{1-\theta_0}{\theta_0} \right) \leq 10 \theta_0 (1-\theta_0)
\end{align*}
using the same methods as above. From these two cases, we can conclude that if $1/2 \notin [\theta_-, \theta_+]$,
\begin{align}\label{theta_pm_sup}
\sup_{\theta \in [\theta_-,\theta_+]} \theta(1-\theta) \leq 10 \sup_{\theta \in [\theta_0,\theta_1]} \theta(1-\theta) .
\end{align}

The remaining case, when $1/2 \in [\theta_-,\theta_+]$, also satisfies~\eqref{theta_pm_sup}, which we now demonstrate. When $\theta_+ = 1/2$ we have $1/4 = \theta_+(1-\theta_+) \leq 10 \theta_1(1-\theta_1)$ so that $\theta_1(1-\theta_1) \geq 1/40$. Because $\theta_1$ is monotonically increasing in $\theta_+$ and $\sup_{\theta \in [\theta_-,\theta_+]} \theta(1-\theta) \leq 1/4$ we conclude that \eqref{theta_pm_sup} holds whenever $\theta_1 \leq 1/2$. A similar argument follows for all $\theta_0\geq 1/2$. Finally, if $1/2 \in [\theta_0,\theta_1]$, it must be true that $\sup_{\theta \in [\theta_-, \theta_+]} \theta(1 - \theta) \leq \sup_{\theta \in [\theta_0, \theta_1]} \theta(1 - \theta)$ because $\theta_- \leq \theta_0 \leq \tfrac{1}{2} \leq \theta_1 \leq \theta_+$ and the function $\theta(1-\theta)$ is concave taking its maximum at $\tfrac{1}{2}$. Thus,~\eqref{theta_pm_sup} holds for all $\theta_-, \theta_+$.

We now turn our attention to bounding $\theta_+-\theta_-$.
Let $g(y)=\eta^{-1}(y)$ then $g(y) = (1+e^{-y})^{-1}$ and $\dot{g}(y) = e^{-y}(1+e^{-y})^{-2}$. Observing that $\dot{g}(\eta(\theta)) = \theta(1-\theta)$ we have by Taylor's remainder theorem
\begin{align*}
\theta_+&-\theta_- = \eta^{-1}( \eta(\theta_+) ) - \eta^{-1}(\eta(\theta_-)) \leq \left( \eta(\theta_+)-\eta(\theta_-)\right) \sup_{y \in [\eta(\theta_-),\eta(\theta_+)]}e^{-y}(1+e^{-y})^{-2}  \\
&= \left( \eta(\theta_+)-\eta(\theta_-)\right) \sup_{\theta \in [\theta_-,\theta_+]} \theta(1-\theta) = 2 \left( \eta(\theta_1)-\eta(\theta_0)\right) \sup_{\theta \in [\theta_-,\theta_+]} \theta(1-\theta) \\
&\leq20 \left( \eta(\theta_1)-\eta(\theta_0)\right) \sup_{\theta \in [\theta_0,\theta_1]} \theta(1-\theta) .
\end{align*}
Since $\eta(\theta) = \log(\tfrac{\theta}{ 1-\theta})$ and $\eta'(\theta) = \frac{1}{\theta} + \frac{1}{1-\theta} = \frac{1}{\theta(1-\theta)}$, we have
\begin{align*}
\theta_+-\theta_- \leq 20 \left( \eta(\theta_1)-\eta(\theta_0)\right) \sup_{\theta \in [\theta_0,\theta_1]} \theta(1-\theta) \leq 20\left(\theta_1-\theta_0\right)\frac{\sup_{\theta \in [\theta_0,\theta_1]} \theta(1-\theta)}{\inf_{\theta \in [\theta_0,\theta_1]} \theta(1-\theta)} .
\end{align*}
If $\theta_1(1-\theta_1) \geq \theta_0(1-\theta_0)$:
\begin{align*}
\frac{\theta_1(1-\theta_1)}{\theta_0(1-\theta_0)} &= \frac{\theta_0(1-\theta_1) + (\theta_1-\theta_0)(1-\theta_1)}{\theta_0(1-\theta_0)} \\
&\leq \frac{\theta_0(1-\theta_1) + \theta_0(1-\theta_0)(1-\theta_1)/2}{\theta_0(1-\theta_0)} \leq 1 + (1-\theta_1)/2 \leq 3/2 ,
\end{align*}
else if $\theta_0(1-\theta_0) \geq \theta_1(1-\theta_1)$
\begin{align*}
\frac{\theta_0(1-\theta_0)}{\theta_1(1-\theta_1)} &= \frac{\theta_0(1-\theta_1) + \theta_0(\theta_1-\theta_0)}{\theta_1(1-\theta_1)} \\
&\leq \frac{\theta_0(1-\theta_1) + \theta_0 \theta_1(1-\theta_1)/2}{\theta_1(1-\theta_1)} \leq 1+\theta_0/2 \leq 3/2.
\end{align*}
Finally, if $1/2\in[\theta_0,\theta_1]$ then $\sup_{\theta \in [\theta_0,\theta_1]} \theta(1-\theta)=1/4$ taking its maximum at $1/4$. To maximize the ratio of the $\sup$ to the $\inf$, it suffices to just consider the case when $\theta_0 = 1/2$ or $\theta_1=1/2$. Thus, the above two bounds suffice for this case and we observe that
\begin{align} \label{theta_star_sup_inf}
\frac{\sup_{\theta \in [\theta_0,\theta_1]} \theta(1-\theta)}{\inf_{\theta \in [\theta_0,\theta_1]} \theta(1-\theta)} \leq 3/2.
\end{align}
Thus, putting the pieces together, we conclude that 
\begin{align} \label{theta_pm_diff}
\theta_+-\theta_- \leq 30 (\theta_1-\theta_0).
\end{align}

\noindent{\textbf{Step 2: Bounding $\gamma,\kappa,c$}}\\
In what follows, define $\theta_h = \arg\sup_{\theta \in [\theta_0,\theta_1]} \theta(1-\theta)$ and $\theta_l = \arg\inf_{\theta \in [\theta_0,\theta_1]} \theta(1-\theta)$.
We now continue to bound the terms of the theorem. Note
\begin{align*}
\hspace{.5in}&\hspace{-.5in} \sup_{x \in [ \dot{b}(\eta(\theta_-)),\dot{b}(\eta(\theta_+))]}  \phi_x( \dot{b}^{-1}(x) ) = \sup_{x \in[m\theta_-,m\theta_+]}   \phi_x( \eta(x/m) )\\
&\leq \sup_{x \in[m\theta_-,m\theta_+]} \sup_{y \in [0,1]} \phi_x( \eta(y)) \\
&=\sup_{x \in[m\theta_-,m\theta_+]} \sup_{y \in [0,1]} \frac{\Gamma(m+1)}{\Gamma(m-x+1) \Gamma(x+1)} y^{x} (1-y)^{m-x}\\
&= \sup_{\theta \in[\theta_-,\theta_+]} \sup_{y \in [0,1]} \frac{\Gamma(m+1)}{\Gamma(m(1-\theta)+1) \Gamma(m\theta+1)} y^{m\theta} (1-y)^{m(1-\theta)} \\
&\leq \sup_{\theta \in[\theta_-,\theta_+]} \sup_{y \in [0,1]}  \frac{e/2\pi}{ \sqrt{m \theta(1-\theta)}} \frac{y^{m\theta} (1-y)^{m(1-\theta)}}{\theta^{m\theta} (1-\theta)^{m(1-\theta)} } \\
&= \sup_{\theta \in[\theta_-,\theta_+]} \frac{e/2\pi}{ \sqrt{m \theta(1-\theta)}} \leq \frac{2}{ \sqrt{m \theta_l(1-\theta_l)}}  =: \gamma
\end{align*}
by Stirling's approximation:  $\sqrt{2\pi} \leq \frac{\Gamma(s+1)}{ e^{-s} s^{s+1/2}} \leq e$ \citep{spira1971calculation} and \eqref{theta_pm_inf}. And for any $y \in [\theta_0,\theta_1]$
\begin{align*}
 \hspace{.5in}&\hspace{-.5in}b( 2\eta(y)-\eta(\theta_*)) - (2b(\eta(y))-b(\eta(\theta_*))) \\
 &=   m\log(1+e^{2\eta(y) - \eta(\theta_*)} )  - 2m \log(1+e^{\eta(y)} ) +  m\log(1+e^{\eta(\theta_*)} ) \\
&=m\log\left(  \frac{(1+e^{2\eta(y) - \eta(\theta_*)} )(1+e^{\eta(\theta_*)} ) }{ (1+e^{\eta(y)} )^2 } \right) \\
&=m\log\left(  \left(1+\left( \frac{y}{1-y} \right)^2 \frac{1-\theta_*}{\theta_*} \right)\left(\frac{1}{1-\theta_*}\right) (1-y)^2 \right) \\
&=m\log\left(  ( 1-y)^2\frac{1}{1-\theta_*}+y^2 \frac{1}{\theta_*}  \right)  \\
&=m\log\left(  ( 1-2y+y^2)\frac{\theta_*}{\theta_*(1-\theta_*)}+y^2 \frac{1-\theta_*}{\theta_*(1-\theta_*)}  \right) \\
&=m\log\left(  ( 1-2y)\frac{\theta_*}{\theta_*(1-\theta_*)}+y^2 \frac{1}{\theta_*(1-\theta_*)}  \right) \\
&=m\log\left( 1 + \frac{(y-\theta_*)^2}{\theta_*(1-\theta_*)}  \right) 
\end{align*}
so 
\begin{align*}
&\sup_{y \in [\theta_0,\theta_1]} b( 2\eta(y)-\eta(\theta_*)) - (2b(\eta(y))-b(\eta(\theta_*))) \\
&\hspace{.5in}\leq \sup_{y \in [\theta_0,\theta_1]} m\log\left( 1 + \frac{(y-\theta_*)^2}{\theta_*(1-\theta_*)}  \right) \leq m\left( \frac{(\theta_1-\theta_0)^2}{\theta_*(1-\theta_*)}  \right)  =: \kappa.
\end{align*}

Noting that $M_2(\theta) = m \theta(1-\theta)$,
\begin{align*}
\sup_{y \in [\theta_0, \theta_1]} & M_2(y)^2  (2 + \gamma (\dot{b}(\eta(\theta_+))-\dot{b}(\eta(\theta_-))) ) \leq m^2 \left( \theta_h(1-\theta_h)\right)^2 (2 + \gamma m (\theta_+-\theta_-) )\\
&\leq m^2 \left( \theta_h(1-\theta_h)\right)^2   \left(2 + \frac{2 m }{ \sqrt{ m \theta_l (1-\theta_l)}} 30(\theta_1-\theta_0) \right) \\
&\leq m^2 \left( \theta_h(1-\theta_h)\right)^2  \left(2 + 60 \sqrt{m  \frac{(\theta_1-\theta_0)^2}{\theta_l(1-\theta_l)}}\right).
\end{align*}

Since for any $\theta \in [0,1]$ 
\begin{align*}
M_4(\theta) =  m \theta(1-\theta) \left( 3 \theta(1-\theta) (m-2) +1 \right)< 3 m^2 \left( \theta(1-\theta)\right)^2 + m  \theta(1-\theta),
\end{align*}
we have
\begin{align*}
8 M_4(\theta_- )& + 8 M_4( \theta_+ )  + 16\left( \dot{b}( \eta(\theta_+) ) - \dot{b}( \eta(\theta_-)) \right)^4  + \tfrac{2}{5} \gamma \left( \dot{b}(\eta(\theta_+))-\dot{b}(\eta(\theta_-))\right)^5\\ 
  \leq& 24 m^2 \left( \theta_-(1-\theta_-)\right)^2 + 8 m  \theta_-(1-\theta_-) + 24 m^2 \left( \theta_+(1-\theta_+)\right)^2 + 8 m  \theta_+(1-\theta_+) \\
  &+  16 m^4 ( \theta_+ - \theta_- )^4  + \frac{4/5}{\sqrt{m \theta_l(1-\theta_l)}} m^5  (\theta_+-\theta_-)^5  \\
  \leq& 4800 m^2 \left( \theta_h(1-\theta_h)\right)^2 + 160 m  \theta_h(1-\theta_h) \\
  &+ 3240000 m^4 ( \theta_1 - \theta_0 )^4  + 19440000 \sqrt{ m \frac{(\theta_1-\theta_0)^2}{\theta_l(1-\theta_l)}} m^4 (\theta_1-\theta_0)^4
 \end{align*}
 where we have applied \eqref{theta_pm_sup} and \eqref{theta_pm_diff}. Finally, recall from above that
 \begin{align*}
 \eta(\theta_1) - \eta(\theta_0) \leq \frac{\theta_1-\theta_0}{\theta_l(1-\theta_l)} \leq \frac{3}{2} \frac{\theta_1-\theta_0}{\theta_*(1-\theta_*)}.
 \end{align*}
\noindent\textbf{Step 3: Putting the pieces together}\\
 Noting that $\theta_l(1-\theta_l) \leq \theta_*(1-\theta_*) \leq \theta_h(1-\theta_h)$ and $\frac{\theta_h(1-\theta_h)}{\theta_l(1-\theta_l)} \leq 3/2$ by \eqref{theta_star_sup_inf}, we can use $\theta_*(1-\theta_*)$ throughout at the cost of a constant. 
Putting it altogether, if $m \frac{(\theta_1-\theta_0)^2}{\theta_*(1-\theta_*)} \leq 1$ then $\kappa \leq 1$ and 
\begin{align*}
c &\leq c' \left(  m^2 \left( \theta_*(1-\theta_*) \right)^2 + m \theta_*(1-\theta_*) + m^4 (\theta_1-\theta_0)^4  \right) \\
&\leq c' \left(  m^2 \left( \theta_*(1-\theta_*) \right)^2 + m \theta_*(1-\theta_*)  \right)
\end{align*}
for some absolute constant $c'$. Thus,
\begin{align*}
\hspace{1in}&\hspace{-1in}c \left( \tfrac{1}{2} \alpha(1-\alpha) \left( \eta(\theta_1)-\eta(\theta_0) \right)^2 \right)^2 \\
&\leq c'\left(  m^2 \left( \theta_*(1-\theta_*) \right)^2 + m \theta_*(1-\theta_*)  \right) \left( \frac{9}{8} \alpha(1-\alpha)  \frac{(\theta_1-\theta_0)^2}{(\theta_*(1-\theta_*))^2} \right)^2 \\
&\leq c' \left(m^2 + \frac{m}{\theta_*(1-\theta_*)} \right) \left( \frac{9}{8} \alpha(1-\alpha) \frac{(\theta_1-\theta_0)^2}{\theta_*(1-\theta_*)} \right)^2 .
\end{align*} 
}

\begin{remark}
We recall that if $\alpha, \theta_0, \theta_1$ are unknown, then any fixed sample strategy would not know how to pick $m$ sufficiently large a priori. Thus, the above corollary states that for any fixed $m$, whenever $\frac{(\theta_1-\theta_0)^2}{\theta_*(1-\theta_*)}$ is sufficiently small the number of samples necessary for this simple and intuitive strategy to identify the most biased coin scales like $\left(\frac{\theta_*(1-\theta_*)}{\alpha (\theta_1-\theta_0)^2 }\right)^2 \log(1/\delta)$. However, in the next section we show that when $\alpha,\theta_0,\theta_1$ are known and $m$ can be chosen by the algorithm, this same fixed sample strategy can identify the most biased coin using just $\frac{\log(1/(\alpha\delta))}{\alpha (\theta_1-\theta_0)^2}$ total flips in expectation, nearly matching the lower bound of Corollary~\ref{known_all_lower}. This is a striking example of the difference when parameters are known versus when they are not.
\end{remark}

\begin{table}[htdp]
\begin{center}
\begin{tabular}{|l|c|}\hline
\multicolumn{1}{|c|}{\textbf{Setting}} &  \textbf{Upper Bound} \\ \hline
Fixed algorithm, known $\alpha, \theta_0, \theta_1$ (Theorem~\ref{fixed_upper_known}) & $\frac{c\log(1/(\delta\alpha))}{\alpha(\theta_1 - \theta_0)^2}$ \rowpad{1.3em}{-0.8em} \\ \hline
Adaptive algorithm, known $\alpha, \theta_0, \theta_1$ & \\(\cite{chandrasekaran2014finding,malloy2012quickest}) & $ \frac{c}{(\theta_1 - \theta_0)^2} \left( \frac{1}{\alpha} + \log(\frac{1}{\alpha\delta}) \right)$ \rowpad{1.3em}{-0.8em} \\ \hline
Adaptive algorithm, unknown $\theta_0, \theta_1$ (Theorem~\ref{unknown_epsilon}) & $\frac{c\log\left(\log\left(\frac{1}{(\theta_1 - \theta_0)^2}\right) / \delta\right)}{\alpha(\theta_1-\theta_0)^2}$ \rowpad{2.2em}{-0.8em} \\ \hline
Adaptive algorithm, unknown $\alpha$ (Theorem~\ref{unknown_alpha}) & $\frac{c\log(\log(1 / \alpha)/\delta)}{\alpha(\theta_1-\theta_0)^2}$ \rowpad{1.3em}{-0.8em} \\ \hline
Adaptive algorithm, unknown $\alpha, \theta_0, \theta_1$ (Theorem~\ref{unknown_all}) & $\frac{c\log\left(\frac{1}{\alpha(\theta_1 - \theta_0)^2}\right)\log\left(\log\left(\frac{1}{\alpha(\theta_1 - \theta_0)^2}\right) / \delta\right)}{\alpha(\theta_1 - \theta_0)^2}$ \rowpad{2.5em}{-1em} \\ \hline
\end{tabular}
\end{center}
\vspace{-0.5cm}
\caption{Upper bounds on the expected sample complexity of algorithms that identify a heavy coin with probability at least $1-\delta$ under different states of prior knowledge. Recall the \emph{fixed} algorithm samples each coin exactly $m$ times, for some fixed $m \in \mathbb{N}$. Also note that the algorithms of Section~\ref{sec:upper_bounds} apply to general distributions beyond just coins.}
\label{tab:upper-bounds}
\end{table}
\vspace{-0.8cm}

\section{Upper bounds and algorithms}\label{sec:upper_bounds}
Above we presented lower bounds on the difficulty of identifying a heavy distribution. 
In this section we prove the existence of algorithms that nearly match the lower bounds, even with only partial side knowledge.
Table~\ref{tab:upper-bounds} summarizes the algorithms and their bounds. 
Our main result in Section~\ref{subsec:upper_adaptive_unknown} is Theorem~\ref{unknown_all} which describes the performance of an algorithm that has no prior knowledge of the parameters $\alpha,\theta_0,\theta_1$ yet yields an upper bound that matches the lower bound of Theorem~\ref{lower_adaptive_known} up to logarithmic factors.
In what follows, we assume that samples from heavy or light distributions are supported on $[0, 1]$, and that drawn samples are independent and unbiased estimators of the mean, i.e., $\E[ X_{i,j} ] = \mu_i$ for $\mu_i \in \{\theta_0, \theta_1\}$. All results can be easily extended to sub-Gaussian distributions.
We begin with a fixed sample strategy and then turn our attention to adaptive sampling procedures.


\subsection{Fixed sample strategy for known $\alpha,\theta_0,\theta_1$}
\label{subsec:upper_fixed_known}
A lower bound on $\alpha$ tells us how many distributions we must consider and knowledge of the difference $(\theta_1-\theta_0)$ tells us how many times we should sample each distribution. The below theorem comes within a $\log(1/\delta)$ factor of the lower bound proved in Corollary~\ref{known_all_lower} in general and is tight when $\alpha \leq \delta$.

\begin{theorem}[Fixed sample size, known $\alpha$ and $\theta_0,\theta_1$]\label{fixed_upper_known}
Fix $\delta \in (0,1/4)$ and set  $\widehat{n} = \left\lceil\tfrac{1}{\alpha} \log( \tfrac{2}{\delta} )\right\rceil$ and $m = \left\lceil\frac{2 \log( 4 \widehat{n} / \delta )}{(\theta_1-\theta_0)^2}\right\rceil$. There exists a fixed sample size strategy with stopping time $N_m \leq \widehat{n}$ that is $\delta$-probably correct and satisfies
\begin{align*}
\E[ m N_m ] \leq  3 \frac{  \log( 1 /\alpha)+ \log(12 \log( 6 / \delta ) /\delta)}{\alpha (\theta_1-\theta_0)^2} \leq 12 \frac{ \log( \tfrac{2}{\delta \alpha} ) }{\alpha (\theta_1-\theta_0)^2}.
\end{align*}
\end{theorem}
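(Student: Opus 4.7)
\textbf{Proof plan for Theorem~\ref{fixed_upper_known}.} The plan is to analyze the natural fixed-sample procedure that, for each new coin drawn from the bag, flips it exactly $m$ times, declares it heavy if the empirical mean $\widehat{\mu}$ exceeds the threshold $(\theta_0+\theta_1)/2$, and otherwise discards it and draws another coin. We give up and output arbitrarily if we have examined $\widehat{n}$ coins without a declaration (this event will be absorbed into the failure budget). So the proof proceeds in two parts: (i) show the probability of outputting a light coin or failing to output anything is at most $\delta$, and (ii) bound $\E[mN_m]$.

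\emph{Correctness.} Decompose the failure event into two parts: (a) no heavy coin appears among the first $\widehat{n}$ draws, and (b) at least one of the first $\widehat{n}$ coins is misclassified by its $m$-flip Hoeffding test. For (a), since $\xi_1,\ldots,\xi_{\widehat{n}}$ are i.i.d.\ $\mathrm{Bernoulli}(\alpha)$,
\begin{align*}
\P\bigl(\textstyle\bigcap_{i=1}^{\widehat n} \{\xi_i=0\}\bigr) = (1-\alpha)^{\widehat{n}} \leq e^{-\alpha \widehat{n}} \leq \delta/2,
\end{align*}
by choice of $\widehat{n}$. For (b), conditioned on the type of coin $i$, Hoeffding's inequality gives
$\P(|\widehat{\mu}_i - \mu_i| \geq (\theta_1-\theta_0)/2) \leq 2\exp(-m(\theta_1-\theta_0)^2/2) \leq \delta/(2\widehat{n})$
by the choice of $m$, so a union bound over the $\widehat{n}$ coins bounds (b) by $\delta/2$. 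Combining (a) and (b) via union bound yields total failure probability at most $\delta$; on the complementary event a heavy coin exists among the first $\widehat{n}$ and all coins are classified correctly, so the first coin declared heavy is genuinely heavy.

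\emph{Expected sample complexity.} Let $p$ denote the probability a freshly drawn coin is declared heavy after its $m$ flips. Writing $p \geq \alpha\,\P(\widehat{\mu}\geq \tfrac{\theta_0+\theta_1}{2}\mid \text{heavy}) \geq \alpha(1 - \delta/(2\widehat{n}))$ and using $\delta \leq 1/4$ to control the $(1-\delta/(2\widehat{n}))^{-1}$ factor by a small absolute constant, the stopping time $N_m$ is stochastically dominated by a $\mathrm{Geometric}(p)$ random variable truncated at $\widehat{n}$, giving $\E[N_m] \leq 1/p \leq 2/\alpha$. Hence $\E[mN_m] \leq 2m/\alpha$.

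\emph{Simplification.} Substituting $m \leq 1 + 2\log(4\widehat{n}/\delta)/(\theta_1-\theta_0)^2$ and $\widehat{n} \leq 1 + \alpha^{-1}\log(2/\delta)$, the numerator becomes a constant multiple of $\log(\widehat{n}/\delta) = \log(1/\alpha) + \log(\log(2/\delta)/\delta)$ plus lower-order terms, which yields the stated $3(\log(1/\alpha)+\log(12\log(6/\delta)/\delta))/(\alpha(\theta_1-\theta_0)^2)$; the final $12\log(2/(\delta\alpha))/(\alpha(\theta_1-\theta_0)^2)$ follows by using $\log\log(6/\delta)\leq \log(6/\delta)$ and absorbing constants. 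I expect the main obstacle to be purely bookkeeping, i.e., tracking the ceiling functions and the constants carefully so that the final simplification has the clean form claimed; the probabilistic content is just Hoeffding plus a geometric-stopping argument.
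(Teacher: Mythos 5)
Your proposal is correct and follows essentially the same route as the paper: the same fixed-$m$ threshold test at $(\theta_0+\theta_1)/2$ with cutoff $\widehat{n}$, the same two-event failure decomposition (no heavy coin among the first $\widehat{n}$ draws, plus a union-bounded Hoeffding misclassification event), each budgeted at $\delta/2$. The only difference is cosmetic: you bound $\E[N_m]$ by stochastic domination with a $\mathrm{Geometric}(p)$ variable where $p \geq \alpha(1-\delta/(2\widehat{n}))$, whereas the paper writes $\E[N_m]=\sum_n \P(N_m\geq n)$ and splits on whether a heavy coin has yet appeared — both yield $\E[N_m]\leq c/\alpha$ for a constant close to $3/2$, and the remaining work is the constant bookkeeping you already flagged.
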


\makeproof{FixedUpperKnownProof}{Let $\widehat{\mu}_i$ be the empirical mean of the $i$th distribution sampled $m$ times with mean $\mu_i \in \{\theta_{0},\theta_1\}$. Let $N$ be the minimum of $\widehat{n}$ and the first $i \in \mathbb{N}$ such that $\widehat{\mu}_i \geq \frac{\theta_0+\theta_1}{2}$. Declare distribution $N$ to be heavy. The total number of flips this procedure makes equals $m N$. 

Define the events 
\begin{align*}
\xi_1 = \bigcup_{i=1}^{\widehat{n}} \{ \mu_i = \theta_1 \} , \quad \text{ and } \quad \xi_2 = \bigcap_{i=1}^{\widehat{n}} \{ | \widehat{\mu}_i-\mu_i| < \tfrac{\theta_1-\theta_0}{2} \} .
\end{align*}
Note that $\P( \xi_1^c ) = \P( \mu_1 = \theta_0 )^{\widehat{n}} = (1-\alpha)^{\widehat{n}} \leq \exp( - \alpha \widehat{n}) \leq \delta/2$. And, by a union bound and Chernoff's inequality $\P\left( \xi_2^c \right) \leq 2 \widehat{n} e^{-m (\theta_1-\theta_0)^2 / 2} \leq \delta/2$. Thus, the probability that $\xi_1$ or $\xi_2$ fail to occur is less than $\delta$, so in what follows assume they succeed. 

Under $\xi_1$ at least one of the $\widehat{n}$ distributions is heavy.
Under $\xi_2$, for any $i \in [\widehat{n}]$ with $\mu_i = \theta_0$ we have $\hat{\mu}_i < \mu_i + \frac{\theta_1-\theta_0}{2} = \frac{\theta_0+\theta_1}{2}$ which implies that the procedure will never exit with a light distribution unless $N = \widehat{n}$. On the other hand, for the first $i \in [\widehat{n}]$ with $\mu_i=\theta_1$ we have $\hat{\mu}_i > \mu_i - \frac{\theta_1-\theta_0}{2} = \frac{\theta_0+\theta_1}{2}$ which means the algorithm will output distribution $i$ at time $N=i$. Thus, $N$ is equal to the first distribution that is heavy and
\begin{align*}
\E[ N ] &= \sum_{n=1}^{\widehat{n}} \P( N \geq n ) =  \sum_{n=1}^{\widehat{n}} \P( N \geq n,  \max_{i=1,\dots,n-1} \mu_i \neq \theta_1 ) + \P( N \geq n,  \max_{i=1,\dots,n-1} \mu_i = \theta_1 )  \\
&\leq  \sum_{n=1}^{\widehat{n}} \P(  \max_{i=1,\dots,n-1} \mu_i \neq \theta_1 ) + \P( \cup_{i=1}^{n-1} \{ | \widehat{\mu}_i-\mu_i| > \tfrac{\theta_1-\theta_0}{2} \} | \max_{i=1,\dots,n-1} \mu_i = \theta_1 ) \P(\max_{i=1,\dots,n-1} \mu_i = \theta_1) \\
&\leq  \sum_{n=1}^{\widehat{n}} \P(  \max_{i=1,\dots,n-1} \mu_i \neq \theta_1 ) + \P( \cup_{i=1}^{n-1} \{ | \widehat{\mu}_i-\mu_i| > \tfrac{\theta_1-\theta_0}{2} \}  )  \\
&\leq  \sum_{n=1}^{\widehat{n}} (1-\alpha)^{n-1} + \tfrac{n-1}{\widehat{n}}\frac{\delta}{2}  \leq \frac{1}{\alpha} + \widehat{n} \delta /4 = \frac{1}{\alpha} ( 1+ \tfrac{\delta \log(2e/\delta)}{4} ) \leq \frac{3/2}{\alpha}.
\end{align*}
Multiplying $\E[N]$ by $m$ yields the result.}

\subsection{Fully adaptive strategies when $\alpha$ and/or $\theta_0,\theta_1$ are known}
\label{subsec:upper_adaptive_known}
While the previous section considered a strategy that takes a constant number of samples from each distribution, this section allows the procedure to determine the number of times to sample a particular distribution adaptively based on the samples from that distribution.
This section also shows that there exist simple procedures that adapt to the case when only a subset of $\alpha,\theta_0,\theta_1$ are known using just a small number of samples more than if they had been known.

Consider Algorithm~\ref{alg:upper-adaptive-bounded}, an SPRT-like procedure for finding a heavy distribution given $\delta$ and lower bounds on $\alpha$ and $\epsilon$.

\begin{algorithm}
\begin{framed}
\textbf{Given} $\delta \in (0, 1/4), \alpha_0 \in (0, 1/2), \epsilon_0 \in (0, 1)$. \\
\textbf{Initialize} $n=\lceil2\log(9)/\alpha_0\rceil, m = \lceil64\epsilon_0^{-2}\log(14n/\delta)\rceil, A = -8\epsilon_0^{-1}\log(21)$,\\ 
\forceindent $B = 8\epsilon_0^{-1}\log(14n/\delta), k_1=5, k_2=\lceil8\epsilon_0^{-2}\log(2k_1/\min\{\delta/8, m^{-1}\epsilon_0^{-2}\})\rceil$. \\
\textbf{Draw} $k_1$ distributions and sample them each $k_2$ times.\\
\textbf{Estimate} $\widehat{\theta}_0 = \min_{i = 1,\ldots,k_1} \widehat{\mu}_{i, k_2}, \hat{\gamma} = \widehat{\theta}_0 + \epsilon_0/2$.\\
\textbf{Repeat} for $i = 1, \ldots, n$: \\
    \forceindent \textbf{Draw} distribution $i$. \\
    \forceindent \textbf{Repeat} for $j = 1, \ldots, m$: \\
        \forceindent\forceindent\textbf{Sample} distribution $i$ and observe $X_{i,j}$. \\
        \forceindent\forceindent\textbf{If} $\sum_{k=1}^j (X_{i,k} - \hat{\gamma}) > B$:\\ 
            	\forceindent\forceindent\forceindent \textbf{Declare} distribution $i$ to be heavy and \textbf{Output} distribution $i$. \\
	\forceindent\forceindent\textbf{Else if} $\sum_{k=1}^j (X_{i,k} - \hat{\gamma}) < A$:\\
		\forceindent\forceindent\forceindent \textbf{break}. \\
\textbf{Output} \texttt{null}.
\end{framed}
\caption{Adaptive strategy for heavy distribution identification with inputs $\alpha_0,\epsilon_0, \delta$}
\label{alg:upper-adaptive-bounded}
\end{algorithm}

\begin{theorem}\label{upper-adaptive-bounded}
If Algorithm~\ref{alg:upper-adaptive-bounded} is run with $\delta \in (0, 1/4), \alpha_0 \in (0, 1/2), \epsilon_0 \in (0, 1)$, then the expected number of total samples taken by the algorithm is no more than
\begin{align*}
 \frac{c' \alpha \log(1/\alpha_0) + c''\log\left(\frac{1}{\delta}\right)}{\alpha_0\epsilon_0^2}
\end{align*}
for some absolute constants $c'$,$c''$, and all of the following hold: 1) with probability at least $1 - \delta$, a light distribution is not returned, 2) if $\epsilon_0 \leq \theta_1 - \theta_0$ and $\alpha_0 \leq \alpha$, then with probability $\frac{4}{5}$ a heavy distribution is returned, and 3) the procedure takes no more than $ \frac{c\log(1/(\alpha_0\delta))}{\alpha_0\epsilon_0^2}$ total samples.
\end{theorem}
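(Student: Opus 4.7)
The plan is to treat Algorithm~\ref{alg:upper-adaptive-bounded} as a randomized-threshold SPRT: phase 1 (the $k_1, k_2$ draws) produces a reference level $\hat\gamma$ that lies slightly above $\theta_0$ with high probability, and phase 2 runs an independent per-distribution sequential test with reference $\hat\gamma$. The analysis reduces to conditioning on calibration events for $\hat\gamma$ and then applying a single exponential-supermartingale / Doob bound to each main-phase test.

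Define the calibration events (a) $|\widehat{\mu}_{i,k_2} - \mu_i| \le \epsilon_0/4$ for every $i \le k_1$, and (b) at least one of the $k_1 = 5$ initial draws is light. Hoeffding and the choice of $k_2$ give $\Pr(\text{(a) fails}) \le 2k_1 e^{-k_2 \epsilon_0^2/8} \le \min\{\delta/8,\, (m\epsilon_0^2)^{-1}\}$, while $\Pr(\text{(b) fails}) = \alpha^{k_1} \le 2^{-5} = 1/32$. On (a) alone, $\widehat{\theta}_0 = \min_i \widehat{\mu}_{i,k_2} \ge \theta_0 - \epsilon_0/4$, so $\hat\gamma \ge \theta_0 + \epsilon_0/4$ and every light distribution has per-sample drift $\mu_i - \hat\gamma \le -\epsilon_0/4$. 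When both (a) and (b) hold, the minimum is additionally attained (or matched below) by a light sample mean, so $\hat\gamma \le \theta_0 + 3\epsilon_0/4$; assuming $\epsilon_0 \le \theta_1 - \theta_0$, this gives every heavy distribution drift at least $\epsilon_0/4$ in the opposite direction.

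Condition on $\hat\gamma$ and fix a main-phase distribution $i$. Because $X_{i,k} \in [0,1]$, the Hoeffding MGF bound $\mathbb{E}[e^{\lambda(X - \mu_i)}] \le e^{\lambda^2/8}$ makes $e^{\pm 2\epsilon_0 S_j}$ a nonnegative supermartingale whenever the drift has the opposite sign, and Doob's maximal inequality yields three tail bounds: (L) a light distribution falsely crosses $B$ with probability at most $e^{-2\epsilon_0 B} = (14n/\delta)^{-16}$; (H1) a heavy distribution falsely crosses $A$ with probability at most $e^{2\epsilon_0 A} = 21^{-16}$; (H2) by Hoeffding and $m\epsilon_0/4 \ge 2B$, a heavy distribution fails to cross $B$ within $m$ samples with probability at most $e^{-2B^2/m} \le (14n/\delta)^{-2}$. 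Claim (1) follows from $\Pr(\text{return light}) \le \Pr(\text{(a) fails}) + n(14n/\delta)^{-16} \le \delta$, using only (a) and (L). For claim (2), assume $\alpha_0 \le \alpha$ so $n \ge 2\log 9/\alpha$ and $\Pr(\text{no heavy among the } n) \le (1-\alpha)^n \le 1/81$; on the intersection of (a), (b), ``some heavy appears in the main loop,'' and ``no SPRT misfire across the $n$ tests,'' the first heavy distribution is correctly declared, and summing complements bounds the failure probability by $\delta/8 + 1/32 + 1/81 + n(21^{-16} + (14n/\delta)^{-2} + (14n/\delta)^{-16}) < 1/5$. Claim (3) is immediate since the deterministic maximum is $k_1 k_2 + nm = O(\alpha_0^{-1} \epsilon_0^{-2} \log(1/(\alpha_0\delta)))$.

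For the expected sample complexity, Wald's identity on the stopped random walk $S_j$ (valid after conditioning on $\hat\gamma$ via (a)) gives expected per-distribution sample counts of $O(|A|/\epsilon_0) = O(\epsilon_0^{-2})$ for lights and $O(B/\epsilon_0) = O(\epsilon_0^{-2} \log(n/\delta))$ for the (at most one) stopping heavy. The number of lights processed is stochastically dominated by $\min\{\mathrm{Geom}(\alpha), n\}$ with expectation $\min\{1/\alpha, n\} \le 1/\alpha_0$, and the trivial fallback $nm$ on the complement of (a) contributes only $\Pr(\text{(a) fails}) \cdot nm \le n/\epsilon_0^2$ courtesy of the $(m\epsilon_0^2)^{-1}$ branch in the definition of $k_2$. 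Combining these with $n = \Theta(1/\alpha_0)$ and $k_1 k_2 = O(\epsilon_0^{-2} \log(1/\delta))$ recovers the claimed $\tfrac{c'\alpha \log(1/\alpha_0) + c'' \log(1/\delta)}{\alpha_0 \epsilon_0^2}$ bound. The main non-routine difficulty is precisely the coupling between the random $\hat\gamma$ produced by phase 1 and the phase-2 SPRT, which is why $k_2$ is chosen so that $\Pr(\text{(a) fails}) \le (m\epsilon_0^2)^{-1}$: this ensures that the bad-event contribution to the expectation never overwhelms the main $\log(1/\delta)/(\alpha_0 \epsilon_0^2)$ term.
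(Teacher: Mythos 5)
Your overall architecture matches the paper's: the same two calibration events for $\hat\gamma$ (your (a) and (b) are the paper's $\xi_0$ and $\mathcal{E}$, including the same $\delta' = \min\{\delta/8,(m\epsilon_0^2)^{-1}\}$ device for keeping the bad-calibration contribution to the expectation at $O(n/\epsilon_0^2)$), Wald's identity for the expected sample count, and the trivial $k_1k_2+nm$ bound for claim (3). Where you differ methodologically is the concentration tool: you run the exponential supermartingale $e^{\pm 2\epsilon_0 S_j}$ through Ville's/Doob's maximal inequality to control the infinite-horizon boundary crossings, whereas the paper proves a separate peeling lemma (Lemma~\ref{linear_concentration}) giving $\P\left(\exists n:\ \sum_{i\le n} X_i \geq \alpha n+\beta\right)\le 7e^{-\alpha\beta/2}$. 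Your route is cleaner and gives sharper exponents (e.g.\ $(14n/\delta)^{-16}$ versus the paper's $7\delta/(14n)$ for a light distribution crossing $B$), and your Wald-based $O(|A|/\epsilon_0)$ bound on the light-arm stopping time replaces the paper's explicit tail sum; both are sound.

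There is, however, one step that fails as written, in claim (2). You bound the probability of ``some SPRT misfire across the $n$ tests'' by $n\bigl(21^{-16}+(14n/\delta)^{-2}+(14n/\delta)^{-16}\bigr)$. The second and third terms decay in $n$, but $n\cdot 21^{-16}$ grows linearly in $n=\lceil 2\log(9)/\alpha_0\rceil$ and exceeds $1/5$ once $\alpha_0 \lesssim 10^{-20}$; since the theorem is claimed for all $\alpha_0\in(0,1/2)$, your final chain of inequalities ending in $<1/5$ is not valid uniformly. The repair is that you do not need all $n$ heavy tests to succeed: conditional on reaching the first heavy distribution in the loop (and on no earlier light distribution crossing $B$, which your $(14n/\delta)^{-16}$ union bound does cover), only that single test must avoid crossing $A$ and must cross $B$ within $m$ samples, so the heavy-misfire probability $21^{-16}+(14n/\delta)^{-2}$ enters once, not $n$ times. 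The paper sidesteps this by instead computing $\P\bigl(\bigcup_{i\le n}\{\mu_i=\theta_1,\ \text{test $i$ succeeds}\}\bigr)\ge 1-(1-\alpha p)^n\ge 1-e^{-\alpha n p}$ with per-heavy success probability $p\ge 1/2$, which is where its constants $2\log(9)$ and $8/9$ come from. With that one change your argument goes through.
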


\makeproof*{KnownAllProof}{

First, we prove several technical lemmas necessary to analyze our algorithm.

\begin{lemma} \label{linear_concentration}
For $i \in \mathbb{N}$, let $X_i \in [a_i,b_i]$ for $|b_i-a_i| \leq 1$ be a random variable with $\E[X_i]=0$. Then 
\begin{align*}
\P\left( \bigcup_{n=1}^\infty \left\{ \sum_{i=1}^n X_i \geq \alpha n + \beta \right\} \right) \leq 7\exp(- \alpha\beta /2 )
\end{align*}
whenever $\alpha\beta \geq 1$.
\end{lemma}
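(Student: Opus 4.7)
The plan is to construct an exponential supermartingale and apply Doob's maximal inequality, which converts a pointwise Chernoff-style tail bound into a tail bound on the supremum over all $n$. First I would invoke Hoeffding's lemma to obtain $\E[e^{\lambda X_i}] \leq e^{\lambda^2/8}$ for every $\lambda \in \R$, using only that $\E[X_i]=0$ and $X_i \in [a_i,b_i]$ with $b_i-a_i \leq 1$.

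Letting $S_n = \sum_{i=1}^n X_i$, I would define $M_n = \exp(\lambda S_n - n\lambda^2/8)$. Independence of the $X_i$ together with the Hoeffding bound makes $(M_n)_{n\geq 0}$ a nonnegative supermartingale with $\E[M_0]=1$. The key observation is that on the event $\{S_n \geq \alpha n+\beta\}$ we have
\[ M_n \;\geq\; \exp\bigl(n(\lambda\alpha - \lambda^2/8) + \lambda\beta\bigr), \]
so for any $\lambda \in (0,8\alpha]$ the $n$-dependent exponent is nonnegative and $M_n \geq e^{\lambda\beta}$ on that event, uniformly in $n$.

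Doob's maximal inequality for nonnegative supermartingales then yields $\P(\sup_n M_n \geq e^{\lambda\beta}) \leq e^{-\lambda\beta}$, which gives
\[ \P\Bigl(\bigcup_{n=1}^\infty \{S_n \geq \alpha n+\beta\}\Bigr) \;\leq\; e^{-\lambda\beta}. \]
Taking $\lambda = \alpha$ yields $e^{-\alpha\beta}$, which is trivially bounded by $7\,e^{-\alpha\beta/2}$, establishing the claim (with considerable slack in both the constant $7$ and the hypothesis $\alpha\beta \geq 1$).

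There is no real obstacle here; the only subtlety is recognizing that a single $\lambda$ can simultaneously cancel the linear-in-$n$ drift $\alpha n$ and leave a usable exponent $\lambda\beta$. The loose constant $7$ and the hypothesis $\alpha\beta \geq 1$ would arise more naturally from an alternative proof based on dyadic peeling of $\mathbb{N}$ into blocks $[2^k,2^{k+1})$ and union-bounding a one-sided Hoeffding or Doob bound on each block; in that approach $\alpha\beta \geq 1$ is exactly what is needed to make the resulting geometric sum $\sum_{k\geq 0} e^{-c\,2^k \alpha\beta}$ collapse to a small absolute constant.
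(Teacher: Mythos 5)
Your proof is correct, and it takes a genuinely different (and cleaner) route than the paper. The paper splits the union at an index $n_0$, controls the prefix $\{\sum_{i\leq n} X_i \geq \beta,\ n\leq n_0\}$ with the Doob--Hoeffding maximal inequality, handles the tail by dyadic peeling of $[2^k,2^{k+1})$ with a Hoeffding bound on each half-block, sums the resulting series $\sum_k \exp(-\alpha^2 2^k/2)$, and then optimizes over $n_0$; the constant $7$ and the hypothesis $\alpha\beta\geq 1$ are exactly the artifacts of that bookkeeping, as you anticipated. Your exponential supermartingale $M_n=\exp(\lambda S_n - n\lambda^2/8)$ combined with Ville's/Doob's inequality for nonnegative supermartingales replaces all of this with a single stroke: for any $\lambda\in(0,8\alpha]$ the drift term $n(\lambda\alpha-\lambda^2/8)$ is nonnegative on the bad event, so $\P(\cup_n\{S_n\geq \alpha n+\beta\})\leq e^{-\lambda\beta}$, and $\lambda=\alpha$ (or even $\lambda=4\alpha$, giving $e^{-4\alpha\beta}$) yields a bound that is strictly stronger than the stated one and needs neither the factor $7$ nor the restriction $\alpha\beta\geq 1$. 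The one hypothesis you should state explicitly, since the lemma as written omits it, is independence of the $X_i$ --- but the paper's own proof relies on it just as much, so you are on equal footing there. In short: same conclusion, different mechanism, and your version sharpens the lemma.
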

\begin{proof}
First we will break the bound into two pieces:
\begin{align*}
\P\left( \bigcup_{n=1}^\infty \left\{ \sum_{i=1}^n X_i \geq \alpha n + \beta \right\} \right)  \leq \min_{n_0} \P\left( \bigcup_{n=1}^{n_0} \left\{ \sum_{i=1}^{n} X_i \geq \beta \right\} \right)  + \P\left( \bigcup_{n=n_0+1}^\infty \left\{ \sum_{i=1}^n X_i \geq \alpha n \right\} \right) 
\end{align*}
where $\P\left( \bigcup_{n=1}^{n_0} \left\{ \sum_{i=1}^{n} X_i \geq \beta \right\} \right) \leq \exp( -2 \beta^2 / n_0 )$ by Doob-Hoeffding's maximal inequality.
For any fixed $k \in \mathbb{N}$:
\begin{align*}
\P\left( \sum_{i=1}^{2^k} X_i \geq \alpha 2^k/2  \right) &\leq \exp( - \alpha^2 2^{k} /2) 
\end{align*}
and
\begin{align*}
\P\left( \bigcup_{n=2^k+1}^{2^{k+1}} \left\{ \sum_{i=2^k+1}^n X_i \geq \alpha n / 2  \right\} \right) &\leq \P\left( \bigcup_{n=2^k+1}^{2^{k+1}} \left\{ \sum_{i=2^k+1}^n X_i \geq \alpha 2^k / 2  \right\} \right) \\
&= \P\left( \bigcup_{\ell=1}^{2^k} \left\{ \sum_{i=1}^\ell X_i \geq \alpha 2^k / 2  \right\} \right) \leq \exp( - \alpha^2 2^k /2)
\end{align*}
by Hoeffding's and Doob-Hoeffding's maximal inequality, respectively.
Thus 
\begin{align*}
\P\left( \bigcup_{n=n_0}^\infty \left\{ \sum_{i=1}^{n} X_i \geq \alpha n \right\} \right) &= \P\left( \bigcup_{n=n_0}^\infty \left\{ \sum_{i=1}^{2^{\lceil \log_2(n)\rceil}} X_i  + \sum_{i=2^{\lceil \log_2(n)\rceil} + 1}^{n} X_i \geq \alpha n \right\} \right)  \\
&=  \P\left( \bigcup_{k=\log_2(n_0)}^\infty \bigcup_{n=2^k+1}^{2^{k+1}} \left\{ \sum_{i=1}^{2^k} X_i  + \sum_{i=2^k + 1}^{n} X_i \geq \alpha n \right\} \right)  \\
&\leq  \sum_{k=\log_2(n_0)}^\infty \P\left(  \sum_{i=1}^{2^k} X_i \geq \alpha 2^k/2 \right) + \P\left( \bigcup_{n=2^k+1}^{2^{k+1}} \left\{  \sum_{i=2^k + 1}^{n} X_i \geq \alpha n/2 \right\} \right)  \\
&\leq \sum_{k=\log_2(n_0)}^\infty 2 \exp( - \alpha^2 2^{k} /2) \leq 2 \int_{\log_2(n_0)}^\infty \exp(-(\alpha/2)^2 2^x) dx \\
&= \frac{2}{\log(2)} \int_{n_0}^\infty u^{-1} \exp( -(\alpha/2)^2 u) du \leq \frac{8 \exp(-(\alpha/2)^2 n_0)}{n_0 \alpha^2 \log(2)}. 
\end{align*}
Putting the pieces together we have
\begin{align*}
\P\left( \bigcup_{n=1}^\infty \left\{ \sum_{i=1}^n X_i \geq \alpha n + \beta \right\} \right)  &\leq \min_{n_0} \exp( -2 \beta^2 / n_0 ) + \frac{8 \exp(-(\alpha/2)^2 n_0)}{n_0 \alpha^2 \log(2)} \\
&\leq \exp( -\beta \alpha ) + \frac{4 \exp(- \beta \alpha/2 )}{\beta \alpha \log(2)} \leq 7\exp(-\beta \alpha /2 )
\end{align*}
where the last inequality holds with $\beta\alpha \geq 1$.
\end{proof}

\begin{lemma}\label{upper_adaptive_bounded_lemma_theta1}
Given $\theta_1-\hat{\gamma} \ge \frac{2B}{m}$,
\begin{align*}
&\P\left(\max_{j=1,\ldots,m}\sum_{s=1}^j (X_{i,s} - \hat{\gamma}) > B\big|\mu_i = \theta_1\right) \geq 1 - \exp\left(-m(\theta_1 - \hat{\gamma})^2/2\right).
\end{align*}
Similarly, given $\hat{\gamma} - \theta_0 \ge \frac{2|A|}{m}$,
\begin{align*}
&\P\left(\min_{j=1,\ldots,m}\sum_{s=1}^j (X_{i,s} - \hat{\gamma}) < A\big|\mu_i = \theta_0\right) \geq 1 - \exp\left(-m(\hat{\gamma} - \theta_0)^2/2\right).
\end{align*}
\end{lemma}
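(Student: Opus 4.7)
The plan is to prove both halves via the same two-line argument: replace the maximum (or minimum) over $j$ by the single value at $j=m$, then apply Hoeffding's inequality, using the assumed lower bound on $\theta_1 - \hat\gamma$ (resp.\ upper bound on $\theta_0 - \hat\gamma$) to show that the deviation we need is at most half of the mean.

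For the first claim, the starting observation is the trivial monotonicity
\begin{align*}
\P\Bigl(\max_{j=1,\dots,m}\sum_{s=1}^{j}(X_{i,s}-\hat\gamma)>B \,\Big|\, \mu_i=\theta_1\Bigr)
\;\geq\; \P\Bigl(\sum_{s=1}^{m}(X_{i,s}-\hat\gamma)>B \,\Big|\, \mu_i=\theta_1\Bigr).
\end{align*}
Conditioned on $\mu_i=\theta_1$, the sum $S_m:=\sum_{s=1}^{m}(X_{i,s}-\hat\gamma)$ has mean $m(\theta_1-\hat\gamma)$, and each centered summand lies in an interval of width at most $1$ (since $X_{i,s}\in[0,1]$). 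The hypothesis $\theta_1-\hat\gamma\geq 2B/m$ gives $m(\theta_1-\hat\gamma)-B\geq m(\theta_1-\hat\gamma)/2$, so Hoeffding's inequality applied to $-S_m$ yields
\begin{align*}
\P(S_m\leq B)
\;\leq\; \P\!\left(S_m-m(\theta_1-\hat\gamma)\leq -\tfrac{m(\theta_1-\hat\gamma)}{2}\right)
\;\leq\; \exp\!\left(-\tfrac{m(\theta_1-\hat\gamma)^2}{2}\right).
\end{align*}
Taking the complement gives the first bound.

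For the second claim, the argument is symmetric: $\min_{j\leq m}\sum_{s=1}^{j}(X_{i,s}-\hat\gamma)\leq S_m$, so it suffices to show $\P(S_m<A\mid \mu_i=\theta_0)\geq 1-\exp(-m(\hat\gamma-\theta_0)^2/2)$. Writing $d=\hat\gamma-\theta_0\geq 2|A|/m$, the mean of $S_m$ is $-md$, and the assumption gives $md-|A|\geq md/2$. Hoeffding's inequality applied to $S_m$ gives
\begin{align*}
\P(S_m\geq A)
\;=\; \P\!\left(S_m+md\geq md-|A|\right)
\;\leq\; \exp\!\left(-\tfrac{m(\hat\gamma-\theta_0)^2}{2}\right),
\end{align*}
which is exactly what is needed.

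I do not expect any genuine obstacle here: the lemma is the standard ``the terminal value already witnesses the crossing event'' reduction, and the choice of constant $2$ in the hypothesis $\theta_1-\hat\gamma\geq 2B/m$ is precisely what is required to absorb the barrier $B$ into half of the mean so that Hoeffding delivers the stated exponent. The only mild care is to note that even though $A<0$, the displayed exponent depends on $(\hat\gamma-\theta_0)^2$, which is handled automatically by parameterizing by $d=\hat\gamma-\theta_0>0$.
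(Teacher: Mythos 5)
Your proof is correct and follows essentially the same route as the paper's: bound the maximum (resp.\ minimum) of the partial sums by the terminal value at $j=m$, apply Hoeffding's inequality to the centered sum, and use the hypothesis $\theta_1-\hat\gamma\geq 2B/m$ (resp.\ $\hat\gamma-\theta_0\geq 2|A|/m$) to reduce the deviation to half the mean, which yields exactly the exponent $m(\theta_1-\hat\gamma)^2/2$. The constants check out and no further comment is needed.
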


\begin{proof}
We analyze the left hand side of the lemma:
\begin{align*}
\P&\left(\max_{j=1,\ldots,m}\sum_{s=1}^j (X_{i,s} - \hat{\gamma}) > B\bigg| \mu_i = \theta_1 \right) \\
&= \P\left(\bigcup_{j=1}^m\left\{\sum_{s=1}^j (X_{i,s} - \hat{\gamma}) > B\bigg| \mu_i = \theta_1 \right\}\right) \\
&\geq \P\left(\sum_{s=1}^m (X_{i,s} - \hat{\gamma}) > B\bigg| \mu_i = \theta_1 \right) \\
&= 1 - \P\left(\frac{1}{m}\sum_{s=1}^m (X_{i,s} - \mu_i) \leq \frac{B}{m} - (\mu_i - \hat{\gamma})\bigg| \mu_i = \theta_1 \right) \\
&= 1 - \P\left(\frac{1}{m}\sum_{s=1}^m (\mu_i - X_{i,s}) \geq (\mu_i - \hat{\gamma}) - \frac{B}{m} \bigg| \mu_i = \theta_1 \right) \\
&\geq 1 - \exp\left(-2m\left[(\theta_1 - \hat{\gamma}) - \frac{B}{m}\right]^2\right) \\
&\geq 1 - \exp\left(\frac{-m(\theta_1 - \hat{\gamma})^2}{2}\right)
\end{align*}
Where the second to last statement holds by Hoeffding's inequality, and the last uses the bound on $B/m$ given in the lemma. A nearly identical argument yields the second half of the lemma.
\end{proof}

\begin{lemma}\label{lemma:upper-adaptive-bounded-correct}
If $\theta_1 - \hat{\gamma} \ge \frac{2B}{m}$ then
\begin{align*}
\P\left(\bigcup_{i=1}^n \left\{ \mu_i = \theta_1 , \max_{j=1,\ldots,m}\sum_{s=1}^j (X_{i,s} - \hat{\gamma}) > B, \min_{j=1,\ldots,m}\sum_{s=1}^j (X_{i,s} - \hat{\gamma}) > A \right\} \right) \\
\geq 1- \exp\left[ - \alpha n (1 - \exp\left(-B(\theta_1 - \hat{\gamma})\right)  - 7 \exp( -|A|(\hat{\gamma} - \theta_0)/2 )) \right]
\end{align*}
\end{lemma}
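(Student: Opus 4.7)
My plan is to decompose the event distribution-by-distribution, lower-bound the per-distribution ``good'' probability, and then aggregate over the $n$ independent distributions.

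First I would observe that the $n$ distributions are independent, so if I let
\[
p := \P\left(\mu_i = \theta_1,\ \max_{j\le m}\sum_{s=1}^j(X_{i,s}-\hat\gamma) > B,\ \min_{j\le m}\sum_{s=1}^j(X_{i,s}-\hat\gamma) > A\right)
\]
for any single $i$, then the probability the lemma asks about equals $1-(1-p)^n \ge 1-\exp(-np)$. So it suffices to show
\[
p \ge \alpha\left(1 - \exp(-B(\theta_1-\hat\gamma)) - 7\exp(-|A|(\hat\gamma-\theta_0)/2)\right).
\]
Since $\mu_i = \theta_1$ with probability $\alpha$ independently of the samples, I would factor $p = \alpha \cdot \P(\,\cdot\, \mid \mu_i = \theta_1)$ and bound the conditional probability via a union bound:
\[
\P(\text{both events hold} \mid \mu_i=\theta_1) \ge 1 - \P(\text{no upcrossing of }B\mid\theta_1) - \P(\text{downcrossing of }A\mid\theta_1).
\]

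For the first term I would use the hypothesis $\theta_1 - \hat\gamma \ge 2B/m$. Writing $S_j = \sum_{s=1}^j(X_{i,s}-\hat\gamma)$, under $\mu_i=\theta_1$ the drift per step is $\theta_1-\hat\gamma$, so the endpoint $\E[S_m] = m(\theta_1-\hat\gamma) \ge 2B$. Hoeffding's inequality on $S_m$ gives
\[
\P(S_m \le B \mid \theta_1) \le \exp\!\left(-\tfrac{2(m(\theta_1-\hat\gamma)-B)^2}{m}\right) \le \exp\!\left(-\tfrac{m(\theta_1-\hat\gamma)^2}{2}\right) \le \exp(-B(\theta_1-\hat\gamma)),
\]
where the last step uses $m(\theta_1-\hat\gamma)/2 \ge B$. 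Since no upcrossing implies $S_m \le B$, this bounds the first error term.

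For the second (downcrossing) term, I would apply Lemma~\ref{linear_concentration}. Let $Z_s = \theta_1 - X_{i,s}$, so $Z_s\in[-1,1]$ has $\E Z_s = 0$ given $\mu_i=\theta_1$. The event $\min_j S_j < A$ is equivalent to $\max_j \sum_{s=1}^j(\hat\gamma-X_{i,s}) > |A|$, i.e.\ $\max_j\left[\sum_{s=1}^j Z_s - j(\theta_1-\hat\gamma)\right] > |A|$, i.e.\ $\bigcup_j\{\sum_{s=1}^j Z_s > (\theta_1-\hat\gamma)j + |A|\}$. Invoking Lemma~\ref{linear_concentration} with slope $\theta_1-\hat\gamma$ and offset $|A|$ yields a bound of the form $7\exp(-|A|(\theta_1-\hat\gamma)/2)$. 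Since by construction (using $\hat\gamma = \hat\theta_0+\epsilon_0/2$ and the concentration of $\hat\theta_0$ around $\theta_0$ together with $\epsilon_0\le \theta_1-\theta_0$) we have $\theta_1-\hat\gamma \ge \hat\gamma-\theta_0$, this is at most $7\exp(-|A|(\hat\gamma-\theta_0)/2)$, matching the target.

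The main obstacle I anticipate is the asymmetry between the two exponents, namely justifying replacing $\theta_1-\hat\gamma$ by $\hat\gamma-\theta_0$ in the downcrossing bound; this needs the relationship between $\hat\gamma$, $\hat\theta_0$, and the true parameters that is ensured by the outer algorithm's initialization (this presumably is covered under a high-probability event for $\hat\theta_0$, or can be absorbed without loss by the slack in the condition $\theta_1-\hat\gamma \ge 2B/m$). Combining the two conditional error bounds, multiplying by $\alpha$, and aggregating via $(1-\alpha\cdot p_{\text{cond}})^n \le \exp(-\alpha n p_{\text{cond}})$ completes the proof.
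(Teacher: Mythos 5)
Your proposal follows the paper's proof essentially step for step: the same aggregation $1-(1-p)^n \ge 1-e^{-np}$ over the $n$ independent distributions, the same factoring out of $\alpha$, the same union bound over the two bad events, Hoeffding on the endpoint $S_m$ under the hypothesis $\theta_1-\hat{\gamma}\ge 2B/m$ for the upcrossing (which the paper packages as Lemma~\ref{upper_adaptive_bounded_lemma_theta1}), and Lemma~\ref{linear_concentration} for the downcrossing. The one obstacle you flag is real but is present in the paper's own proof as well: conditioned on $\mu_i=\theta_1$, Lemma~\ref{linear_concentration} naturally yields the exponent $|A|(\theta_1-\hat{\gamma})/2$, and the inequality $\theta_1-\hat{\gamma}\ge\hat{\gamma}-\theta_0$ needed to pass to $|A|(\hat{\gamma}-\theta_0)/2$ is not guaranteed (under the good events one only has $\hat{\gamma}-\theta_0\in[\epsilon_0/4,3\epsilon_0/4]$ and $\theta_1-\hat{\gamma}\ge\epsilon_0/4$); the paper silently writes the $\hat{\gamma}-\theta_0$ form, and the discrepancy is harmless downstream because the proof of Theorem~\ref{upper-adaptive-bounded} only uses that both gaps are at least $\epsilon_0/4$, so the lemma could equally be stated with $\theta_1-\hat{\gamma}$ in the second exponent.
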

\begin{proof}
Consider iid events $\Omega_i$ for $i=1,\dots,n$. Then $\P(\bigcup_{i=1}^n \Omega_i ) = 1-\P( \bigcap_{i=1}^n \Omega_i^c ) = 1 - \P( \Omega_1^c )^n = 1-(1-\P(\Omega_i))^n \geq 1- \exp(- n\P(\Omega_i) )$. We follow the same line of reasoning:
\begin{align*}
\P&\left(\bigcup_{i=1}^n \left\{ \mu_i = \theta_1 , \max_{j=1,\ldots,m}\sum_{s=1}^j (X_{i,s} - \hat{\gamma}) > B, \min_{j=1,\ldots,m}\sum_{s=1}^j (X_{i,s} - \hat{\gamma}) > A \right\} \right) \\
&= 1 - \left( 1- \P\left(\mu_i = \theta_1 , \max_{j=1,\ldots,m}\sum_{s=1}^j (X_{i,s} - \hat{\gamma}) > B, \min_{j=1,\ldots,m}\sum_{s=1}^j (X_{i,s} - \hat{\gamma}) > A   \right) \right)^n \\
&= 1 - \left( 1- \alpha \P\left( \max_{j=1,\ldots,m}\sum_{s=1}^j (X_{i,s} - \hat{\gamma}) > B, \min_{j=1,\ldots,m}\sum_{s=1}^j (X_{i,s} - \hat{\gamma}) > A \Big| \mu_i = \theta_1  \right) \right)^n \\
&= 1 - \left( 1- \alpha \left( 1 - \P\left( \max_{j=1,\ldots,m}\sum_{s=1}^j (X_{i,s} - \hat{\gamma}) < B \Big| \mu_i = \theta_1  \right) - \P\left(\min_{j=1,\ldots,m}\sum_{s=1}^j (X_{i,s} - \hat{\gamma}) < A \Big| \mu_i = \theta_1  \right) \right)\right)^n \\
&\geq 1 - \left( 1- \alpha \left( 1 - \exp\left(-m(\theta_1 - \hat{\gamma})^2/2\right)  - 7 \exp( -|A|(\hat{\gamma} - \theta_0)/2 ) \right)\right)^n \\
&\geq 1- \exp\left[ - \alpha n (1 - \exp\left(-m(\theta_1 - \hat{\gamma})^2/2\right)  - 7 \exp( -|A|(\hat{\gamma} - \theta_0)/2 )) \right] \\
&\geq 1- \exp\left[ - \alpha n (1 - \exp\left(-B(\theta_1 - \hat{\gamma})\right)  - 7 \exp( -|A|(\hat{\gamma} - \theta_0)/2 )) \right] 
\end{align*}
Where the third-to-last inequality applies Lemmas~\ref{upper_adaptive_bounded_lemma_theta1} and \ref{linear_concentration}. 
\end{proof}

Now, we are ready to prove Theorem~\ref{upper-adaptive-bounded}.
\vspace{0.3cm}
\begin{proof}
First, we consider the estimation of $\widehat{\theta}_0$ of  Algorithm~\ref{alg:upper-adaptive-bounded}, then consider the sample complexity of the algorithm, and then prove correctness.

Let $\xi_0 = \{ \widehat{\theta}_0 - \theta_0 \geq -\frac{\epsilon_0}{4} \}$ and $\xi_1 = \{ \widehat{\theta}_0 - \theta_0 \leq \frac{\epsilon_0}{4} \}$ be the events that we accurately estimate the parameter $\theta_0$. We will show that $\P(\xi_0) \geq 1-\delta'$ and $\P(\xi_1) \geq 3/4$ where $\delta' = \min\{\delta/8,\frac{1}{m\epsilon_0^2}\}$. Let $k_1=5$ and $k_2 = 8 \epsilon_0^{-2} \log( \frac{2k_1}{\delta'})$.
First note that
\begin{align*}
\P\left( \bigcup_{i=1}^{k_1} \left\{  |\widehat{\mu}_{i,k_2} - \mu_i| \geq \frac{\epsilon_0}{4} \right\} \right) &\leq 2 k_1 \exp( -2 k_2 (\epsilon_0/4)^2 ) \leq \delta' 
\end{align*}
so that with probability at least $1-\delta'$ we have $\widehat{\theta}_0 = \min_{i=1,\dots,k_1} \widehat{\mu}_{i,k_2} \geq \min_{i=1,\dots,k_1} \mu_i - \epsilon_0 /4 \geq \theta_0 - \epsilon_0 / 4$, and in particular, $\P(\xi_0) \geq 1-\delta'$. Let $\mathcal{E} = \{\bigcup_{i=1}^{k_1} \{ \mu_i = \theta_0 \} \}$ be the event that at least one of the distributions is light. Then 
\begin{align*}
\P\left( \mathcal{E} \right) = 1- \alpha^{k_1} \geq 1-  2^{-k_1} \geq 31/32 , 
\end{align*}
so that under $\mathcal{E} \cap \  \xi_0$, we have $\widehat{\theta}_0 = \min_{i=1,\dots,k_1} \widehat{\mu}_{i,k_2} \leq \min_{i=1,\dots,k_1} \mu_i + \epsilon_0 /4 = \theta_0 + \epsilon_0 /4$ which means $\P( \xi_1^c ) \leq \P(\xi_0^c \cup \mathcal{E}^c) \leq \delta/8+1/32 \leq 1/16$.
Moreover, the total number of samples is bounded by $k_1 k_2 = c \epsilon_0^{-2} \log(1/\delta') \leq c \epsilon_0^{-2} \log(\max\{\frac{1}{\delta},\log(\frac{1}{\alpha_0\delta})\})$ which is clearly dominated by $\frac{\log(1/\delta)}{\alpha_0 \epsilon_0^2}$.

We now turn our attention to the sample complexity. By Wald's identitity~\citep[Proposition 2.18]{siegmund2013sequential},
\begin{align*}
\E[T] &= \E\left[\sum_{i=1}^N M_i\right] = \E[N]\E[M_1] = \E[N]((1-\alpha)\E[M_1 | \mu_1 = \theta_0] + \alpha\E[M_1 | \mu_1 = \theta_1]).
\end{align*}
Trivially, $\E[N] \leq n$ and $\E[M_1 | \mu_1 = \theta_1] \leq m$, so we only need to bound $\E[M_1 | \mu_1 = \theta_0]$. Clearly we have that
\begin{align*}
\E[M_1 | \mu_1 = \theta_0] = \E[M_1 | \xi_0, \mu_1 = \theta_0 ] \P(\xi_0) + \E[M_1 | \xi_0^c, \mu_1 = \theta_0] \P(\xi_0^c) \leq \E[M_1 | \xi_0, \mu_1 = \theta_0 ] + \delta'm
\end{align*}
so
\begin{align*}
\E&[M_1 | \xi_0, \mu_1 = \theta_0] \leq \sum_{t=1}^\infty \P\left(\argmin_j \left\{ \sum_{s=1}^j (X_{1,s} - \hat{\gamma}) < A \Big| \xi_0, \mu_1 = \theta_0\right\} \geq t\right)\\
&= \sum_{t=1}^\infty 1 - \P\left(\min_{j=1,\ldots,t-1}\sum_{s=1}^j (X_{1,s} - \hat{\gamma}) < A \Big| \xi_0, \mu_1 = \theta_0 \right) \\
&= \sum_{t=0}^\infty 1 - \P\left(\min_{j=1,\ldots,t}\sum_{s=1}^j (X_{1,s} - \hat{\gamma}) < A \Big| \xi_0, \mu_1 = \theta_0 \right) \\
&\leq \sum_{t=0}^\infty  1- \1_{\hat{\gamma} - \theta_0 \ge \frac{2|A|}{t}} (1- \exp\left(-t(\hat{\gamma} - \theta_0)^2/2\right) \\
&\leq \frac{2|A|}{\hat{\gamma} - \theta_0} + 2 e^{1/2}(\hat{\gamma} - \theta_0)^{-2}  \exp\left(-|A|(\hat{\gamma} - \theta_0)\right) \leq \frac{3|A|}{\hat{\gamma} - \theta_0} \leq \frac{293}{\epsilon_0^2}.
\end{align*}
where the second inequality follows by applying Lemma~\ref{upper_adaptive_bounded_lemma_theta1} and the last inequality holds by $\xi_0$ and the value of $|A|$ since if $\xi_0$ holds, $\hat{\gamma} - \theta_0 = \widehat{\theta}_0 - \theta_0 + \frac{\epsilon_0}{2} \geq \frac{\epsilon_0}{4}$.
Thus
\begin{align*}
\E[M_1] \leq (1-\alpha)\left[\left(\frac{293}{\epsilon_0^2}\right) + \delta'm\right] + \alpha m \leq \delta' m + \frac{1}{\epsilon_0^2}\left(293 + 64\alpha\log\left(\tfrac{14n}{\delta}\right)\right) \leq \frac{c\alpha\log\left(\frac{1}{\alpha_0\delta}\right)}{\epsilon_0^2}
\end{align*}
for some $c$ where we use the fact that $\delta'm \leq \epsilon_0^{-2}$.
So we have
\begin{align*}
\E[T] &\leq n\E[M_1] \leq \frac{c' \alpha \log(1/\alpha_0) + c''\log\left(\frac{1}{\delta}\right)}{\alpha_0\epsilon_0^2}.
\end{align*}

Now, we analyze the correctness claims.
Under $\xi_0$, $\hat{\gamma} - \theta_0 \geq \frac{\epsilon_0}{4}$. Note that this event fails to occur with probability less than $\delta/2$, and if it is used in conjunction with some other event that fails to occur with probability $\delta/2$, we may conclude that either of these events fail with probability less than $\delta$.

To justify Claim 1, we apply Lemma~\ref{linear_concentration} to observe that the probability that we output a light distribution is no greater than
\begin{align*}
\P&(\xi_0^c) + \P\left( \bigcup_{i=1}^n \left\{ \max_{j=1,\ldots,m}\sum_{s=1}^j (X_{i,s} - \hat{\gamma}) > B , \mu_i = \theta_0 \right\} \Big| \xi_0 \right) \P(\xi_0) \\
&\leq \P(\xi_0^c) + n  (1-\alpha) \P\left(\max_{j=1,\ldots,m}\sum_{s=1}^j (X_{i,s} - \hat{\gamma}) > B\big|\mu_i = \theta_0, \xi_0\right)\\
&\leq \delta/2+ 7 n  \exp( - B (\hat{\gamma}-\theta_0)/2) \leq \delta
%
\end{align*}
where we have used $\hat{\gamma} - \theta_0 \geq \frac{\epsilon_0}{4}$ and plugged in the values of $B$ and $n$.

To justify Claim 2, assume $\alpha_0 \leq \alpha$ and $\epsilon_0 \leq \theta_1 - \theta_0$. We apply Lemma~\ref{lemma:upper-adaptive-bounded-correct} to observe that the probability that we return a heavy distribution is at least
\begin{align*}
\hspace{0in}&\hspace{-0in}\P\left( \xi_0 \cap \xi_1 \cap \bigcup_{i=1}^n \left\{ \mu_i = \theta_1 , \max_{j=1,\ldots,m}\sum_{s=1}^j (X_{i,s} - \hat{\gamma}) > B, \min_{j=1,\ldots,m}\sum_{s=1}^j (X_{i,s} - \hat{\gamma}) > A \right\} \right) \\
&= \P(\xi_0 \cap \xi_1) \P\left( \bigcup_{i=1}^n \left\{ \mu_i = \theta_1 , \max_{j=1,\ldots,m}\sum_{s=1}^j (X_{i,s} - \hat{\gamma}) > B, \min_{j=1,\ldots,m}\sum_{s=1}^j (X_{i,s} - \hat{\gamma}) > A \right\} \bigg| \xi_0,\xi_1\right)  \\ 
&\geq \P(\xi_0 \cap \mathcal{E} )(1-\exp\left[ - \alpha n (1 - \exp\left(-B(\epsilon_0/4)\right)  - 7 \exp( -|A|(\epsilon_0/4)/2 )) \right]) \\
&\geq (15/16)(1-\exp\left[ - \alpha n (1 - (\tfrac{\delta}{14n})^2  - 1/3) \right] )\geq (15/16)(8/9) \geq 4/5
\end{align*}
where we have used $\P(\xi_0 \cap \mathcal{E} ) \geq 1 - \P(\xi_0^c) - \P(\mathcal{E}^c) \geq 15/16$, $(\tfrac{\delta}{14n})^2\leq 1/6$, $\alpha n \geq 2\log(9)$ and plugged in the values for $A$ and $B$. 

To justify Claim 3, we simply observe that the algorithm always terminates after $n \times m$ steps.
\end{proof}
}

Clearly, Theorem~\ref{upper-adaptive-bounded} applies when $\alpha,\theta_0,\theta_1$ are known. 
The third claim of the theorem follows from a trivial bound of $nm$ for the values of $n$ and $m$ stated in the algorithm (i.e. it holds with probability 1). The second claim holds only with constant probability (versus with probability $1-\delta$) since the probability of observing a heavy distribution among the $n=\lceil2\log(4)/\alpha_0\rceil$ distributions considered only occurs with constant probability. One can boost this probability to $1-\delta$ by repeated application of the algorithm $\log(1/\delta)$ times or alternatively, one can run the algorithm with $n=\Theta(\frac{\log(1/\delta)}{\alpha})$ (with a straightforward modification of the proof). Moreover, with a slightly more sophisticated argument, one can show that if the algorithm is run with $\widehat{\theta}_0=\theta_0$ (and the estimation step is skipped) and $n=\infty$ then the algorithm is nearly equivalent to the SPRT of \cite{malloy2012quickest} which succeeds with probability at least $1-\delta$ and achieves an expected sample complexity equivalent to \eqref{sprt_sample_complexity}.

\noindent\begin{minipage}[T]{\textwidth}
\noindent\begin{minipage}[c]{.5\textwidth}
\begin{algorithm}[H]
\begin{framed}
\textbf{Given} $\delta \in (0, 1), \alpha \in (0, 1/2)$. \\
\textbf{Initialize $k=1$}\\
\textbf{While} Algorithm~\ref{alg:upper-adaptive-bounded} run with inputs $\delta/(2k^2)$, \\ 
  \forceindent[0.75em]$\alpha_0=\alpha,\epsilon_0=2^{-k}$ returns \texttt{null}: \\
    \forceindent \textbf{Set} $k = k+1$.  \\
\textbf{Output} distribution $k$.
\end{framed}
\caption{Algorithm for unknown $\theta_1 -\theta_0$.}
\label{alg:unknown_epsilon}
\end{algorithm}
\end{minipage}
\noindent\begin{minipage}[c]{.5\textwidth}
\begin{algorithm}[H]
\begin{framed}
\textbf{Given} $\delta \in (0, 1), \epsilon \in (0, 1]$. \\
\textbf{Initialize $k=1$}\\
\textbf{While} Algorithm~\ref{alg:upper-adaptive-bounded} run with inputs $\delta/(2k^2)$, \\ 
  \forceindent[0.75em]$\alpha_0=2^{-k},\epsilon_0=\epsilon$ returns \texttt{null}: \\
    \forceindent \textbf{Set} $k = k+1$.  \\
\textbf{Output} distribution $k$.
\end{framed}
\caption{Algorithm for unknown $\alpha$.}
\label{alg:unknown_alpha}
\end{algorithm}
\end{minipage}
\end{minipage}
\vspace{.4cm}

We now leverage Theorem~\ref{upper-adaptive-bounded} to design procedures that do not have knowledge of these parameters using the ``doubling trick.''.
First we consider the case when $\alpha$ is known but a lower bound on $\theta_1-\theta_0$ is not. 
The following theorem characterizes the performance of Algorithm~\ref{alg:unknown_epsilon}.

\begin{theorem}[Known $\alpha$, unknown $\theta_0,\theta_1$] \label{unknown_epsilon}
Fix $\delta \in (0,1)$. If Algorithm~\ref{alg:unknown_epsilon} is run with $\delta, \alpha$ then with probability at least $1-\delta$ a heavy distribution is returned and the expected number of total samples taken is no more than
\begin{align*}
\frac{c  \log\left( \log\left(\tfrac{1}{(\theta_1-\theta_0)^2}\right) / \delta \right)  }{\alpha(\theta_1-\theta_0)^2}.
\end{align*}
for an absolute constant $c$. 
\end{theorem}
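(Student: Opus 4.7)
The plan is to combine a union bound for correctness with a doubling-trick analysis of expected sample complexity. Let $k^{\ast} = \lceil \log_2(1/(\theta_1-\theta_0)) \rceil + 1$ be the first level at which $\epsilon_0 = 2^{-k}$ satisfies $\epsilon_0 \leq \theta_1 - \theta_0$. The argument has three main steps: correctness, bounding the expected number of rounds, and bounding the expected total samples.

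For correctness, observe that the algorithm returns a light distribution only if some invocation of Algorithm~\ref{alg:upper-adaptive-bounded} does. By Claim~1 of Theorem~\ref{upper-adaptive-bounded}, the $k$th call, which is run at confidence $\delta_k = \delta/(2k^2)$, returns a light distribution with probability at most $\delta_k$. A union bound over all $k \geq 1$ gives total failure probability at most $\sum_{k \geq 1} \delta/(2k^2) = \delta \pi^2/12 < \delta$.

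For the expected number of rounds, note that for $k \geq k^{\ast}$ the hypotheses of Claim~2 of Theorem~\ref{upper-adaptive-bounded} hold (since $\epsilon_0 \leq \theta_1 - \theta_0$ and $\alpha_0 = \alpha$), so each such call returns a heavy distribution with probability at least $4/5$ and consequently returns \texttt{null} with probability at most $1/5$. Hence the probability that the algorithm is still running at round $k^{\ast}+j$ is at most $(1/5)^j$.

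The main work is bounding the expected sample complexity. By Theorem~\ref{upper-adaptive-bounded}, the expected number of samples consumed by round $k$ (conditional on reaching it) is at most $S_k := C \cdot 4^k \big(\log(1/\alpha) + \alpha^{-1}\log(k^2/\delta)\big)$, and we split the total as
\begin{align*}
\E[T] \;\leq\; \sum_{k=1}^{k^{\ast}} S_k \;+\; \sum_{k > k^{\ast}} (1/5)^{k - k^{\ast}} S_k .
\end{align*}
The first sum is a geometric series in $k$ with ratio $4$, so it is dominated by its last term and contributes $O(4^{k^{\ast}} \log((k^{\ast})^2/\delta)/\alpha) = O\!\left(\log(\log(1/(\theta_1-\theta_0)^2)/\delta) / (\alpha (\theta_1-\theta_0)^2)\right)$ after plugging $4^{k^{\ast}} = O(1/(\theta_1-\theta_0)^2)$. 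The second sum has per-term ratio $4/5 < 1$, so it also converges and is bounded by $O(S_{k^{\ast}})$ up to constants, giving the same order. Combining both sums yields the stated bound.

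The main obstacle is the second sum: naively, the cost $S_k$ grows like $4^k$ while the survival probability only decays like $(1/5)^{k-k^{\ast}}$, so one must verify that the product $(4/5)^{k-k^{\ast}}$ still decays and that the logarithmic $\log(k^2/\delta)$ factor is harmless. Once this is observed, the remainder is a routine geometric-series computation, and the only subtlety is absorbing the additive $\log(1/\alpha)$ term from Theorem~\ref{upper-adaptive-bounded} into the leading $\log(\log(1/(\theta_1-\theta_0)^2)/\delta)/\alpha$ term (which holds because $\alpha \log(1/\alpha) = O(1)$ for $\alpha \in (0,1/2)$).
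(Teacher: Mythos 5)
Your proposal is correct and follows essentially the same route as the paper's proof: a union bound over the per-round confidences $\delta/(2k^2)$ for correctness, geometric decay $(1/5)^{k-k^\ast}$ of the survival probability from Claim~2 of Theorem~\ref{upper-adaptive-bounded}, and a split of the expected cost at $k^\ast$ into a ratio-$4$ geometric sum plus a convergent ratio-$4/5$ tail, with $\alpha\log(1/\alpha)=O(1)$ absorbing the extra term. The paper just carries out the $\log(k^2/\delta)$ bookkeeping in the tail sum explicitly (via Wald's identity and a bound on $\sum_k \log(k)(4/5)^k$), which you correctly flag as the step needing verification.
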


\makeproof{UnknownEpsilonProof}{
On each stage $k$, Algorithm~\ref{alg:upper-adaptive-bounded} is called with $\delta/(2k^2)$. By the guarantees of Theorem~\ref{upper-adaptive-bounded}, the probability that Algorithm~\ref{alg:unknown_epsilon} ever outputs a light distribution is less than $\sum_{k=1}^\infty \delta/(2k^2) \leq \delta$. Thus, if a distribution is output, it is heavy with probability at least $1-\delta$. We now show that the expected number of samples taken before outputting a distribution is bounded. 

Let $K$ be the random stage in which Algorithm~\ref{alg:unknown_epsilon} outputs a distribution and let $k_*$ be the smallest $k\in\mathbb{N}$ that satisfies $2^{-k} \leq \theta_1-\theta_0$. By the guarantees of Theorem~\ref{upper-adaptive-bounded} and the independence of the stages $k$, $\P(K\geq k_*+i) \leq \sum_{\ell=i}^\infty (\tfrac{1}{5})^\ell = (\tfrac{5}{4}) (\tfrac{1}{5})^i$. Moreover, if $M_k$ is the number of measurements taken at stage $k$, then by Wald's identity the expected number of measurements is bounded by
\begin{align*}
\E\left[ \sum_{k=1}^K M_k \right] &= \sum_{k=1}^\infty \E[N_k] \P(K\geq k) \leq \sum_{k=1}^\infty \frac{c' \alpha \log(1/\alpha) + c''\log\left(\frac{2k^2}{\delta}\right)}{\alpha 2^{-2k} } \max\{1,(\tfrac{5}{4}) (\tfrac{1}{5})^{k-k_*}\} \\
&\leq \sum_{k=1}^{k_*} \frac{c'''\log\left(\frac{2k_*^2}{\delta}\right)}{\alpha} 4^k  + 5^{k_*} \tfrac{c'''}{\alpha} \sum_{k=k_*+1}^\infty \left( 2\log(k) + \log(\tfrac{2}{\delta}) \right) (\tfrac{4}{5})^{k} \\
&\leq \frac{c'''\log\left(\frac{2k_*^2}{\delta}\right)}{\alpha} 4^{k_*+1}  + 5^{k_*} \tfrac{c'''}{\alpha} \sum_{k=k_*+1}^\infty \left( 2\log(k) + \log(\tfrac{2}{\delta}) \right) (\tfrac{4}{5})^{k} \leq \frac{c''''\log\left(\frac{k_*}{\delta}\right)}{\alpha} (2^{k_*})^2
\end{align*}
since $\sup_\alpha  \alpha \log(1/\alpha)  \leq e^{-1}$ and 
\begin{align*}
\sum_{k=k_*}^\infty \log(k) (\tfrac{4}{5})^{k} \hspace{-.5in}&\hspace{.5in}= \sum_{k=k_*}^{2k_*-1} \log(k) (\tfrac{4}{5})^{k} + \sum_{k=2k_*}^{\infty}  \log(k) (\tfrac{4}{5})^{k/2} (\tfrac{4}{5})^{k/2} \\
&\leq \log(2k_*) \sum_{k=k_*}^{2k_*-1} (\tfrac{4}{5})^{k} + \sum_{k=2k_*}^{\infty}  (\tfrac{4}{5})^{k/2} \leq \left( \log(2k_*)  + 2 \right) \sum_{k=k_*}^{\infty}  (\tfrac{4}{5})^{k} = 5 \log(2e^2k_*)  (\tfrac{4}{5})^{k_*}
\end{align*}
since $\sup_k  \log(k) (\tfrac{4}{5})^{k/2} \leq 1$. Noting that $k_* \leq \log_2(\tfrac{1}{\theta_1-\theta_0})+1$ completes the proof.
}

Now we consider the case when $\theta_1-\theta_0$ is known but a lower bound on $\alpha$ is not.
The following theorem characterizes the performance of Algorithm~\ref{alg:unknown_alpha}.
\begin{theorem}[Unknown $\alpha$, known $\theta_0,\theta_1$] \label{unknown_alpha}
Fix $\delta \in (0,1)$. If Algorithm~\ref{alg:unknown_alpha} is run with $\delta, \theta_1 - \theta_0$ then with probability at least $1-\delta$ a heavy distribution is returned and the the expected number of total samples taken is no more than
\begin{align*}
\frac{c  \log\left( \log\left(\tfrac{1}{\alpha}\right) / \delta \right)  }{\alpha(\theta_1-\theta_0)^2}
\end{align*}
for an absolute constant $c$. 
\end{theorem}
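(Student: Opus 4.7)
The plan mirrors that of Theorem~\ref{unknown_epsilon}: apply the doubling trick, but now over $\alpha_0 = 2^{-k}$ while holding $\epsilon_0 = \theta_1-\theta_0$ fixed. At stage $k$ we invoke Algorithm~\ref{alg:upper-adaptive-bounded} with confidence parameter $\delta/(2k^2)$, and we stop at the first stage that returns a non-\texttt{null} distribution.

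For correctness, Claim~1 of Theorem~\ref{upper-adaptive-bounded} gives that stage $k$ returns a light distribution with probability at most $\delta/(2k^2)$, so by a union bound the overall probability that Algorithm~\ref{alg:unknown_alpha} ever returns a light distribution is at most $\sum_{k\ge 1} \delta/(2k^2)\le \delta$.

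For the sample complexity, let $k_*$ be the smallest integer with $2^{-k_*}\le \alpha$, so $k_*\le \log_2(1/\alpha)+1$ and $2^{k_*}\le 2/\alpha$. By Claim~2 of Theorem~\ref{upper-adaptive-bounded}, whenever $k\ge k_*$ the stage returns a heavy distribution with probability at least $4/5$ (the condition $\epsilon_0\le \theta_1-\theta_0$ holds at every stage by construction, and $\alpha_0\le \alpha$ is exactly the definition of $k\ge k_*$). Letting $K$ denote the stopping stage and using independence of the stages,
\begin{align*}
\P(K\ge k_*+i)\;\le\;(1/5)^{i}\qquad\text{for all }i\ge 0.
\end{align*}
Each call to Algorithm~\ref{alg:upper-adaptive-bounded} at stage $k$ uses in expectation at most
\begin{align*}
\E[N_k]\;\le\;\frac{c'\alpha\log(2^{k})+c''\log(2k^2/\delta)}{2^{-k}(\theta_1-\theta_0)^2}\;=\;\frac{2^{k}\bigl(c'\alpha k+c''\log(2k^2/\delta)\bigr)}{(\theta_1-\theta_0)^2}
\end{align*}
samples by Theorem~\ref{upper-adaptive-bounded}. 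Wald's identity then gives $\E\big[\sum_{k=1}^{K} M_k\big]=\sum_{k\ge 1}\E[N_k]\,\P(K\ge k)$.

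The sum splits naturally at $k=k_*$. For $k\le k_*$, we bound $\P(K\ge k)\le 1$ and use the fact that $\E[N_k]$ grows geometrically in $k$ at rate $2$, so the partial sum is dominated (up to a constant factor) by the $k=k_*$ term, which is $O\!\big(\tfrac{\log(\log(1/\alpha)/\delta)}{\alpha(\theta_1-\theta_0)^2}\big)$ after substituting $2^{k_*}\le 2/\alpha$ and $k_*=O(\log(1/\alpha))$. For $k>k_*$, the product $\E[N_k]\,\P(K\ge k)$ decays like $(2/5)^{k-k_*}$ times the stage-$k_*$ cost (the $\log(k)$ factor is absorbed by the geometric decay exactly as in the proof of Theorem~\ref{unknown_epsilon}), so this tail contributes the same order. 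Adding the two pieces yields the claimed bound.

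The only delicate point is the balance between the geometric growth $2^{k}$ of per-stage cost and the geometric decay $(1/5)^{k-k_*}$ of $\P(K\ge k)$; since $2\cdot(1/5)<1$ this is comfortably summable, which is why the doubling in $\alpha$ (growth $2^k$) is easier to analyze here than the doubling in $\epsilon^{-1}$ (growth $4^k$) in Theorem~\ref{unknown_epsilon}. Everything else is a routine application of the tools already developed.
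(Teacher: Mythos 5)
Your proposal is correct and follows essentially the same route as the paper's proof: the same doubling over $\alpha_0=2^{-k}$ with $\epsilon_0$ fixed, the same union bound over stage confidences $\delta/(2k^2)$ for correctness, the same definition of $k_*$, the same geometric tail bound on the stopping stage via Claim~2 of Theorem~\ref{upper-adaptive-bounded}, and the same Wald's-identity sum split at $k_*$ (with the $\alpha k_*\le 2$ absorption handling the $c'\alpha\log(1/\alpha_0)$ term). The only cosmetic difference is your slightly cleaner bound $\P(K\ge k_*+i)\le(1/5)^i$ in place of the paper's $(5/4)(1/5)^i$.
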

\makeproof{UnknownAlphaProof}{The proof of this result is nearly identical to that of Theorem~\ref{unknown_epsilon} except the following changes. Let $K$ be the random stage in which Algorithm~\ref{alg:unknown_alpha} outputs a distribution and let $k_*$ be the smallest $k\in\mathbb{N}$ that satisfies $2^{-k} \leq \alpha$. Moreover, if $M_k$ is the number of measurements taken at stage $k$, then by Wald's identity expected number of measurements is bounded by
\begin{align*}
\E\left[ \sum_{k=1}^K M_k \right] &= \sum_{k=1}^\infty \E[N_k] \P(K\geq k) \leq \sum_{k=1}^\infty \frac{c' \alpha \log(2^k) + c''\log\left(\frac{2k^2}{\delta}\right)}{2^{-k} \epsilon^2 } \max\{1,(\tfrac{5}{4}) (\tfrac{1}{5})^{k-k_*}\} \\
&\leq \sum_{k=1}^{k_*} \frac{c'''\log\left(\frac{2k_*^2}{\delta}\right)}{\epsilon^2} 2^k  + 5^{k_*} \tfrac{c'''}{\epsilon^2} \sum_{k=k_*+1}^\infty \left( \alpha k \log(2) + 2\log(k) + \log(\tfrac{2}{\delta}) \right) (\tfrac{2}{5})^{k} \\
&\leq \frac{c''''\left(\alpha k_* +  \log\left(\frac{k_*}{\delta}\right)\right)}{\epsilon^2} 2^{k_*} \leq \frac{ c''''' \log\left( \log(1/\alpha)/\delta \right)}{\alpha \epsilon^2}
\end{align*}
by the same series of steps as the proof of Theorem~\ref{unknown_epsilon} and the fact that $\sum_{k=n}^\infty k a^{k} \leq \frac{n a^{n}}{(1-a)^2}$ for any $a \in (0,1)$. The final inequality follows from $k_* \leq \log_2(1/\alpha) + 1$ and that $\alpha k_* = \alpha \log_2(2/\alpha) \leq 2$.
}

\subsection{Fully adaptive strategies when $\alpha,\theta_0,\theta_1$ are unknown}
\label{subsec:upper_adaptive_unknown}
We now consider the most difficult setting in which no prior knowledge about $\alpha,\theta_0,\theta_1$ are known. The algorithm for this setting, Algorithm~\ref{alg:upper-adaptive-unknown}, requires a more sophisticated argument than the simple ``doubling trick'' used above when partial information was available. As far as we are aware this is the first result of its kind that does not require any prior estimation or knowledge of the unknown mean distribution parameters. We also remark that the placing of ``landmarks'' ($\alpha_k,\epsilon_k$) throughout the search space as is done in Algorithm~\ref{alg:upper-adaptive-unknown} can also be generalized to generic infinite armed bandit problems, perhaps providing a simple alternative to the two-stage approach of estimation then exploration of \cite{carpentier2015simple}.

\begin{algorithm}
\begin{framed}
\textbf{Given} $\delta > 0$. \\
\textbf{Initialize} $\ell = 1$, heavy distribution $h$ = \texttt{null}. \\
\textbf{Repeat} until $h$ is not \texttt{null}: \\
    \forceindent \textbf{Set} $\gamma_\ell = 2^\ell, \delta_\ell = \delta / (2\ell^3)$  \\
    \forceindent \textbf{Repeat} for $k = 0,\dots,\ell$: \\
        \forceindent \forceindent \textbf{Set} $\alpha_k = \frac{2^k}{\gamma_\ell}, \epsilon_k = \sqrt{\frac{1}{2\alpha_k\gamma_\ell}}$ \\
        \forceindent \forceindent \textbf{Run} Algorithm~\ref{alg:upper-adaptive-bounded} with $\alpha_0 = \alpha_k, \epsilon_0 = \epsilon_k, \delta = \delta_\ell $ and \textbf{Set} $h$ to its output. \\
        \forceindent \forceindent \textbf{If} $h$ is not \texttt{null} \textbf{break} \\
    \forceindent \textbf{Set} $\ell = \ell + 1$ \\
\textbf{Output} $h$
\end{framed}
\caption{Adaptive strategy for heavy distribution identification with unknown parameters}
\label{alg:upper-adaptive-unknown}
\end{algorithm}

\begin{theorem}[Unknown $\alpha,\theta_0,\theta_1$]\label{unknown_all} 
Fix $\delta \in (0,1)$. If Algorithm~\ref{alg:upper-adaptive-unknown} is run with $\delta$ then with probability at least $1-\delta$ a heavy distribution is returned and the expected number of total samples taken is bounded by 
\begin{align*}
c  \frac{\log_2(\tfrac{1}{\alpha \epsilon^2})}{\alpha \epsilon^2} (\alpha \log_2(\tfrac{1}{ \epsilon^2}) +\log(\log_2(\tfrac{1}{\alpha \epsilon^2})) + \log(1/\delta)) 
\end{align*}
for an absolute constant $c$.
\end{theorem}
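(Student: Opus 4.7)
The plan is a meta-application of Theorem~\ref{upper-adaptive-bounded} driven by a doubling argument over the unknown quantity $\alpha(\theta_1-\theta_0)^2$. At outer stage $\ell$, Algorithm~\ref{alg:upper-adaptive-unknown} places $\ell+1$ landmarks $(\alpha_k,\epsilon_k)_{k=0}^{\ell}$ all satisfying $\alpha_k \epsilon_k^2 = 2^{-(\ell+1)}$; they are geometrically spaced in $\alpha_k = 2^{k-\ell}$ along the hyperbola $\alpha\epsilon^2 = 2^{-(\ell+1)}$. I would invoke Algorithm~\ref{alg:upper-adaptive-bounded} at each landmark with confidence $\delta_\ell = \delta/(2\ell^3)$ and stop the moment any call returns a non-null output. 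Claim~1 of Theorem~\ref{upper-adaptive-bounded} bounds the probability that any single call returns a light distribution by $\delta_\ell$, so union bounding over the $\ell+1$ calls in stage $\ell$ and then over $\ell \geq 1$ gives total failure at most $\sum_{\ell \geq 1}(\ell+1)\delta/(2\ell^3) = O(\delta)$, which can be made $\leq \delta$ by tweaking absolute constants in $\delta_\ell$.

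\textbf{Identifying the critical stage.} Let $\epsilon = \theta_1-\theta_0$ and $\ell_* = \lceil \log_2(1/(\alpha\epsilon^2))\rceil$. By the grid construction, for every $\ell \geq \ell_*$ there exists an index $k^*(\ell) \leq \ell$ with simultaneously $\alpha_{k^*} \leq \alpha$ and $\epsilon_{k^*} \leq \epsilon$ (take $k^* = \lceil \log_2(1/(2\epsilon^2))\rceil$; the inequality $\ell \geq \ell_*$ ensures $k^* \leq \ell$). Claim~2 of Theorem~\ref{upper-adaptive-bounded} then guarantees the call at index $k^*$ returns a heavy distribution with probability at least $4/5$, so independence across stages yields $\P(\text{reach stage }\ell) \leq (1/5)^{\ell-\ell_*}$ for $\ell > \ell_*$.

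\textbf{Per-stage cost and assembly.} Theorem~\ref{upper-adaptive-bounded} bounds the expected samples of the $k$-th call by $O((\alpha\log(1/\alpha_k) + \log(1/\delta_\ell))/(\alpha_k\epsilon_k^2)) = O(2^{\ell+1}(\alpha\log(1/\alpha_k) + \log(1/\delta_\ell)))$. Summing over $k = 0,\dots,\ell$ and using $\sum_{k=0}^\ell \log(1/\alpha_k) = O(\ell^2)$ gives stage-$\ell$ expected cost $O(2^\ell \ell(\alpha\ell + \log(\ell/\delta)))$. Multiplying by $\P(\text{reach stage }\ell)$ and summing, the terms at $\ell \leq \ell_*$ form a geometric sum in $2^\ell$ dominated by its $\ell_*$ term, and the tail $\ell > \ell_*$ is absorbed via $(2/5)^{\ell-\ell_*}$. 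The result is $O(2^{\ell_*}\ell_*(\alpha\ell_* + \log(\ell_*/\delta)))$. Plugging in $2^{\ell_*} = \Theta(1/(\alpha\epsilon^2))$ and $\ell_* = \Theta(\log_2(1/(\alpha\epsilon^2)))$, and using $\alpha\log_2(1/\alpha) = O(1)$ to rewrite $\alpha\ell_* = \alpha\log_2(1/\epsilon^2) + O(1)$ and $\log(\ell_*/\delta) = \log\ell_* + \log(1/\delta)$, recovers the stated bound.

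\textbf{The hard part.} I expect the main obstacle to be the double-sum bookkeeping across stages $\ell$ and landmarks $k$: the $2^\ell$ growth of the per-stage work is tamed by the $(1/5)^{\ell-\ell_*}$ tail of the outer loop only with margin $2/5 < 1$, so the $4/5$ success probability in Claim~2 of Theorem~\ref{upper-adaptive-bounded} is load-bearing. Equally, the ``wasted'' landmarks $k \neq k^*$ (those with $\alpha_k > \alpha$ or $\epsilon_k > \epsilon$, which carry no success guarantee) must be shown not to dominate the good stage, which relies on $\sum_k \log(1/\alpha_k) = O(\ell^2)$ absorbing cleanly into the final $\alpha\log_2(1/\epsilon^2) + \log\log$ factors.
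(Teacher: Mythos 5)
Your proposal is correct and follows essentially the same route as the paper's proof: the same landmark grid $(\alpha_k,\epsilon_k)$ along the hyperbola $\alpha\epsilon^2 = 2^{-(\ell+1)}$, the same union bound over $\delta_\ell = \delta/(2\ell^3)$ for correctness, the same identification of the critical stage $\ell_*$ with geometric decay $(1/5)^{\ell-\ell_*}$ from the constant success probability of Theorem~\ref{upper-adaptive-bounded}, and the same final accounting via Wald's identity and the substitution $2^{\ell_*} = \Theta(1/(\alpha\epsilon^2))$. No gaps worth noting.
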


\makeproof{UnknownAllProof}{
The proof is broken up into a few steps, summarized as follows. For any given $\alpha_0,\epsilon_0$, Theorem~\ref{upper-adaptive-bounded} takes just $O\left(\frac{\alpha\log(1/\alpha_0) + \log(1/\delta)}{\alpha_0 \epsilon_0^2}\right)$ samples in expectation and the procedure makes an error (i.e. returns a light distribution) with probability less than $\delta$. Define $\epsilon = \theta_1-\theta_0$. In addition, if$\epsilon = \theta_1-\theta_0$, $\alpha \geq \alpha_0$, and $\epsilon \geq \epsilon_0$ then with probability at least $4/5$ a heavy distribution is returned after the same expected number of samples. We will leverage this result to show that if we are given an upper bound $\gamma_0$ such that $\frac{1}{\alpha \epsilon^2} \leq \gamma_0$ then it is possible to identify a heavy distribution with probability at least $4/5$ using just $O\left(  \log_2(\gamma_0) \gamma_0 \left[\alpha\log_2(\gamma_0) + \log( \log_2 (\gamma_0) /\delta)\right] \right)$ samples in expectation. Finally, we apply the ``doubling trick'' to $\gamma$ so that even though the tightest $\gamma$ is not known a priori, we can adapt to it using only twice the number of samples as if we had known it. Because each of the stages is independent of one another, the probability that the procedure requires more than $\ell_*+i$ stages is less than $(1/5)^i$, which yields our expected sample complexity.

For all $\ell \in \mathbb{N}$ define $\delta_{\ell} = \frac{\delta}{2\ell^3}$ and $\gamma_\ell = 2^\ell$. 
Fix some $\ell$ and consider the set $\{ (\alpha,\epsilon) : \frac{1}{\alpha \epsilon^2} = \gamma_\ell \}$. 
Clearly, in this set, $\alpha \in [1/\gamma_\ell,1/2]$. For all $k \in \{0,\dots,\ell-1\}$, define $\alpha_k = \frac{2^k}{\gamma_\ell}$ and $\epsilon_k = \sqrt{ \frac{1}{2 \alpha_k \gamma_\ell} }$. The key observation is that
\begin{align} \label{cover_claim}
 \{ (\alpha,\epsilon) : \frac{1}{\alpha \epsilon^2} \leq \gamma_\ell \} \subseteq \bigcup_{k=0}^{\ \log_2 \gamma_\ell  -1} \{ (\alpha,\epsilon) :  \alpha \geq \alpha_k, \epsilon \geq \epsilon_k \} .
\end{align}
To see this, fix any $(\alpha',\epsilon')$ such that $\frac{1}{\alpha' \epsilon'^2} \leq \gamma_\ell$. 
Let $k_*$ be the integer that satisfies $\alpha_{k_*} \leq \alpha' < 2\alpha_{k_*}$. Such a $k_*$ must exist since $\alpha_{\ell-1} = \frac{1}{2} \geq \alpha' \geq \frac{1}{\gamma_\ell \epsilon'^2} \geq \frac{1}{\gamma_\ell} = \alpha_0$. Then $\gamma_\ell \geq \frac{1}{\alpha' \epsilon'^2} \geq \frac{1}{ 2 \alpha_{k_*} \epsilon'^2}$ which means $\epsilon' \geq \sqrt{\frac{1}{2 \alpha_{k_*}\gamma_\ell}} = \epsilon_{k_*}$ which proves the claim of \eqref{cover_claim}. 
Consequently, even if no information about $\alpha$ or $\epsilon$ individually is known but $\frac{1}{\alpha \epsilon^2} \leq \gamma_\ell$, one can cover the entire range of valid $(\alpha,\epsilon)$ with just $\log_2(\gamma_\ell) = \ell$ landmarks $(\alpha_k,\epsilon_k)$. 

For any $\ell \in \mathbb{N}$ and $k \in \{0,\dots,\ell-1\}$, if Algorithm~\ref{alg:upper-adaptive-bounded} is used with $\alpha_0 = \alpha_k, \epsilon_0 = \epsilon_k$ and $\delta = \delta_\ell$ then the probability that a light distribution is returned, declared heavy is less than $\delta_{\ell}$. And the probability that a light distribution is returned, declared heavy for {\em any} $\ell \in \mathbb{N}$ and $k \in \{0,\dots,\ell-1\}$ is less than $\sum_{\ell=1}^\infty \ell \delta_\ell = \delta \sum_{\ell=1}^\infty \ell/(2\ell^3) \leq \delta$. Thus, given that Algorithm~\ref{alg:upper-adaptive-unknown} terminates with a non-null distribution $h$, $h$ is heavy with probability at least $1-\delta$. This proves correctness. We next bound the expected number of samples taken before the procedure terminates. 

With the inputs given in the last paragraph for any $k,\ell$, Algorithm~\ref{alg:upper-adaptive-bounded} takes an expected number samples bounded by $c \gamma_\ell(\alpha\log(1/\alpha_k) +\log(1/\delta_\ell))$. Let $L \in\mathbb{N}$ be the random stage at which Algorithm~\ref{alg:upper-adaptive-unknown} terminates with a non-null distribution $h$. Let $\ell_*$ be the first integer such that there exists a $k \in \{0,\dots,\ell_*-1\}$ with $\alpha \geq \alpha_k$ and $\epsilon \geq \epsilon_k$ (recall that in this case $\frac{1}{\alpha_k\epsilon_k^2} \leq \gamma_{\ell_*}$). Then by the end of stage $\ell \geq \ell_*$, at most $c \ell \gamma_{\ell}(\alpha\log(\gamma_\ell) +\log(1/\delta_{\ell}))$ samples in expectation were taken on stage $\ell$ and with probability at least $4/5$ the procedure terminated with a heavy coin. By the independence of samples between rounds, observe that $\P(L \geq \ell_* + i) = \sum_{j=i}^\infty \P(L = \ell_* + j) \leq (\tfrac{5}{4}) (\tfrac{1}{5})^i$. Thus, if $M_\ell$ is the number of samples taken at stage $\ell$ then by Wald's identify, the total expected number of samples taken before termination is bounded by
\begin{align*}
\E\left[ \sum_{\ell=1}^{L} c \ell \gamma_{\ell}(\alpha\log(\gamma_\ell)+\log(1/\delta_{\ell})) \right] \hspace{-2in}&\hspace{2in}= \sum_{\ell=1}^{\infty} \E[M_\ell] \P(L\geq\ell) \leq \sum_{\ell=1}^{\infty} c \ell \gamma_{\ell}(\alpha\log(\gamma_\ell)+\log(1/\delta_{\ell})) \P(L\geq\ell) \\
&\leq \sum_{\ell=1}^{\ell_*} c \ell \gamma_{\ell}(\alpha\log(\gamma_\ell) +\log(1/\delta_{\ell})) + \sum_{\ell=\ell_*+1}^\infty c \ell \gamma_{\ell}(\alpha\log(\gamma_\ell) +\log(1/\delta_{\ell})) (\tfrac{5}{4}) (\tfrac{1}{5})^{\ell-\ell_*} \\
&\leq \sum_{\ell=1}^{\ell_*} c \ell 2^{\ell}(\alpha \ell +\log(2\ell^3/\delta)) + \sum_{\ell=\ell_*+1}^\infty c \ell 2^{\ell}(\alpha\ell +\log(2\ell^3/\delta)) (\tfrac{5}{4}) (\tfrac{1}{5})^{\ell-\ell_*} \\
&\leq c \ell_*  (\alpha \ell_* +\log(2\ell_*^3/\delta)) \sum_{\ell=1}^{\ell_*} 2^{\ell} + c (\tfrac{5}{4}) 5^{\ell_*} \sum_{\ell=\ell_*+1}^\infty  \left( \alpha \ell^2 (\tfrac{2}{5})^{\ell} + 3 \ell \log(\ell) (\tfrac{2}{5})^{\ell} + \log(2/\delta) \ell (\tfrac{2}{5})^{\ell}  \right) \\
&\leq 2 c \ell_* 2^{\ell_*} (\alpha \ell_* +\log(2\ell_*^3/\delta)) \\
&\hspace{.25in}+ c (\tfrac{5}{4})  5^{\ell_*}  \left( 2\alpha (\ell_*+1)^2 (\tfrac{2}{5})^{\ell_*} + 12 \log(2e^2 \ell_*) (\ell_*+1) (\tfrac{2}{5})^{\ell_*} + 4 \log(2/\delta) (\ell_*+1) (\tfrac{2}{5})^{\ell_*}  \right)\\
&\leq c' \ell_* 2^{\ell_*} (\alpha \ell_* +\log(\ell_*) + \log(1/\delta))  
\end{align*}
for some absolute constant $c'$ since $\sum_{k=n}^\infty k a^k \leq \frac{n a^n}{(1-a)^2}$, $\sum_{k=n}^\infty k^2 a^k \leq \frac{n^2 a^n}{(1-a)^3}$, and 
\begin{align*}
\sum_{\ell=\ell_*+1}^\infty \ell \log(\ell) (\tfrac{2}{5})^{\ell} &\leq \log(2\ell_*) \sum_{\ell_*+1}^{2\ell_*} \ell (\tfrac{2}{5})^{\ell} + \sum_{2\ell_*+1}^\infty \ell (\tfrac{2}{5})^{\ell/2} \left( \log(\ell) (\tfrac{2}{5})^{\ell/2}\right)  \\
&\leq  \log(2 e^2 \ell_*) \sum_{\ell_*+1}^{ \infty} \ell (\tfrac{2}{5})^{\ell} \leq 4  \log(2 e^2 \ell_*) (\ell_*+1) (\tfrac{2}{5})^{\ell_*}
\end{align*}
since  $\max_{x\geq1} \log(x) (\tfrac{2}{5})^{x/2} \leq 1$.
Noting that $\ell_* \leq \log_2(\frac{1}{\alpha \epsilon^2})+1$, we have that the total number of samples, in expectation, is bounded by
\begin{align*}
c' \ell_* 2^{\ell_*} (\alpha \ell_* +\log(\ell_*) + \log(1/\delta)) &\leq c''  \frac{\log_2(\tfrac{1}{\alpha \epsilon^2})}{\alpha \epsilon^2} (\alpha \log_2(\tfrac{1}{\alpha \epsilon^2}) +\log(\log_2(\tfrac{1}{\alpha \epsilon^2})) + \log(1/\delta)) \\
&\leq c'''  \frac{\log_2(\tfrac{1}{\alpha \epsilon^2})}{\alpha \epsilon^2} (\alpha \log_2(\tfrac{1}{ \epsilon^2}) +\log(\log_2(\tfrac{1}{\alpha \epsilon^2})) + \log(1/\delta)) 
\end{align*}
where we've used the fact that $\sup_{\alpha \in [0,1]}\alpha \log(1/\alpha) \leq e^{-1}$.
}

\section{Conclusion}
In this work, we prove upper and lower bounds on the complexity of detecting mixture distributions with partial or missing knowledge of the distribution parameters.
We note that there is still a $\log$-factor gap between several of our upper and lower bounds, and investigating whether either can be tightened remains an interesting problem.
Importantly, in this work we considered mixtures of only two components, whereas the literature on infinite-armed bandits considers a continuous mixture.
Extending the algorithms developed for our upper bounds to the continuous mixture case is a promising direction, as it would represent the first such algorithm that does not rely on knowledge of the distribution parameters or estimating them first with a two-stage approach.

\vspace{0.4cm}
\noindent\textbf{Acknowledgments}

\vspace{0.1cm}
\noindent Kevin Jamieson is generously supported by ONR awards N00014-15-1-2620, and  N00014-13-1-0129. This research is supported in part by NSF CISE Expeditions Award CCF-1139158, DOE Award SN10040 DE-SC0012463, and DARPA XData Award FA8750-12-2-0331, and gifts from Amazon Web Services, Google, IBM, SAP, The Thomas and Stacey Siebel Foundation, Apple Inc., Arimo, Blue Goji, Bosch, Cisco, Cray, Cloudera, Ericsson, Facebook, Fujitsu, Guavus, HP, Huawei, Intel, Microsoft, Pivotal, Samsung, Schlumberger, Splunk, State Farm and VMware.

\newpage

\bibliography{mostBiasedCoin.bbl}

\newpage
\appendix

\appendixproofsection{Proofs of Lower Bounds}
\appendixproof[Claim]{correctness_claim}{CorrectnessClaimProof}
\appendixproof{fixed_known}{FixedKnownProof}
\appendixproof[Corollary]{known_all_lower}{KnownAllLowerProof}
\appendixproof{exp_family}{ExpFamilyProof}
\appendixproof[Corollary]{fixed_unknown}{FixedUnknownProof}

\appendixproofsection{Proofs of Upper Bounds}
\appendixproof{fixed_upper_known}{FixedUpperKnownProof}
\appendixproof*[Theorem]{upper-adaptive-bounded}{KnownAllProof}
\appendixproof{unknown_epsilon}{UnknownEpsilonProof}
\appendixproof{unknown_alpha}{UnknownAlphaProof}
\appendixproof{unknown_all}{UnknownAllProof}

\section{Gaussians}\label{Gaussian_discussion}
\subsection{On the detection of a mixture of Gaussians} \label{gaussian_bounds}

For known $\sigma^2$, consider the hypothesis test of Problem~\ref{hyp_test_normal}
. In what follows, let $\chi^2(  \theta_1,\theta_0  )$ and  $KL(  \theta_1,\theta_0  )$ be the chi-squared and KL divergences of the two distributions of $\H_1$. Note that for $\frac{(\theta_1 - \theta_0)^2}{\sigma} \leq 1$, we have that $\chi^2(  \theta_1,\theta_0  ) = e^{\frac{(\theta_1-\theta_0)^2}{\sigma^2}}-1 \leq 2 \frac{(\theta_1-\theta_0)^2}{\sigma^2} = 4 KL(\theta_1,\theta_0)$

Theorem~\ref{fixed_known} says that for $\frac{(\theta_1 - \theta_0)^2}{\sigma^2} \leq 1$, a procedure that has maximum probability of error less than $\delta$ requires at least $\max\left\{ \frac{1-\delta}{\alpha}, \frac{\log(1/\delta)}{4 \alpha^2 KL( \theta_1,\theta_0 ) } \right\}$ samples to decide the above hypohesis test, even if $\alpha,\theta_0,\theta_1$ are known. The next subsection shows that if $\alpha,\theta_0,\theta_1$ are unknown then one requires at least $\frac{\log(1/\delta)}{2 [\alpha KL( \theta_1,\theta_0 )]^2 }$ samples to decide the above hypothesis test correctly with probability at least $1-\delta$. This is likely achievable using the method of moments \citep{hardt2014sharp}.
\subsection{Lower bounds}

\begin{theorem}\label{gaussian_lower_bound}
For known $\sigma^2$, consider the hypothesis test of Problem~\ref{hyp_test_normal}. If $\theta_* = (1-\alpha) \theta_0 + \alpha \theta_1$ and $\frac{\theta_1-\theta_0}{\sigma} \leq 1$ then
\begin{align*}
\chi^2( (1-\alpha) f_{\theta_0}(x) + \alpha f_{\theta_1}(x) | f_{\theta_*}(x) )  &\leq c' \left( \alpha(1-\alpha) \frac{(\theta_1-\theta_0)^2}{\sigma^2} \right)^2 
\end{align*}
for some absolute constant $c'$.
\end{theorem}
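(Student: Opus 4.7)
The plan is to apply Theorem~\ref{exp_family} directly, since the Gaussian family $f_\theta = \mathcal{N}(\theta,\sigma^2)$ is a canonical single-parameter exponential family. The result will follow from carrying out the bookkeeping of the constants $\kappa$, $\gamma$, and $c$ for this specific family and observing that, under the assumption $(\theta_1-\theta_0)/\sigma \leq 1$, the constant $c$ of Theorem~\ref{exp_family} is bounded by an absolute constant multiple of $\sigma^4$, which then combines with the $(\eta(\theta_1)-\eta(\theta_0))^4 = (\theta_1-\theta_0)^4/\sigma^8$ factor to yield exactly $(\theta_1-\theta_0)^4/\sigma^4$ up to an absolute constant.

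First I would record the exponential family structure. Writing $f_\theta(x) = h(x)\exp(\eta(\theta)x - b(\eta(\theta)))$ with $h(x) = \frac{1}{\sqrt{2\pi\sigma^2}}e^{-x^2/(2\sigma^2)}$, one has $\eta(\theta) = \theta/\sigma^2$ and $b(\eta) = \sigma^2 \eta^2/2$, so $\dot b(\eta) = \sigma^2\eta$, $\dot b^{-1}(x) = x/\sigma^2$, and the centered moments are $M_2(\theta) = \sigma^2$, $M_4(\theta) = 3\sigma^4$. Because $\eta$ is linear, $\theta_* = (1-\alpha)\theta_0+\alpha\theta_1$ agrees with the definition in the theorem, and $\theta_+ - \theta_- = 2(\theta_1-\theta_0)$.

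Next I would bound $\kappa$ and $\gamma$. A short computation shows
\begin{align*}
b(2\eta(y)-\eta(\theta_*)) - \big(2b(\eta(y))-b(\eta(\theta_*))\big) = \tfrac{(y-\theta_*)^2}{\sigma^2},
\end{align*}
so taking the supremum over $y \in [\theta_0,\theta_1]$ gives $\kappa \leq (\theta_1-\theta_0)^2/\sigma^2 \leq 1$ by hypothesis. For $\gamma$, note $\phi_x(\dot b^{-1}(x)) = f_{x}(x) = \frac{1}{\sqrt{2\pi\sigma^2}}$, yielding $\gamma = 1/\sqrt{2\pi\sigma^2}$.

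Finally I would plug these into the expression for $c$ in Theorem~\ref{exp_family}. Each of the five contributions becomes $O(\sigma^4)$ under $(\theta_1-\theta_0)/\sigma \leq 1$: the $M_2$-term is $\sigma^4(2 + \tfrac{2(\theta_1-\theta_0)}{\sqrt{2\pi\sigma^2}}) \lesssim \sigma^4$; the $M_4$-terms contribute $O(\sigma^4)$; the $16(\dot b(\eta_+)-\dot b(\eta_-))^4 = 256(\theta_1-\theta_0)^4 \leq 256\sigma^4$; and the $\tfrac{2}{5}\gamma(\dot b(\eta_+)-\dot b(\eta_-))^5$ term is proportional to $(\theta_1-\theta_0)^5/\sigma \leq \sigma^4$. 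Hence $c \leq C\sigma^4$ for an absolute constant $C$, and Theorem~\ref{exp_family} gives
\begin{align*}
\chi^2\big((1-\alpha)f_{\theta_0}+\alpha f_{\theta_1}\,|\,f_{\theta_*}\big) \leq C\sigma^4 \left(\tfrac{1}{2}\alpha(1-\alpha)\tfrac{(\theta_1-\theta_0)^2}{\sigma^4}\right)^2 = c'\left(\alpha(1-\alpha)\tfrac{(\theta_1-\theta_0)^2}{\sigma^2}\right)^2,
\end{align*}
with $c' = C/4$. There is no real obstacle here; the only mild bookkeeping concern is ensuring that every term involving $(\theta_1-\theta_0)^k$ for $k \geq 3$ gets absorbed into $\sigma^4$ using the assumption $(\theta_1-\theta_0)/\sigma \leq 1$, which is straightforward.
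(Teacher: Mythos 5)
Your proposal is correct and follows essentially the same route as the paper's own proof: both identify the Gaussian exponential-family parameters $\eta(\theta)=\theta/\sigma^2$, $b(\eta)=\sigma^2\eta^2/2$, compute $\kappa=(\theta_1-\theta_0)^2/\sigma^2$, $\gamma=1/\sqrt{2\pi\sigma^2}$, $M_2=\sigma^2$, $M_4=3\sigma^4$, and $\theta_+-\theta_-=2(\theta_1-\theta_0)$, then bound $c\leq C\sigma^4$ under $(\theta_1-\theta_0)/\sigma\leq 1$ and combine with $(\eta(\theta_1)-\eta(\theta_0))^4=(\theta_1-\theta_0)^4/\sigma^8$. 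No gaps; the bookkeeping matches the paper's.
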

\begin{proof}
If $f_\theta = \mathcal{N}(\theta,\sigma^2)$ then $f_\theta(x) = h(x) \exp( \eta(\theta) x - b(\theta) )$ where $h(x) = \tfrac{1}{\sqrt{2\pi\sigma^2}} e^{-\frac{x^2}{2\sigma^2}}$, $\eta(\theta) = \frac{\theta}{\sigma^2}$, and $b(\eta(\theta)) =  \frac{\eta(\theta)^2 \sigma^2}{2} = \frac{\theta^2}{2\sigma^2}$. Thus,
\begin{align*}
\theta_* = \eta^{-1}\big( (1-\alpha) \eta(\theta_0) + \alpha \eta(\theta_1) \big) = (1-\alpha) \theta_0 + \alpha \theta_1
\end{align*}
and
\begin{align*}
\sup_{y \in [\theta_0,\theta_1]}   b( 2\eta(y)-\eta(\theta_*)) - (2b(\eta(y))-b(\eta(\theta_*)))  = \sup_{y \in [\theta_0,\theta_1]} \frac{(y-\theta_*)^2}{\sigma^2} \leq  \frac{(\theta_1-\theta_0)^2}{\sigma^2} =: \kappa
\end{align*}
and 
\begin{align*}
\sup_{x \in [ \dot{b}(\eta(\theta_-)),\dot{b}(\eta(\theta_+))]} f_{\dot{b}^{-1}(x)}( x ) = \sup_{x \in [\theta_-,\theta_+]} \sup_{\theta \in \R} \frac{1}{\sqrt{2 \pi \sigma^2}}e^{-\frac{(x-\theta)^2}{2\sigma^2}} \leq \frac{1}{\sqrt{2 \pi \sigma^2}} =: \gamma.
\end{align*}
Note that for any $ \theta < \theta'$ we have $\dot{b}( \eta(\theta') ) - \dot{b}( \eta(\theta)) = \theta'-\theta$, $M_2(\theta) = \sigma^2$, and $M_4(\theta) = 3 \sigma^4$. Plugging these values into the theorem we have
\begin{align*}
\hspace{.5in}&\hspace{-.5in}c= e^\kappa \bigg(\sup_{\theta \in [\theta_0,\theta_1]} M_2(\theta)^2 \ \left( 2 + \gamma \left(\dot{b}(\eta(\theta_+))-\dot{b}(\eta(\theta_-))\right)  \right) \\
 &+ 8 M_4(\theta_- ) + 8 M_4( \theta_+ )  + 16\left( \dot{b}( \eta(\theta_+) ) - \dot{b}( \eta(\theta_-)) \right)^4  + \tfrac{2}{5} \gamma \left( \dot{b}(\eta(\theta_+))-\dot{b}(\eta(\theta_-))\right)^5 \bigg) \\
 =& e^{\frac{(\theta_1-\theta_0)^2}{\sigma^2} } \bigg( \sigma^4 \ \left( 2 +  \frac{2(\theta_1-\theta_0)}{\sqrt{2\pi} \sigma}  \right) + 48 \sigma^4+ 256\left(\theta_1-\theta_0\right)^4  + \tfrac{64}{5 \sqrt{2\pi}} \ \frac{\left(\theta_1-\theta_0\right)^5}{\sigma} \bigg) 
\end{align*}
noting that $\theta_+-\theta_- = 2(\theta_1-\theta_0)$. If $\frac{\theta_1-\theta_0}{\sigma} \leq 1$ then $c = c' \sigma^4$ for some absolute constant $c'$ and $\left( \eta(\theta_1)-\eta(\theta_0)\right)^2 = \frac{(\theta_1-\theta_0)^2}{\sigma^4}$ which yields the final result.
\end{proof}

\subsection{Gaussian Upper bound for known $\alpha,\theta_0,\theta_1$} \label{gaussian_upper_known}
For known $\sigma^2$, consider the hypothesis test of Problem~\ref{hyp_test_normal} with $\theta=\theta_0$. We observe a sample $X_1,\dots,X_n$ and are trying to establish whether it came from $\mathbf{H}_0$ or $\mathbf{H}_1$. 

Consider the test 
\begin{align*}
 \frac{1}{n} \sum_{i=1}^n \1_{X_i > \theta_1 } \mathop{\gtrless}_{\mathbf{H}_0}^{\mathbf{H}_1}  \frac{ \P_1( X_1 >  \theta_1 ) + \P_0( X_1 >  \theta_1 ) }{2} =: \gamma.
\end{align*}

If $\epsilon = \P_1( X_1 >  \theta_1 ) -   \P_0( X_1 >  \theta_1 )$ then 
\begin{align*}
\P_1 \left( \frac{1}{n} \sum_{i=1}^n \1_{X_i > \theta_1 }  \leq \gamma \right) &= \P_1 \left( \frac{1}{n} \sum_{i=1}^n \1_{X_i > \theta_1 }  \leq  \P_1(X_1 > \theta_1) - \epsilon/2 \right) \leq e^{-n \epsilon^2/2}
\end{align*}
and
\begin{align*}
\P_0\left( \frac{1}{n} \sum_{i=1}^n \1_{X_i > \theta_1 } \geq \gamma\right) = \P_0 \left( \frac{1}{n} \sum_{i=1}^n \1_{X_i > \theta_1 }  \geq  \P_0(X_1 > \theta_1) +\epsilon/2 \right)  \leq e^{-n \epsilon^2/2}
\end{align*}
by sub-Gaussian tail bounds.
If $Q(x) = \int_x^\infty \frac{1}{\sqrt{2\pi}} e^{-z^2/2} dz$ and $\Delta =  \frac{\theta_1 - \theta_0}{\sigma} $ then 
\begin{align*}
\P_0( X_1 >  \theta_1 ) &= Q\left( \Delta \right) \\
\P_1( X_1 >  \theta_1 ) &= (1-\alpha) Q\left( \Delta \right) + \alpha \frac{1}{2}
\end{align*}
so 
\begin{align*}
\epsilon = \alpha\left( \frac{1}{2} - Q\left( \Delta \right)  \right) = \alpha \int_0^{\Delta} \frac{1}{\sqrt{2\pi}} e^{-x^2/2} dx \geq \min\{  \frac{\alpha \Delta}{ 4 \sqrt{2 \pi} } , \frac{1}{4}\alpha\}.
\end{align*}
Thus, the test fails with probability at most
\begin{align*}
\exp\left[ -n \alpha^2 \min\left\{ \frac{(\theta_1-\theta_0)^2}{64 \pi \sigma^2}, \frac{1}{32} \right\} \right].
\end{align*}
We conclude that if $\Delta =  \frac{\theta_1 - \theta_0}{\sigma} \leq 1$ and $n \geq \frac{(\theta_1-\theta_0)^2 \log(1/\delta)}{64 \pi \alpha^2\sigma^2} =  \frac{KL(\P_{\theta_1},\P_{\theta_0}) \log(1/\delta)}{64 \pi \alpha^2} $ the correct hypothesis is selected. The $1/\alpha$ sufficiency result holds for large enough $\Delta$ since one merely needs to observe just one sample since the probability of it coming from $\theta_0$ is negligible.

\end{document}